\def\1{\bm{1}}
\DeclareMathAlphabet{\mathsfit}{\encodingdefault}{\sfdefault}{m}{sl}
\SetMathAlphabet{\mathsfit}{bold}{\encodingdefault}{\sfdefault}{bx}{n}
\def\gA{{\mathcal{A}}}
\def\gB{{\mathcal{B}}}
\def\gC{{\mathcal{C}}}
\def\gD{{\mathcal{D}}}
\def\gF{{\mathcal{F}}}
\def\gG{{\mathcal{G}}}
\def\gH{{\mathcal{H}}}
\def\gK{{\mathcal{K}}}
\def\gN{{\mathcal{N}}}
\def\gT{{\mathcal{T}}}
\def\gU{{\mathcal{U}}}
\def\gV{{\mathcal{V}}}
\def\gX{{\mathcal{X}}}
\def\gY{{\mathcal{Y}}}
\def\gZ{{\mathcal{Z}}}
\newcommand{\E}{\mathbb{E}}
\newcommand{\R}{\mathbb{R}}
\DeclareMathOperator*{\argmax}{arg\,max}
\DeclareMathOperator*{\argmin}{arg\,min}
\newcommand*\samethanks[1][\value{footnote}]{\footnotemark[#1]}
\newtheorem{theorem}{Theorem}
\newtheorem{lemma}[theorem]{Lemma}
\newtheorem{assumption}{Assumption} 
\newtheorem{proposition}{Proposition}
\newtheorem{definition}{Definition}
\def\##1\#{\begin{align}#1\end{align}}
\def\$#1\${\begin{align*}#1\end{align*}}
\newcommand{\han}[1]{{\textcolor{cyan}{[Han: #1]}}}
\newcommand{\jiachen}[1]{{\textcolor{green}{[Jiachen: #1]}}}
\newcommand{\set}{\mathscr{E}}
\newcommand {\defeq}{\stackrel{\textup{\tiny def}}{=}}
\newcommand {\defapprox}{\stackrel{\textup{\tiny def}}{\approx}}
\newcommand{\pii}{\pi^{\star}_{\mathrm{in}}}
\newcommand{\tr}[1]{\text{trace}\left(#1\right)}
\title{Provable Sim-to-real Transfer in Continuous Domain with Partial Observations}
\author{\name Jiachen Hu\thanks{Equal contribution} \email NickH@pku.edu.cn \\
       \addr School of Computer Science, Peking University
       \AND
       \name Han Zhong\samethanks \email hanzhong@stu.pku.edu.cn\\
       \addr Center for Data Science, Peking University
       \AND
       \name Chi Jin \email chij@princeton.edu \\
       \addr Department of Electrical and Computer Engineering, Princeton University
       \AND
       \name Liwei Wang \email wanglw@cis.pku.edu.cn \\
       \addr National Key Laboratory of General Artificial Intelligence,\\ School of Intelligence Science and Technology, Peking University\\Center for Data Science, Peking University, Beijing Institute of Big Data Research}
\begin{document}

\maketitle

\begin{abstract}
    Sim-to-real transfer trains RL agents in the simulated environments and then deploys them in the real world. Sim-to-real transfer has been widely used in practice because it is often cheaper, safer and much faster to collect samples in simulation than in the real world.
    % to overcome the limitations of gathering samples in the real world. 
    Despite the empirical success of the sim-to-real transfer, its theoretical foundation is much less understood. In this paper, we study the sim-to-real transfer in \emph{continuous} domain with \emph{partial observations}, where the simulated environments and real-world environments are modeled by linear quadratic Gaussian (LQG) systems. We show that a popular robust adversarial training algorithm is capable of learning a policy from the simulated environment that is competitive to the optimal policy in the real-world environment.
    % popular training algorithm--\emph{the minimax  algorithm}--enjoys the nearly optimal sim-to-real gap, which is the difference between the value of the output policy of the  algorithm and the optimal value in the real-world environment. 
    % To prove this sharp sim-to-real gap, 
    To achieve our results, we design a new algorithm for infinite-horizon average-cost LQGs and establish a regret bound that depends on the intrinsic complexity of the model class. Our algorithm crucially relies on a novel \emph{history clipping} scheme, which might be of independent interest.
    %When specialized to the no prior knowledge setting, our regret bound improves existing results. 
    % Our result relies on a novel \emph{history clipping} scheme and a sequence of lemmas to establish the approximation error, which are new and might be of independent interest. 
\end{abstract}

\section{Introduction}

%\ref{appendix:pf:reduction}

% With the advance of deep learning, 
Deep reinforcement learning has achieved great empirical successes in various real-world decision-making problems, such as Atari games \citep{mnih2015human}, Go \citep{silver2016mastering,silver2017mastering}, and robotics control \citep{kober2013reinforcement}. In addition to 
the power of large-scale deep neural networks,
% deep neural networks with an enormous number of parameters,
% the power of function approximation brought by neural networks, 
these successes also critically rely on the availability of a tremendous amount of data for training. For these applications, we have access to efficient simulators, which are capable of generating millions to billions of samples in a short time. However, in many other applications such as auto-driving \citep{pan2017agile} and healthcare \citep{wang2018supervised}, interacting with the environment repeatedly and collecting a large amount of data is costly and risky or even impossible. 

% \chij{I feel the logic is a bit weird as interacting with real environment does not count as ``simulator'', I don't see why the last two sentences are connected.}

A promising approach to solving the problem of data scarcity is \emph{sim-to-real transfer} \citep{kober2013reinforcement,sadeghi2016cad2rl,tan2018sim,zhao2020sim}, which uses simulated environments to generate simulated data. These simulated data is used to train the RL agents, which will be then deployed in the real world. These trained RL agents, however, may perform poorly in real-world environments owing to the mismatch between simulation and real-world environments. This mismatch is commonly referred to as the \emph{sim-to-real gap}. 

To close such a gap, researchers propose various methods including (1) system identification \citep{kristinsson1992system}, which builds a precise mathematical model for the real-world environment; (2) domain randomization \citep{tobin2017domain}, which randomizes the simulation and trains an agent that performs well on those randomized simulated environments; and (3) robust adversarial training \citep{pinto2017robust}, which finds a policy that performs well in a bad or even adversarial environment. Despite their empirical successes, these methods have very limited theoretical guarantees. A recent work \citet{chen2021understanding} studies the domain randomization algorithms for sim-to-real transfer, but this work has two limitations: 
%First, to analyze the continuous simulator class, they assume the model class is ``smooth'' near the real-world model \citep[][Assumption 3]{chen2021understanding}. This assumption may fail in practice. Second, 
First, %for the sim-to-real transfer in the continuous domain, 
their results \citep[][Theorem 4]{chen2021understanding} heavily rely on
domain randomization being able to sample simulated models that are very close to the real-world model with at least constant probability, which is hardly the case for applications with continuous domain.
% the assumption that the probability of sampling a random model in the neighborhood of the real-world model is a constant, which is not the case in the continuous domain. 
%heavily on how well the domain randomization distribution was chosen. Nevertheless, finding a good domain randomization distribution that concentrates on the real-world model can sometimes be difficult. 
Second, their theoretical framework do not capture the problem with partial observations. However, sim-to-real transfer in continuous domain with partial observations is very common.  Take dexterous in-hand manipulation \citep{openai2018learning} as an example, the domain consists of angels of different joints which is continuous and the training on the simulator has partial information due to the observation noises (for example, the image generated by the agent's camera can be affected by the surrounding environment). Therefore, we ask the following question:  %a smoothness assumption, which ensures that， in the neighborhood of the real-world model, the optimal value is Lipchitz with respect to the model parameter. This 
\begin{center}
    \emph{Can we provide a rigorous theoretical analysis for the sim-to-real gap in continuous domain with partial observations?}
\end{center}
This paper answers the above question affirmatively.
We study the sim-to-real gap of the robust adversarial training algorithm and address the aforementioned limitations. To summarize, our contributions are three-fold:
\begin{itemize}
    \item We use finite horizon LQGs to model the simulated and real-world environments, and also formalize the problem of sim-to-real transfer in \emph{continuous} domain with \emph{partial observations}. Under this framework, the learner is assumed to have access to a simulator class $\set$, each of which represents a simulator with certain control parameters. We analyze the sim-to-real gap (Eqn. (\ref{eq:def:gap})) of the \emph{robust adversarial training} algorithm trained in simulator class $\set$. Our results show that the sim-to-real gap of the robust adversarial training algorithm is $\tilde{O}(\sqrt{\delta_\set H})$. Here $H$ is the horizon length of the real task, and $\delta_\set$ denotes some intrinsic complexity of the simulator class $\set$. This result shows that the sim-to-real gap is small for simple simulator classes and short tasks, while it gets larger for complicated classes and long tasks. By establishing a nearly matching lower bound, we further show this sim-to-real gap enjoys a near-optimal rate in terms of $H$. 
    %scales with some \textit{intrinsic complexity} of the simulator class $\set$, which implies the sim-to-real gap is small for simple simulator classes, while it gets larger for complicated classes. By establishing a nearly matching lower bound, we further show this sim-to-real gap enjoys a near-optimal rate. 
    %\chij{I feel that we should say sim-to-real gap decreases with what. If the gap can be constant, then we can trivially bound it by $H$, and then everything said in this paragraph does not make sense.}
    %. in $H$, where $H$ is the total steps of interactions with the real world.  
    %By establishing a sharp upper bound (Theorem~\ref{thm: gap_rt}) and a nearly matching lower bound (Theorem~\ref{thm:lb}) for the output policy of the robust adversarial training algorithm, we show that the robust adversarial training algorithm enjoys a near-optimal sim-to-real gap. %where $H$ is the total steps of interactions with the real world. 
    
    %\item 
    
    \item To bound the sim-to-real gap of the robust adversarial training algorithm, we develop a new reduction scheme that reduces the problem of bounding the sim-to-real gap to designing a sample-efficient algorithm in infinite-horizon average-cost LQGs. Our reduction scheme for LQGs is different from the one in \citet{chen2021understanding} for MDPs because the value function (or more precisely, the optimal bias function) is not naturally bounded in a LQG instance, which requires more sophisticated analysis. %\chij{new reduction}
    
    \item To prove our results, we propose a new algorithm, namely LQG-VTR, for infinite-horizon average-cost LQGs with \textit{convex cost functions}. Theoretically, %under weaker assumptions, 
    we establish a regret bound $\Tilde{O}(\sqrt{\delta_\set T})$ for LQG-VTR, where %$\delta_\set$ is the intrinsic complexity of the simulator class $\set$ and 
    $T$ is the number of steps. %By establishing a nearly matching lower bound, we show that the regret bound of LQG-VTR enjoys a near-optimal rate in $T$. %where $H$ is the total steps of interactions with the real world.  
    To the best of our knowledge, this is the first ``instance-dependent'' result that depends on the intrinsic complexity of the LQG model class with convex cost functions, whereas previous works only provide worst-case regret bounds depending on the ambient dimensions (i.e., dimensions of states, controls, and observations). %\footnote{The ambient dimensions is the dimension of states, actions, and observations.}. %When specialized to no prior knowledge setting ($\set$ contains all possible parameters), our result improves existing regret bounds.
    %\chij{``instance-dependent'', convex cost}
    %\item At the core of LQG-VTR is a \emph{history clipping} scheme, which uses a clipped history instead of the full history to estimate the model and make predictions. A series of lemmas are established to control the approximation error induced by the history clipping mechanism. To the best of our knowledge, the algorithm design and analysis are new for LQG and might be of independent interest. 
\end{itemize}

At the core of LQG-VTR is a \emph{history clipping} scheme, which uses a clipped history instead of the full history to estimate the model and make predictions. This history clipping scheme helps us reduce the intrinsic complexity $\delta_{\set}$ exponentially from $O(T)$ to $O(\mathrm{poly}(\log T))$ (cf. Appendix \ref{appendix:complexity}).%/, thus preserving the near-optimal rate of LQG-VTR.
%A series of lemmas are established to control the approximation error induced by the history clipping mechanism.% To the best of our knowledge, the algorithm design and analysis are new for LQG and might be of independent interest. 

Our theoretical results also have two implications: 
%First, when the prior knowledge of the real world is too limited to choose a good domain randomization distribution in continuous space (cf. Appendix~\ref{appendix:DR}), one can turn to the robust adversarial training algorithm, which is provably efficient for sim-to-real transfer in continuous domain with partial observations. 
First, the robust training algorithm is provably efficient (cf. Theorem \ref{thm: gap_rt}) in continuous domain with partial observations. One can turn to the robust adversarial training algorithm if the domain randomization has poor performance (cf. Appendix~\ref{appendix:DR}). %  might be of use better than domain randomization in continuous domain , since %when the prior knowledge of the real world is too limited to choose a good domain randomization distribution in continuous space (cf. Appendix~\ref{appendix:DR}), one can turn to the robust adversarial training algorithm, which is provably efficient for sim-to-real transfer in continuous domain with partial observations. 
Second, for stable LQG systems, only a short clipped history needs to be remembered to make accurate predictions in the infinite-horizon average-cost setting. %Finally, the sim-to-real gap of the robust adversarial training algorithm depends on the intrinsic complexity of the simulator class. The gap is smaller for simpler simulator classes (cf. Appendix~\ref{appendix:complexity}).

\subsection{Related Work}

%\textbf{Sim-to-real Transfer} \quad 
\paragraph{Sim-to-real Transfer}
Sim-to-real transfer, using simulated environments to train a policy that can be transferred to the real world, is widely used in many realistic scenarios such as robotics \citep[e.g.,][]{rusu2017sim,tan2018sim,peng2018sim,openai2018learning,zhao2020sim}. To close the sim-to-real gap, various empirical algorithms are proposed, including robust adversarial training \citep{pinto2017robust}, domain adaptation \citep{tzeng2015towards}, inverse dynamics methods \citep{christiano2016transfer}, progressive networks \citep{rusu2017sim}%, system identification\citep{kristinsson1992system}
, and domain randomization \citep{tobin2017domain}. 
In this work, we focus on the robust adversarial training algorithm. %\citep{pinto2017robust}. 
\citet{jiang2018pac,feng2019does,zhong2019pac} studies the sim-to-real transfer theoretically, but they require real-world samples to improve the policy during the training phase, while our work does not use any real-world samples. %On the theoretical side, \citet{jiang2018pac,feng2019does,zhong2019pac} try to understand the sim-to-real transfer from different perspectives, but they require real-world samples to improve the policy during the training phase. 
Our work is mostly related to \citet{chen2021understanding}, which studies the benefits of domain randomization for sim-to-real transfer. As mentioned before, however, \citet{chen2021understanding} cannot tackle the sim-to-real transfer in continuous domain with partial observations, which is the focus of our work.

%\noindent\textbf{Robust Adversarial Training Algorithm}  \quad 
\paragraph{Robust Adversarial Training Algorithm}
There are many empirical works studying robust adversarial training algorithms. \citet{pinto2017robust} proposes the robust adversarial training algorithm, which trains a policy in the adversarial environment. Then \citet{pinto2017supervision,mandlekar2017adversarially,pattanaik2017robust,dennis2020emergent} show that the robust policy obtained by the robust adversarial training method can achieve good performance in the real world. But these works lack theoretical guarantees. Broadly speaking, the robust adversarial training method is also related to the min-max optimal control \citep{ma1999worst,ma2001worst} and robust RL \citep{morimoto2005robust,iyengar2005robust,xu2010distributionally,ho2018fast,tessler2019action,mankowitz2019robust,goyal2022robust}. %\chij{the difference to our work, empirical, more related works of robust algorithms}

%\citep{pinto2017robust,molchanov2019sim,tessler2019action}

%\noindent\textbf{LQR and LQG} \quad 
\paragraph{LQR and LQG}
There is a line of works 
\citep{mania2019certainty,cohen2019learning,simchowitz2020naive,lale2020explore,chen2021black,lale2022reinforcement} studying the infinite-horizon linear quadratic regulator (LQR), where the learner can observe the state. % and thus does not need to tackle the challenges from partial observations. 
For the more challenging infinite-horizon LQG control, where the learner can only observe the noisy observations generated from the hidden state, 
\citet{mania2019certainty,simchowitz2020improper,lale2020regret,lale2021adaptive} propose various algorithms and establish regret bound for them. In specific, \citet{mania2019certainty, lale2020logarithmic} study the \emph{strongly convex} cost setting, where it is possible to achieve $O(\text{poly}(\log T))$ regret bound.  \citet{simchowitz2020improper,lale2020regret,lale2021adaptive} and our work focus on the \emph{convex} cost, where \citet{simchowitz2020improper,lale2020regret} establish a $T^{2/3}$ worst-case regret. \citet{lale2021adaptive} derives a $T^{1/2}$ worst-case regret depending on the ambient dimensions, which is based on strong assumptions and complicated regret analysis. %Furthermore, existing regret bounds depend on the ambient dimensions. 
In contrast, with \emph{weaker assumptions and cleaner analysis}, our results only depend on the \emph{intrinsic complexity} of the model class, which might be potentially small (cf. Appendix~\ref{appendix:complexity}).
%\chij{Point out that only the last two papers are in the same setting, and what is the advantage of our result comparing to each of them}

\section{Preliminaries}

\subsection{Finite-horizon Linear Quadratic Gaussian}
We consider the following finite-horizon linear quadratic  Gaussian (LQG) model:%, where the linear dynamics are described by
\begin{equation}
\label{equ: lqg}
\begin{aligned}
x_{h+1} = Ax_h + B u_h + w_h, \quad 
y_h = C x_h + z_h,
\end{aligned}
\end{equation}
where $x_h \in \mathbb{R}^n$ is the hidden state at step $h$; $u_h \in \mathbb{R}^m$ is the action at step $h$; $w_h \sim \mathcal{N}(\bm{0}, I_n)$ is the process noise at step $h$; $y_h \in \mathbb{R}^p$ is the observation at step $h$; and $z_h \sim \mathcal{N}(\bm{0}, I_p)$ is the measurement noise at step $h$. Here the noises are i.i.d. random vectors. The initial state $x_0$ is assumed to follow a Gaussian distribution. Moreover, we denote by $\Theta := (A \in \mathbb{R}^{n \times n}, B \in \mathbb{R}^{n \times m} , C \in \mathbb{R}^{p \times n})$ the parameters of this LQG problem. 

The learner interacts with the system as follows. At each step $h$, the learner observes an observation $y_h$, chooses an action $u_h$, and suffers a loss $c_h = c(y_h, u_h)$, which is defined by 
\$
c(y_h, u_h) = y_h^\top Q y_h + u_h^\top R u_h, 
\$
where $Q \in \mathbb{R}^{p \times p}$ and $R \in \mathbb{R}^{m \times m}$ are known positive definite matrices. 
%Then the system involves the nest state $x_{t+1}$. 

For the finite-horizon setting, the interaction ends after receiving the cost $c_H$, where $H$ is a positive integer. Let $\mathcal{H}_h = \{y_0, u_0, \cdots, y_{h-1}, u_{h-1}, y_h\}$ be the history at step $h$. Given a policy $\pi = \{\pi_h: \mathcal{H}_h \rightarrow u_h\}_{h=0}^{H}$, its expected total cost is defined by 
\$
V^\pi(\Theta) = \mathbb{E}_{\pi} \Big[\sum_{h = 0}^H y_h^\top Q y_h + u_h^\top R u_h \Big],
\$
where the expectation is taken with respect to the randomness induced by the underlying dynamics and policy $\pi$. The learner aims to find the optimal policy $\pi^\star$ with \emph{minimal expected total cost}, which is defined by $\pi^\star = \argmin_{\pi} V^{\pi}(\Theta)$. For simplicity, we use the notation $V^\star(\Theta) = V^{\pi^\star}(\Theta)$.
%The learning objective of the learner is to minimize the expected total cost. We measure the optimality of the learner's policy $\pi$ by regret
%\$
%\mathrm{Regret}(T) =\sum_{t= 1}^T\mathbb{E}_{\pi} [c_t - J^*(\Theta) ],
%\$
%where $J^*(\Theta)$ is the optimal expected total cost, i.e., $J^*(\Theta) = \min_{\pi} J^{\pi}(\Theta)$. Moreover, we define the optimal policy $\pi^*$ by $\pi^* = \argmin_{\pi} J^{\pi}(\Theta)$.

\subsection{Infinite-horizon Linear Quadratic Gaussian}

For the infinite-horizon average-cost LQG, we use the $t$ and $J$ to denote the time step and expected total cost function respectively to distinguish them from the finite-horizon setting. Similar to the finite-horizon setting, the learner aims to find a policy $\pi = \{\pi_t: \mathcal{H}_t \rightarrow u_t\}_{t=0}^{\infty}$ that minimizes the expected total cost $J^{\pi}(\Theta)$, which is defined by
\$
J^\pi(\Theta) = \lim_{T \rightarrow \infty} \frac{1}{T} \mathbb{E}_{\pi} \Big[\sum_{t = 0}^T y_t^\top Q y_t + u_t^\top R u_t \Big].
\$
The optimal policy $\pii$ is defined by $\pii \defeq \argmin_{\pi} J^{\pi}(\Theta)$. We also use the notation $J^\star(\Theta) = J^{\pii}(\Theta)$.
We measure the $T$-step optimality of the learner's policy $\pi$ by its regret:
\# \label{eq:def:regret}
\mathrm{Regret}(\pi; T) =\sum_{t= 0}^T\mathbb{E}_{\pi} [c_t - J^\star(\Theta) ].
\#
Throughout this paper, when $\pi$ is clear from the context, we may omit $\pi$ from $\mathrm{Regret}(\pi; T)$.

It is known that the optimal policy for this problem is a linear feedback control policy, i.e., $u_t = - K(\Theta) \hat{x}_{t \mid t, \Theta}$, where $K(\Theta)$ is the optimal control gain matrix and $\hat{x}_{t \mid t, \Theta}$ is the belief state at step $t$ (i.e. the estimated mean of $x_t$). One can use the Kalman filter \citep{kalman1960new} to obtain $\hat{x}_{t \mid t, \Theta}$ and dynamic programming to obtain $K(\Theta)$ when the system $\Theta$ is known. In particular, denote $P(\Theta)$ as the unique solution to the discrete-time algebraic Riccati equation (DARE):
\$
P(\Theta) = A^\top P(\Theta) A + C^\top Q C - A^\top P(\Theta) B (R + B^\top P(\Theta) B)^{-1} B^\top P(\Theta) A,
\$
then $K(\Theta)$ can be obtained using $P(\Theta)$. We also use $\Sigma(\Theta)$ to denote the steady-state covariance matrix of $x_t$. More details are deferred to Appendix \ref{appendix:oc_and_kf}.

The LQG instance (\ref{equ: lqg}) can be depicted in the predictor form \citep{kalman1960new,lale2020logarithmic,lale2021adaptive}
\begin{equation} \label{equ: predictor_form}
\begin{aligned}
x_{t+1} &= (A - F(\Theta)C) x_t+B u_t+F(\Theta) y_t, \quad
y_t =C x_t+e_t,
\end{aligned}
\end{equation}
where $F(\Theta) = AL(\Theta)$ and $e_t$ denotes a zero-mean innovation process. %In the steady state, we have $e_t \sim \gN(\bm{0}, C\Sigma(\Theta) C^\top + I)$. The dynamics of $\hat{x}_{t|t-1,\Theta}$ exactly follows the predictor form.}
%\han{why we need this?}

\noindent\textbf{Bellman Equation} \quad 
%Thanks to the Bellman optimality equation for the infinite-horizon average-cost LQG prolem (Lemma \ref{lem: bellman_equ}), we can take $(\hat{x}_{t|t, \Theta}, y_t)$ as the "state" of the current LQG instance $\Theta$. 
We define the optimal bias function of $\Theta = (A, B, C)$ for  $(\hat{x}_{t|t, \Theta}, y_t)$ as 
\begin{align} \label{eq:def:bias}
h^{\star}_{\Theta}\left(\hat{x}_{t|t, \Theta}, y_t\right) \defeq \hat{x}_{t|t, \Theta}^\top (P(\Theta) - C^\top Q C) \hat{x}_{t|t, \Theta} + y_{t}^\top Q y_{t}.
\end{align}
With this notation, the Bellman optimality equation \citep[][Lemma 4.3]{lale2020regret} is given by
%The Bellman optimality equation (Lemma \ref{lem: bellman_equ}) now reduces to the canonical form used in the average-cost regret minimization problems: \citep{auer2008near, bartlett2012regal}
\begin{align} \label{eq:bellman}
J^\star(\Theta) + h^{\star}_{\Theta}\left(\hat{x}_{t|t, \Theta}, y_t\right) = 
\min_{u} \left\{c(y_t, u) + \E_{\Theta, u}\left[h^{\star}_{\Theta}\left(\hat{x}_{t+1|t+1, \Theta}, y_{t+1}\right)\right]\right\},
\end{align}
where the equality is achieved by the optimal control of $\Theta$.% Here the expectation is taken over the noises $w_t$ and $z_{t+1}$. 

\noindent\textbf{Notation} \quad  We use $O(\cdot)$ notation to highlight the dependency on $H,n,m,p$, yet omit the polynomial dependency on some complicated instance-dependent constants ($\tilde{O}(\cdot)$ further omits polylogarithmic factors). For function $f: \mathcal{X} \rightarrow \mathbb{R}$, its $\ell_\infty$-norm $\|f\|_{\infty}$ is defined by $\sup_{x\in\mathcal{X}} f(x)$. We also define $\|\mathcal{F}\|_{\infty} \defeq \sup_{f \in \mathcal{F}} \|f\|_{\infty}$. Let $\rho(\cdot)$ denote the spectral radius, i.e., the maximum absolute value of eigenvalues.

\subsection{Sim-to-real Transfer}

 The principal framework of sim-to-real transfer works as follows: the learner trains a policy in the simulators of the environment% where the learner can obtain more samples and learn quickly
 , and then applies the obtained policy to the real world. We follow \citet{chen2021understanding} to present a formulation of the sim-to-real transfer. The simulators are modeled as a set of finite-horizon LQGs with control parameters (e.g., physical parameters, control delays, etc.), where different parameters correspond to different dynamics. We denote this simulator class by $\set$. %and each simulated environment $\Theta \in \set$ corresponds to a finite-horizon LQG.
 To make the problem tractable, we also impose the realizability assumption: the real-world environment $\Theta^\star \in \set$ is contained in the simulator set.

Now we describe the sim-to-real transfer paradigm formally. In the simulation phase, the learner is given the set $\set$, each of which is a parameterized simulator (LQG system). %which represents a class of finite-horizon LQGs. 
During the simulation phase, the learner can interact with each simulator for arbitrary times. However, the learner does NOT know which one represents the real-world environment, which may cause the learned policy to perform poorly in the real world. This challenge is commonly referred to as the \emph{sim-to-real gap}. Mathematically, assuming the learned policy in the simulation phase is $\pi(\set)$, its sim-to-real gap is defined by
\# \label{eq:def:gap}
\mathrm{Gap}(\pi(\set)) = V^{\pi(\set)}(\Theta^\star) - V^\star(\Theta^\star),
\#
which is the difference between the cost of simulation policy $\pi(\set)$ on the real-world model and the optimal cost in the real world. % When the simulator set $\set$ is clear from the context, we also omit the dependency on $\set$ in the remaining sections.
%Recall that the real world is a finite horizon task of length $H$, and we hope to find a policy $\pi$ whose sim-to-real gap only scales as $o(H)$.

\subsection{Robust Adversarial Training Algorithm}

With our sim-to-real transfer framework defined above, we now formally define the robust adversarial training algorithm used in the simulator training procedure. %Suppose the system dynamics specified by $\Theta$ belong to the (continuous) simulator class $\set$, we define the following robust adversarial training algorithms used in the simulator training procedure. 

\begin{definition}[Robust Adversarial Training Oracle] \label{oracle:rt}
The robust adversarial training oracle returns a (history-dependent) policy $\pi_{\mathrm{RT}}$ such that 
\begin{align} \label{eq:def:RT}
\pi_{\mathrm{RT}} = \argmin_\pi \max_{\Theta \in \set} [V^\pi(\Theta) - V^\star(\Theta)],
\end{align}
%\chij{emphasize that $\Theta^\star$ is unknown.}
where $V^\star$ is the optimal cost, and $V^\pi$ is the cost of $\pi$, both on the LQG model $\Theta$. Note that the real world model $\Theta^\star$ is unknown to the robust adversarial training oracle.
\end{definition}

This robust adversarial training oracle %considers the adversarial model and 
aims to find a policy that minimizes the worst case value gap. This oracle can be achieved by many algorithms in min-max optimal control \citep{ma1999worst,ma2001worst} and robust RL \citep{morimoto2005robust,iyengar2005robust,xu2010distributionally,pinto2017supervision,pinto2017robust,ho2018fast,tessler2019action,mankowitz2019robust,kuang2022learning,goyal2022robust}.

%We provide a detailed comparison with domain randomization \citep{chen2021understanding} in Appendix~\ref{appendix:DR}.
%As a comparison, we also define the domain randomization oracle.

\section{Main Results}
\label{sec: main_result}

%\subsection{Problem Setup}\label{subsec: prob_setup}
Before presenting our results, we introduce several standard notations and assumptions for LQGs.

%For the real world LQG system, %First, we impose the following assumption on the real world LQG system $\Theta^\star$, which is common in certainty equivalent control \citep{zheng2021sample, lale2020regret, lale2021adaptive} and plant estimation algorithms \citep{oymak2019non}. %Because $\Theta^\star$ can be arbitrary in the simulator set $\set$, we need the assumption holds for any member of $\set$.

\begin{assumption}
\label{assump: stability}
The real world LQG system $\Theta^\star = (A^\star, B^\star, C^\star)$ is \textbf{open-loop stable}, i.e., $\rho(A^\star) < 1 $. Assume%By Gelfand's formula, we know $\bar{\Phi}(A) \defeq \sup_{\tau \geq 0} \|(A)^\tau\|_2 / \bar{\rho}^\tau$ will be a constant for any $\Theta$. We assume we have access to an upper bound $\Phi$ of $\bar{\Phi}(A^\star)$.
\$
\Phi(A^\star) \defeq \sup_{\tau \geq 0} \frac{\|(A^\star)^\tau\|_2}{\rho(A^\star)^\tau} < +\infty .
\$
We also assume that $(A^\star, B^\star)$ and $(A^\star, F(\Theta^\star))$ are controllable, $(A^\star, C^\star)$ is observable. %which follows the standard assumptions to obtain an estimated model of $\Theta^\star$ \citep{lale2021adaptive, lale2020logarithmic, oymak2019non}.

\end{assumption}

%Basically, we only need Assumption \ref{assump: stability} for the real world LQG model $\Theta^\star$. However, each simulator should satisfy this assumption because $\Theta^\star$ can be any one of them. We also note that 
For completeness, we provide the definitions of controllable, observable, and stable systems in Appendix \ref{appendix:definiton}. Assumption~\ref{assump: stability} is common in previous works studying the regret minimization or system identification of LQG \citep{lale2021adaptive, lale2020logarithmic, oymak2019non}. %It prevents the state from exploding in the far future. 
The bounded $\Phi(A^\star)$ condition is a mild condition as noted in \citet{lale2020regret, oymak2019non, mania2019certainty}, which is satisfied when $A^\star$ is diagonalizable. We emphasize that the $(A^\star, F(\Theta^\star))$-controllable assumption here only helps us simplify the analysis (can be removed by more sophisticated analysis). It is not a necessary condition for our results. % (see the controllable assumptions in \citet{lale2020regret} compared with \citet{lale2021adaptive}).

%It is well-known that the controllable and observable assumption implies 

%We note that the assumption on $\Phi(A^\star)$ simply means that the real world system is on the order of $n$, and cannot be described by any dimension $<n$. It prevents the system from explosive behaviors in the far future. This is a mild condition as noted in \cite{lale2020regret} and \cite{oymak2019non}, which is satisfied when $A^\star$ is diagonalizable.

In the framework of sim-to-real transfer, we have access to a parameterized simulator class $\set$. It is natural to assume $\set$ has a bounded norm. %For the simulators in $\set$, we know they are strongly stable due to the controllable/observable assumption (Assumption \ref{assump: stability}). Thus we make the following assumption. 
%As long as the controllers in $\set$ are stable with bounded norm, we do NOT need any extra controllablity assumptions to perform system identification. 
The stability of the systems relies on the \textit{contractible} property of the close-loop control matrix $A - BK$ \citep{lale2020regret, lale2021adaptive}
. Thus, we make the following assumption, which is also used in \citet{lale2020regret, lale2021adaptive}.

\begin{assumption}
\label{assump: stablility_control}
The simulators $\Theta^{\prime} = (A^{\prime}, B^{\prime}, C^{\prime}) \in \set$ have \textbf{contractible} close-loop control matrices: $\|K(\Theta^\prime)\|_2 \leq N_K$ and $\|A^\prime - B^\prime K(\Theta^\prime)\|_2 \leq \gamma_1 $ for fixed constant $N_K > 0, 0 < \gamma_1 < 1$. We assume there exists constant $N_S$ such that for any $\Theta^{\prime} \in \set$, $\|A^\prime\|_2, \|B^\prime\|_2, \|C^\prime\|_2 \leq N_S$.

%Moreover, we assume the Kalman filtering for each $\Theta^{\prime} = (A^{\prime}, B^{\prime}, C^{\prime}) \in \set$ is convergent, i.e. Eqn. (\ref{equ: steady_state_cov}) has a unique semi positive definite solution. 
\iffalse
\$
\rho &:=\sup _{\Theta^{\prime}=\left(A^{\prime}, B^{\prime}, C^{\prime}\right) \in \set}\left\|A^{\prime}-B^{\prime} K\left(\Theta^{\prime}\right)\right\|<1, \\
\nu &:=\sup _{\Theta^{\prime}=\left(A^{\prime}, B^{\prime}, C^{\prime}\right) \in \set}\left\|A^{\prime}-A^{\prime} L\left(\Theta^{\prime}\right)C^{\prime}\right\|<1,
\$
where $K(\Theta^{\prime})$ and $L(\Theta^{\prime})$ denote the optimal control gain and optimal Kalman gain of $\Theta^{\prime}$. We also assume there exists constant $N_S, N_L, N_K, N_P$ such that for any $\Theta^{\prime} \in \set$, 
$$ \max\{\|A^\prime\|_2, \|B^{\prime}\|_2, \|C^{\prime}\|_2\} \leq N_S, \|K\left(\Theta^{\prime}\right)\|_2 \leq N_K, \|L\left(\Theta^{\prime}\right)\|_2 \leq N_L, $$
\fi
%\jiachen{we need the spectral norm of B, C to bound the covering nubmer. The bound on $P$ can be induced by $\rho$ and $\nu$, we may remove $N_S$ after warming up}

%which also implies 
%$$ \sup _{\Theta^{\prime}=\left(A^{\prime}, B^{\prime}, C^{\prime}\right) \in \set}\left\|A^{\prime}- L\left(\Theta^{\prime}\right)C^{\prime} A^{\prime}\right\|\leq \nu < 1 $$
%by lemma \jiachen{TODO}.

\end{assumption}

%Note that $\rho < 1$ and $\nu < 1$ implies the LQG class $\set$ is stable, and the DARE related to $P(\Theta^\prime)$ and $\Sigma(\Theta^\prime)$ both have a unique solution. \jiachen{check this!}
%We note here that the stability of $\Theta^\prime$ implies that the DARE related to $P(\Theta^\prime)$ has a unique solution \citep{lale2022reinforcement, simchowitz2020naive}.

%For simplicity, we assume that $Q$ is a positive semi-definite and $R$ is positive definite and both of them are known to the agent, in order to impose desired outcome of the system. 
%Further we assume $\max\{\|Q\|_2, \|R\|_2\} \leq N_U$. 
%\begin{assumption}
%    We assume that $\max\{\|Q\|_2, \|R\|_2\} \leq N_U$.
%\end{assumption}

As we study the partially observable setting with infinite-horizon average cost, we also require the belief states of any LQG model in $\set$ to be stable and convergent \citep{mania2019certainty, lale2021adaptive}. Since the dynamics for belief states follow the predictor form in \eqref{equ: predictor_form}, we assume matrix $A - F(\Theta)C$ is stable as follows.

\begin{assumption}
\label{assump: stablility_kalman}

%For each $\Theta^{\prime} = (A^{\prime}, B^{\prime}, C^{\prime}) \in \set$, assume the DARE related to the steady state covariance (Eqn. \ref{equ: steady_state_cov}) has a unique positive semidefinite solution (i.e., the Kalman filter converges for $\Theta^\prime$). 
Assume there exists constant $(\kappa_2, \gamma_2)$ such that for any $\Theta^{\prime} = (A^{\prime}, B^{\prime}, C^{\prime}) \in \set$, the matrix $A^\prime - F(\Theta^\prime)C^\prime$ is $(\kappa_2, \gamma_2)$-\textbf{strongly stable}. Here $A^\prime - F(\Theta^\prime)C^\prime$ is $(\kappa_2, \gamma_2)$-strongly stable means $\|F(\Theta^\prime)\|_2 \leq \kappa_2$ and there exists matrices $L$ and $G$ such that $A^\prime - F(\Theta^\prime)C^\prime = G L G^{-1}$ with $\|L\|_2 \leq 1 - \gamma_2, \|G\|_2 \|G^{-1}\|_2 \leq \kappa_2$. %We also assume that the steady-state covariance matrix $\Sigma(\Theta)$ is unique and positive semidefinite.
\end{assumption}

 %We note that Assumption \ref{assump: stablility_kalman} assumes $A^\prime - F(\Theta^\prime)C^\prime$ is strongly stable, which is strictly weaker than the contractible assumption that assumes $\|A^\prime - F(\Theta^\prime)C^\prime\|_2 \leq \nu < 1$ for fixed $\nu$ in previous works \citep{lale2020regret, lale2021adaptive}. 
 We assume $x_0 \sim \gN(\bm{0}, I)$ for simplicity. Note that the Kalman filter converges exponentially fast to its steady-state \citep{caines1970discrete, chan1984convergence}, we omit the error brought by $x_0$ starting at covariance $I$ instead of the $\Sigma(\Theta^\star)$ (see e.g., Appendix G of \cite{lale2021adaptive}) without loss of generality.%It only costs an additive constant to our result if the initial state is an arbitrary bounded state (e.g., $x_0 = 0$, see Appendix G of \cite{lale2021adaptive}). \han{too detailed, we only need this in the proof}

%\subsection{The Sim-to-real Gap of $\pi_{\mathrm{RT}}$} \label{subsec: gap_rt}

Now we are ready to state our main theorem, which establishes a sharp upper bound for the sim-to-real gap of $\pi_{\mathrm{RT}}$.

\begin{theorem}
\label{thm: gap_rt}
Under Assumptions \ref{assump: stability}, \ref{assump: stablility_control}, and \ref{assump: stablility_kalman}, the sim-to-real gap of $\pi_{\mathrm{RT}}$ satisfies
$$ \mathrm{Gap}(\pi_{\mathrm{RT}}) \le \tilde{O}\left(\sqrt{\delta_{\set} H}\right), $$
 %the $l_\infty$ norm of $\gF$ is 
where $\delta_\set$ (defined in Theorem \ref{thm: reg_base}) characterizes the complexity of model class $\set$. %and $\tilde{O}(\cdot)$ omits terms polynomial in $\log(H)$.
\end{theorem}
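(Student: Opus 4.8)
The plan is to reduce the sim-to-real gap of $\pi_{\mathrm{RT}}$ to the regret of a sample-efficient algorithm for infinite-horizon average-cost LQGs, and then invoke the regret bound of LQG-VTR (Theorem~\ref{thm: reg_base}). The starting point is the defining minimax property of $\pi_{\mathrm{RT}}$ in Eqn.~(\ref{eq:def:RT}): since $\Theta^\star \in \set$, we have $\mathrm{Gap}(\pi_{\mathrm{RT}}) = V^{\pi_{\mathrm{RT}}}(\Theta^\star) - V^\star(\Theta^\star) \le \max_{\Theta \in \set}[V^{\pi_{\mathrm{RT}}}(\Theta) - V^\star(\Theta)] = \min_\pi \max_{\Theta \in \set}[V^\pi(\Theta) - V^\star(\Theta)]$. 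So it suffices to exhibit \emph{some} policy $\pi$ for which $\max_{\Theta \in \set}[V^\pi(\Theta) - V^\star(\Theta)]$ is small; the natural candidate is the policy obtained by running the LQG-VTR algorithm (an online learning algorithm for the infinite-horizon setting) and restricting it to $H$ steps. Here the key conceptual step is that an algorithm whose \emph{infinite-horizon} regret against the unknown model is $\tilde O(\sqrt{\delta_\set T})$, run for $T = H$ steps, has finite-horizon cost on any $\Theta \in \set$ exceeding $H \cdot J^\star(\Theta) + \tilde O(\sqrt{\delta_\set H})$; one then needs $V^\star(\Theta) \ge H J^\star(\Theta) - (\text{bounded term})$, i.e. the finite-horizon optimal cost is at least the average-cost optimal times the horizon, up to an additive constant coming from the bias function.

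Concretely, the steps I would carry out are: (1) state the reduction lemma — for any $\Theta \in \set$, $V^\pi(\Theta) - V^\star(\Theta) \le \mathrm{Regret}(\pi; H) + 2\|h^\star_\Theta\|_{\mathcal{E}}$ or a similar quantity, where the regret is measured in the infinite-horizon average-cost sense and the second term accounts for transient/boundary effects of the optimal bias function $h^\star_\Theta$ defined in Eqn.~(\ref{eq:def:bias}); (2) verify that under Assumptions~\ref{assump: stability}--\ref{assump: stablility_kalman} the bias function $h^\star_\Theta$, though not bounded pointwise (it is quadratic in the belief state), has \emph{bounded expectation} along the trajectories of both $\pi$ and $\pi^\star$ — this is where the stability assumptions (contractibility of $A'-B'K(\Theta')$, strong stability of $A'-F(\Theta')C'$) and the Gaussian noise structure are used, via the Bellman equation (\ref{eq:bellman}) telescoped over $H$ steps; (3) instantiate $\pi$ as the LQG-VTR policy and apply Theorem~\ref{thm: reg_base} to get $\mathrm{Regret}(\pi; H) \le \tilde O(\sqrt{\delta_\set H})$ uniformly over $\Theta \in \set$ (the algorithm does not know which $\Theta$ it faces, so the bound must be worst-case over $\set$, which is exactly what Theorem~\ref{thm: reg_base} provides); (4) combine (1)–(3) and the minimax inequality to conclude $\mathrm{Gap}(\pi_{\mathrm{RT}}) \le \tilde O(\sqrt{\delta_\set H})$, absorbing the $\tilde O(1)$ bias-function terms into the $\tilde O(\sqrt{\delta_\set H})$ since $\sqrt{\delta_\set H} \ge 1$ in the regime of interest.

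The main obstacle I expect is step (2): controlling the unbounded optimal bias function. In the standard tabular/bounded-MDP reduction (as in \citet{chen2021understanding}), the bias function is uniformly bounded by the diameter, and the telescoping argument is immediate. Here $h^\star_\Theta(\hat x_{t|t,\Theta}, y_t)$ grows quadratically in the belief state and observation, so one cannot bound it pointwise; instead I would bound $\mathbb{E}_\pi[h^\star_\Theta(\hat x_{t|t,\Theta}, y_t)]$ using that the belief-state covariance converges to a bounded steady state and the process/measurement noises are unit Gaussians, with the decay rate governed by $\gamma_1, \gamma_2$ and the constants $N_K, N_S, \kappa_2$. A subtlety is that $\pi$ (the LQG-VTR policy) is \emph{not} a fixed linear policy — it changes as the algorithm learns — so establishing that the induced belief states stay in a bounded-second-moment region requires care; presumably LQG-VTR's design (in particular the history-clipping scheme and the confidence sets) guarantees that the played controllers remain within the strongly-stable family of Assumption~\ref{assump: stablility_control}, so that a uniform second-moment bound holds throughout. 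I would isolate this as a separate lemma (bounded expected bias / bounded expected cost along the algorithm's trajectory) and treat its proof as the technical heart of the reduction.
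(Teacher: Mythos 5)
Your proposal follows the same top-level architecture as the paper: the minimax property of $\pi_{\mathrm{RT}}$ together with realizability $\Theta^\star \in \set$, a reduction lemma of the form $\mathrm{Gap}(\pi_{\mathrm{RT}}) \le \mathrm{Regret}(\pi;H) + D_h$ for an arbitrary history-dependent $\pi$ with $D_h$ independent of $H$ (this is exactly Lemma~\ref{lemma:reduction}), and then Theorem~\ref{thm: reg_base} instantiated with $\pi$ taken to be LQG-VTR. Where you genuinely diverge is in how the reduction itself would be proved. You propose telescoping the Bellman optimality equation \eqref{eq:bellman} along the finite-horizon optimal trajectory; since $h^\star_\Theta \ge 0$ (because $P - C^\top Q C \succeq 0$), this reduces to showing $\E_{\pi^\star}\left[h^\star_\Theta\left(\hat{x}_{H+1|H+1,\Theta}, y_{H+1}\right)\right] = O(1)$, i.e.\ a uniform second-moment bound on the belief state and observation under the time-varying finite-horizon optimal controller, which yields the one-sided inequality $V^\star(\Theta) \ge (H+1)J^\star(\Theta) - D_h$ that is all the argument needs. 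The paper instead writes both $V^\star(\Theta)$ and $(H+1)J^\star(\Theta)$ in closed form via the finite-horizon Riccati recursion $P_h$ and the Kalman covariance recursion $\Sigma_{h|h-1}, L_h$, and bounds the difference of the resulting trace expressions using exponential convergence of these recursions to the DARE solutions $P, \Sigma, L$, obtaining the two-sided bound $\left|V^\star(\Theta) - (H+1)J^\star(\Theta)\right| \le D_h$. The two routes are of comparable difficulty and rest on the same underlying fact: your terminal-moment bound and the paper's constant $a_0$ both ultimately require uniform boundedness of products of the time-varying closed-loop matrices $A - BK_h$, supplied by Assumption~\ref{assump: stablility_control} together with the exponential convergence $P_h \to P$; your route avoids the explicit cost formulas, the paper's makes $D_h$ fully explicit. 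One correction of emphasis: given the regret definition \eqref{eq:def:regret}, $\mathrm{Regret}(\pi;H) = V^\pi(\Theta) - (H+1)J^\star(\Theta)$ identically, so no bias or moment control along the learning policy's trajectory is needed for the reduction at all — that difficulty lives entirely inside the proof of Theorem~\ref{thm: reg_base} (Lemma~\ref{lem: bound_state}), which you may invoke as a black box; and to apply that regret bound to the worst-case $\Theta$ in the max one should, as the paper notes, first prune $\set$ of models violating Assumption~\ref{assump: stability}.
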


Here we use $\tilde{O}$ to highlight the dependency with respect to the dimensions of the problem (i.e., $m, n, p, H$) and hide the uniform constants, log factors, and complicated instance-dependent constants. We present the high-level ideas of our proof in Section \ref{sec: analysis}. The full proof is deferred to Appendix~\ref{appendix: proofreg}. The following proposition shows that $\delta_\set$ has at most logarithmic dependency on $H$.

\iffalse
\noindent{\textbf{Optimality of Upper Bound.}} For any policy $\hat{\pi}$. We have 
    \$ %\label{eq:223}
    \max_{\Theta \in \set} [V^{\hat{\pi}}(\Theta) - V^*(\Theta)] &\ge \min_{\pi}\max_{\Theta \in \set} [V^{\pi}(\Theta) - V^*(\Theta)] \ge \Omega(\sqrt{H}),
    \$ 
    where the last inequality is obtained by the fact that each history-dependent policy can be treated as an algorithm and the lower bound for LQG \citep[see e.g.,][]{ziemann2022regret}. Therefore, the $\sqrt{H}$ factor in Theorem~\ref{thm: gap_rt} is unavoidable. \jiachen{Write this as a theorem?}
    %Since we can treat a history-dependent policy as an algorithm, the lower bound in \citet{ziemann2022regret} implies that
    %\# \label{eq:224}
    %\min_{\pi}\max_{\Theta \in \set} [V^{\pi}(\Theta) - V^*(\Theta)] \ge \Omega(\sqrt{H}).
    %\#
    %Combining \eqref{eq:223} and \eqref{eq:224}, we known the 
\fi 

\begin{proposition} \label{prop:complexity}
    Under Assumptions \ref{assump: stability}, \ref{assump: stablility_control}, and \ref{assump: stablility_kalman}, the intrinsic complexity $\delta_{\set}$ is always upper bounded by %the sim-to-real gap of $\pi_{\mathrm{RT}}$ satisfies
    \$
    \delta_{\set} = \tilde{O}\left(\mathrm{poly}(m,n,p)\right).%\mathrm{Gap}(\pi_{\mathrm{RT}}) \le \tilde{O}\left(\sqrt{\mathrm{poly}(m,n,p) \cdot H}\right).
    \$
\end{proposition}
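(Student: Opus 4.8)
The plan is to unpack the definition of $\delta_\set$ from Theorem~\ref{thm: reg_base} and to bound its two constituent ingredients --- a metric-entropy (log-covering-number) term of the clipped-history predictor class, and an eluder-type dimension of that class --- separately by $\mathrm{poly}(m,n,p)$ up to polylogarithmic factors in $T$ (equivalently in $H$). The structural fact that makes this possible is that, once the history-clipping scheme truncates the history to a window of length $L$, the predictor attached to any $\Theta'\in\set$ becomes an \emph{affine} function of the clipped history $\bar{\mathcal H}_t\in\R^{(p+m)L}$ whose coefficients are the truncated Markov parameters of $\Theta'$; the regression problem therefore behaves like a finite-dimensional linear problem whose effective dimension we can control.

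First I would show that a clipping length $L=O(\mathrm{poly}(\log T))$ suffices. Under Assumptions~\ref{assump: stability} and~\ref{assump: stablility_kalman}, the predictor matrix $A'-F(\Theta')C'$ is $(\kappa_2,\gamma_2)$-strongly stable and, via Assumption~\ref{assump: stablility_control}, the closed-loop matrix $A'-B'K(\Theta')$ is $\gamma_1$-contractible with all parameter norms bounded by $N_S$; combined with the $\Phi(A^\star)$ bound this forces the Markov parameters appearing in the predictor form \eqref{equ: predictor_form} (and in the optimal control) to decay geometrically at rate $\gamma:=\max(\gamma_1,\gamma_2)<1$. Choosing $L=O\!\big(\log T/\log(1/\gamma)\big)$ then makes the truncation error incurred both in predicting $y_{t+1}$ and in evaluating the bias function $h^\star_{\Theta'}$ of \eqref{eq:def:bias} at most $\mathrm{poly}(1/T)$, which is negligible against the final $\sqrt{\delta_\set T}$ regret.

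Next I would bound the covering number and the eluder-type dimension. By Assumption~\ref{assump: stablility_control}, $\set$ sits inside a Frobenius ball of radius $O(N_S)$ in $\R^{n^2+nm+pn}$, so its $\eps$-covering number is $\big(O(N_S/\eps)\big)^{n^2+nm+pn}$; one then shows that $\Theta'\mapsto P(\Theta'),K(\Theta'),F(\Theta')$, the Kalman/belief-state recursion, and hence the clipped predictor and the bias $h^\star_{\Theta'}$, are all Lipschitz in $\Theta'$ over $\set$ with constants depending only on the instance quantities $(N_S,N_K,\gamma_1,\kappa_2,\gamma_2,\Phi(A^\star),L)$, using standard perturbation bounds for the discrete algebraic Riccati equation and for strongly stable linear recursions. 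Hence the clipped predictor/bias class has log-covering number $O\big((n^2+nm+pn)\log(L/\eps)\big)$, i.e.\ $\tilde O(n^2+nm+pn)$ at scale $\eps=\mathrm{poly}(1/T)$. For the dimension term, the clipped predictor of every $\Theta'\in\set$ lies in the class of bounded affine maps on $\R^{(p+m)L}$, whose eluder dimension (and the analogous quantity used in the definition of $\delta_\set$) is $O\big((p+m)L\cdot\max(n,m,p)\cdot\log T\big)$ by the standard linear-class estimate; plugging in $L=O(\mathrm{poly}(\log T))$ gives $O(\mathrm{poly}(m,n,p)\cdot\mathrm{poly}(\log T))$.

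Combining the last two bounds in the definition of $\delta_\set$ (which is built from the eluder-type dimension times the log-covering number at scale $\mathrm{poly}(1/T)$) yields $\delta_\set=\tilde O(\mathrm{poly}(m,n,p))$, as claimed. I expect the main obstacle to be the Lipschitz/perturbation step: establishing that $P(\Theta'),K(\Theta'),F(\Theta')$, the belief-state map, and the optimal bias function depend Lipschitz-continuously on $\Theta'$ \emph{uniformly over the whole class} $\set$, without the instance constants degenerating --- this is the place Assumptions~\ref{assump: stability}--\ref{assump: stablility_kalman} are used in full strength (the clipping-error estimate of the first step being the other essential use). These computations are carried out in Appendix~\ref{appendix:complexity}.
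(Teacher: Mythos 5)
Your overall route is the same as the paper's: $\delta_\set$ is split into the Eluder-dimension and log-covering-number factors (Propositions \ref{prop: Eluder_dim} and \ref{prop: covering_number}), the clipping length $l=O(\log(Hn+Hp))$ keeps everything polylogarithmic in $H$ thanks to the strong stability in Assumption \ref{assump: stablility_kalman}, the covering number is obtained exactly as you describe (a Frobenius-ball net over $\set$ of size $(O(N_S/\eps))^{n^2+nm+np}$ pushed forward through Lipschitz perturbation bounds for $\Sigma(\Theta)$, $L(\Theta)$ and the strongly stable recursions), and the Eluder dimension is controlled by embedding the class into a finite-dimensional linear one.

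The one step that is not right as stated is your Eluder-dimension argument. The class whose Eluder dimension enters $\delta_\set$ is not the clipped one-step predictor but $\gF$ of Eqn.\ (\ref{equ: definition_f}): each $f_\Theta$ is the Bellman backup of the \emph{quadratic} optimistic bias $h^\star_{\tilde\Theta}$, so as a function of the clipped history it is quadratic, not affine, and it additionally depends on $(\hat{x}_{t|t,\tilde\Theta},u,\tilde\Theta)$. Treating it as a "bounded affine map on $\R^{(p+m)l}$" and quoting the linear-class estimate therefore does not apply to $\gF$ directly. The repair is exactly what the paper does in Proposition \ref{prop: Eluder_dim}: expand the quadratic form and write $f_\Theta(E)=\langle \mathrm{fea}_1(\Theta),\mathrm{fea}_2(E)\rangle$ with Kronecker-product features built from the truncated Markov-type maps $\varphi^s_u,\varphi^s_y$ in Eqn.\ (\ref{equ: definition_feature_theta}), giving feature dimension $\tilde O(p^4+p^3m+p^2m^2)$ (the $l^2$ pairs contribute only polylog factors), and only then invoke the linear-class Eluder bound. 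This changes the polynomial but not the $\tilde O(\mathrm{poly}(m,n,p))$ conclusion. Also note that $\delta_\set$ carries the extra factor $\|\gF\|_\infty^2$, which your sketch drops; the paper bounds it by $D=\tilde O(n+p)$ in Lemma \ref{lem: definition_D}, and this bound is needed for the proposition as stated.
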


%\chij{Say whether $\tilde{O}$ only hides absolute constant and log factors, or it hides other problem parameter. Explain a bit that the trivial bound of sim-to-real gap is $H$, and what's the meaning of $\sqrt{H}$, and we proved it is necessary}

The proof is deferred to Appendix~\ref{appendix:complexity:1}. Note that the sim-to-real gap of any trivial stable control policy is as large as $O(H)$ since stable policies incur at most $O(1)$ loss (compared with the optimal policy) at each step. Therefore, Theorem \ref{thm: gap_rt} along with proposition \ref{prop:complexity} ensures that $\pi_{\mathrm{RT}}$ is a highly non-trivial policy which only suffers $O(H^{-1/2})$ loss per step. Moreover, we can show that $\delta_\set$ will be smaller if $\set$ has more properties such as the low-rank structure. See Appendicies~\ref{appendix:complexity:2} and \ref{appendix:complexity:3} for details. %Combining Theorems~\ref{thm: gap_rt} and Proposition~\ref{prop:complexity}, we obtain a $\sqrt{H}$ sim-to-real gap as desired. Equivalently, we know $\pi_{\mathrm{RT}}$ only suffers $O(H^{-1/2})$ loss per step. 
%Since the sim-to-real gap of a trivial stable policy is $O(H)$, our results show that the robust adversarial training algorithm enjoys a sublinear sim-to-real gap with only $O(H^{-1/2})$ per step loss. 

To demonstrate the optimality of the upper bound in Theorem~\ref{thm: gap_rt}, we establish the following lower bound, which shows that the $\sqrt{H}$ sim-to-real gap is unavoidable. The proof is given in Appendix~\ref{appendix:pf:lb}.

\begin{theorem}[Lower Bound] \label{thm:lb}
    Under Assumptions \ref{assump: stability}, \ref{assump: stablility_control}, and \ref{assump: stablility_kalman}, for any history-dependent policy $\hat{\pi}$, there exists a model class $\set$ and a choice of $\Theta^\star \in \set$ such that :
    \$
    \mathrm{Gap}(\hat{\pi}) \ge \Omega(\sqrt{H}).
    \$
\end{theorem}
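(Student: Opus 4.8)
\textbf{Proof proposal for Theorem~\ref{thm:lb}.}

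The plan is to construct a two-model simulator class $\set = \{\Theta_0, \Theta_1\}$ that is ``indistinguishable'' to any history-dependent policy for a super-constant number of steps, yet for which committing to the wrong optimal policy costs $\Omega(1)$ per step. The natural choice is to take scalar ($n=m=p=1$) LQG systems with observation matrix $C$ fixed and process/measurement noise normalized as in \eqref{equ: lqg}, and let $\Theta_0, \Theta_1$ differ only in the transition parameter: $A_0 = a$ and $A_1 = a + \Delta$ for a small gap $\Delta$, with $B$ and $C$ chosen so that Assumptions~\ref{assump: stability}--\ref{assump: stablility_kalman} hold uniformly (e.g.\ $a$ bounded away from $1$, $B=C=1$). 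Because $\mathrm{Gap}(\hat\pi) = V^{\hat\pi}(\Theta^\star) - V^\star(\Theta^\star)$ and the real model $\Theta^\star$ is unknown to the learner, the key observation is that $\hat\pi$ is a fixed (possibly randomized) history-to-action map; evaluating it on $\Theta_0$ versus $\Theta_1$ produces two trajectory distributions, and a standard two-point / Le~Cam argument will show the learner cannot have small gap on both. So I would pick $\Theta^\star$ adversarially: $\Theta^\star = \Theta_b$ where $b$ is chosen after $\hat\pi$ to be the worse of the two.

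The main steps, in order: (1) \emph{Quantify the per-step suboptimality of a mismatched controller.} Show that if the learner's belief-state feedback gain is tuned to $\Theta_{1-b}$ but the truth is $\Theta_b$, the time-averaged excess cost is $\Theta(\Delta^2)$ — this follows from a second-order Taylor expansion of the LQG cost $J^\pi(\Theta)$ around the optimal gain $K(\Theta_b)$, since the first-order term vanishes at the optimum and the Hessian is positive definite under controllability/observability (Assumption~\ref{assump: stability}); alternatively invoke the perturbation bounds for DARE solutions. More carefully, since $\hat\pi$ need not be of feedback form, I would instead lower-bound $V^{\hat\pi}(\Theta_b)-V^\star(\Theta_b)$ directly via the Bellman/performance-difference identity \eqref{eq:bellman}: the excess cost of any policy on $\Theta_b$ is the expected sum of Bellman ``advantages'' of the actions it plays, and each action that does not match the optimal control of $\Theta_b$ up to $O(\Delta)$ pays $\Omega(\Delta^2)$ in that step. (2) \emph{Information-theoretic indistinguishability.} Bound the KL divergence between the length-$H$ trajectory distributions induced by $\hat\pi$ on $\Theta_0$ and $\Theta_1$. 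Since the only difference is $\Delta$ in the transition, and the process noise is unit Gaussian, each step contributes $O(\Delta^2 \cdot \mathbb{E}[x_t^2])$ to the KL by the chain rule for KL over observations/states; using strong stability of the closed loop (Assumption~\ref{assump: stablility_control}) the state second moments are uniformly bounded, so $\KL \le O(H\Delta^2)$. (3) \emph{Tune and combine.} Choose $\Delta \asymp H^{-1/2}$ so that $\KL = O(1)$, hence by Pinsker the two trajectory laws are at constant total-variation distance; therefore there is a $b\in\{0,1\}$ such that under $\Theta_b$ the policy $\hat\pi$ plays an ``$\Theta_b$-suboptimal'' action on a constant fraction of the $H$ steps with constant probability. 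Combined with step (1)'s $\Omega(\Delta^2) = \Omega(1/H)$ per-step cost, this yields $\mathrm{Gap}(\hat\pi) = \Omega(H \cdot 1/H) \cdot$ wait — re-tune: we want the total gap to be $\Omega(\sqrt H)$, so the $\Omega(\Delta^2)$ per-step cost over a constant fraction of $H$ steps gives $\Omega(H\Delta^2) = \Omega(\sqrt H)$ exactly when $\Delta \asymp H^{-1/4}$, which still keeps $\KL = O(H\Delta^2) = O(\sqrt H)$ — too large. The correct balance is the classical one for $\sqrt{\cdot}$ lower bounds: the learner faces a hypothesis test with separation $\Delta$ and $H$ samples, incurring regret $\Omega(\min(H\Delta^2, \Delta^{-1}\cdot\text{(error prob mass)}))$-type tradeoff; optimizing gives $\Delta \asymp H^{-1/2}$ and total gap $\Omega(\sqrt H)$, matching Theorem~\ref{thm: gap_rt}. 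I will present this as: for $\Delta = c H^{-1/2}$, $\KL = O(1)$, so $\hat\pi$ misidentifies the model with constant probability, and conditioned on that event it suffers $\Omega(\Delta^2) = \Omega(1/H)$ excess cost on each of $\Omega(H)$ steps... which is only $\Omega(1)$.

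The genuine source of $\sqrt H$ must therefore be sharper: even when $\hat\pi$ eventually ``identifies'' the model, it cannot do so before seeing enough data, and the optimal estimation error after $t$ steps is $\sim t^{-1/2}$, so the per-step excess cost at time $t$ is $\sim 1/t$, summing to $\Omega(\log H)$ — still not enough. The right construction (following \citet{ziemann2022regret}, as the excised remark in the excerpt hints) uses instead a \emph{continuum} or a carefully scaled family where the excess cost scales \emph{linearly} rather than quadratically in the parameter error along a degenerate direction — e.g.\ making the instance ``barely'' identifiable so that the Fisher information is $O(1)$ but the cost gradient is nonzero, giving a $\sqrt{HT}$-type bound. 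So my actual plan is: reduce to the known infinite-horizon LQG regret lower bound $\Omega(\sqrt T)$ of \citet{ziemann2022regret} by the same regret-to-gap reduction used for the upper bound (Section~\ref{sec: analysis}): any history-dependent $\hat\pi$ achieving small sim-to-real gap on a class $\set$ would, via the reduction, yield a low-regret algorithm on the corresponding infinite-horizon problem, contradicting the $\Omega(\sqrt T)$ bound with $T \asymp H$. Concretely, take $\set$ to be the hard instance family of \citet{ziemann2022regret} (verifying it meets Assumptions~\ref{assump: stability}--\ref{assump: stablility_kalman}, which it does since those are scalar stable systems), set $\Theta^\star$ to be the member realizing the regret lower bound against the ``algorithm'' induced by $\hat\pi$, and translate $\mathrm{Regret}(\hat\pi; H) = \Omega(\sqrt H)$ into $\mathrm{Gap}(\hat\pi) = \Omega(\sqrt H)$ using the finite-horizon/infinite-horizon correspondence (the finite-horizon cost $V^{\hat\pi}(\Theta^\star)$ equals $H\cdot J^\star(\Theta^\star)$ plus the regret, up to $O(1)$ boundary terms controlled by the bias function bound). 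The main obstacle I anticipate is this last translation: making rigorous that a \emph{single} finite-horizon policy $\hat\pi$ on a \emph{fixed class} can be fed into the infinite-horizon lower bound — the lower bound of \citet{ziemann2022regret} is typically stated for algorithms that adaptively control a single unknown system, whereas here $\hat\pi$ is one fixed map and $\Theta^\star$ is chosen adversarially afterward; but since the lower bound is proved by exhibiting a prior over instances against which every algorithm has large Bayes regret, it applies verbatim to the deterministic ``algorithm'' $\hat\pi$, and Yao's principle lets us extract a single bad $\Theta^\star \in \set$. Handling the boundary/transient terms (the Kalman filter warm-up and the $O(1)$ difference between finite- and infinite-horizon optimal costs, both bounded via Assumption~\ref{assump: stablility_kalman} and the bias-function arguments already developed in the paper) is routine and I would defer it to the appendix.
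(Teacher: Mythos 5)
After discarding the two-point Le Cam detour, your final plan --- fix $\hat{\pi}$, lower-bound $\max_{\Theta\in\set}\,[V^{\hat{\pi}}(\Theta)-V^\star(\Theta)]$ by the minimax value $\min_\pi\max_{\Theta\in\set}\,[V^{\pi}(\Theta)-V^\star(\Theta)]$, treat the history-dependent policy as an adaptive algorithm, and invoke a known $\Omega(\sqrt{H})$ minimax regret lower bound for linear-quadratic control together with the $O(1)$ finite/infinite-horizon correction --- is essentially the paper's proof. The paper just instantiates $\set$ as an $\epsilon$-ball of LQR instances with $\epsilon\propto H^{-1/2}$ around a nominal stable system (so $C=I$ makes Assumption \ref{assump: stablility_kalman} trivial and a perturbation argument gives Assumptions \ref{assump: stability} and \ref{assump: stablility_control}) and cites the Simchowitz--Foster LQR lower bound instead of the Ziemann et al.\ LQG bound you propose; otherwise the arguments coincide.
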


%See Appendix \ref{appendix:pf:lb} for a detailed proof. 

%\begin{proof}
    
%\end{proof}

\iffalse
To demonstrate the optimality of the sim-to-real gap upper bound in Theorem~\ref{thm: gap_rt}, we also establish the following lower bound, which show that the $\sqrt{H}$ sim-to-real gap is unavoidable.

\begin{theorem}[Lower Bound] \label{thm:lb}
    For any history-dependent policy $\hat{\pi}$, there exists a model class $\set$ and a choice of $\Theta^\star \in \set$ such that :
    \$
    \mathrm{Gap}(\hat{\pi}) \ge \Omega(\sqrt{H}).
    \$
\end{theorem}

\begin{proof}
    See Appendix \ref{appendix:pf:lb} for a detailed proof.
\end{proof}
\fi 

\section{Analysis}
\label{sec: analysis}

We provide a sketched analysis of the sim-to-real gap of $\pi_{\mathrm{RT}}$ in this section. In a nutshell, we first perform a reduction from bounding the sim-to-real gap to an infinite-horizon regret minimization problem. We further show that there exists a history-dependent policy achieving low regret bound in the infinite-horizon LQG problem. This immediately implies that the sim-to-real gap of $\pi_{\mathrm{RT}}$ will be small by reduction. Before coming to the analysis, we note first that any simulator $\Theta \in \set$ not satisfying Assumption \ref{assump: stability} cannot be the real world system. Therefore, we can prune the simulator set $\set$ to remove the models that do not satisfy Assumption \ref{assump: stability}.

\subsection{The Reduction}
\label{subsec: reduction}

Now we introduce the reduction technique, which connects the sim-to-real gap defined in Eqn. \eqref{eq:def:gap} and the regret defined in Eqn. \eqref{eq:def:regret}.

\iffalse
\begin{lemma}[Reduction] \label{lemma:reduction}
    %Suppose there is a (history-dependent) policy $\pi$ satisfying 
    %$$ \forall \Theta \in \set, V^{\pi}(\Theta) - V^\star(\Theta) \leq C $$
    %for constant $C$, then we have 
    %\# \label{eq:1111}
    %\mathrm{Gap}(\pi_{\mathrm{RT}}) \le C. 
    %\#
    Under Assumptions, \ref{assump: stability}, \ref{assump: stablility_control}, and \ref{assump: stablility_kalman}, for any $\Theta \in \set$, the difference of $H$-step optimal cost between the infinite-horizon view of $\Theta$ and finite-horizon view of $\Theta$ is bounded:
    $$ \left|V^\star(\Theta) - (H+1)J^\star(\Theta)\right| \leq D_h, $$
    where $D_h$ does not depend on $H$. %and $\Theta^\star$ is the real-world model. 
    
    %For any policy $\pi$, its sim-to-real gap $\mathrm{Gap}(\pi)$ and $H$-steps regret $\mathrm{Regret}(\pi; H)$ satisfy 
    %\$
    %\mathrm{Gap}(\pi) \le \mathrm{Regret}(\pi; H) + 2D_h,
    %\$
    %where $D_h$ XXX \han{todo}
\end{lemma}
\fi

\begin{lemma}[Reduction] \label{lemma:reduction}
    Under Assumptions \ref{assump: stability}, \ref{assump: stablility_control}, and \ref{assump: stablility_kalman}, the sim-to-real gap of $\pi_{\mathrm{RT}}$ can be bounded by the $H$-step regret bound of any (history-dependent) policy $\pi$ defined in Eqn. (\ref{eq:def:regret}):
    
    $$ \mathrm{Gap}(\pi_{\mathrm{RT}}) \leq \mathrm{Regret}(\pi; H) + D_h, $$
    
    where $D_h$ does not depend on $H$. %and $\Theta^\star$ is the real-world model. 
    
    %For any policy $\pi$, its sim-to-real gap $\mathrm{Gap}(\pi)$ and $H$-steps regret $\mathrm{Regret}(\pi; H)$ satisfy 
    %\$
    %\mathrm{Gap}(\pi) \le \mathrm{Regret}(\pi; H) + 2D_h,
    %\$
    %where $D_h$ XXX \han{todo}
\end{lemma}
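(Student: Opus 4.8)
\textbf{Proof proposal for Lemma \ref{lemma:reduction}.}

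The plan is to connect the finite-horizon quantities $V^\star$ and $V^{\pi_{\mathrm{RT}}}$ on the real-world model $\Theta^\star$ to the infinite-horizon average-cost quantities $J^\star$ and the $H$-step regret of a reference policy $\pi$. The first, and conceptually central, step is to show that for \emph{every} $\Theta\in\set$ the finite-horizon optimal cost is close to $(H+1)$ times the average cost: $|V^\star(\Theta)-(H+1)J^\star(\Theta)|\le D_h$ for some $D_h$ independent of $H$. The natural route is the Bellman optimality equation \eqref{eq:bellman}: iterating it along the optimal trajectory of the infinite-horizon problem gives $\E[\sum_{t=0}^{H} c_t] = (H+1)J^\star(\Theta) + h^\star_\Theta(\hat x_{0|0},y_0) - \E[h^\star_\Theta(\hat x_{H+1|H+1},y_{H+1})]$, so the discrepancy is controlled by the magnitude of the optimal bias function $h^\star_\Theta$ along the trajectory. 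Since $h^\star_\Theta$ is quadratic in $(\hat x_{t|t},y_t)$ with coefficient matrices bounded via $P(\Theta)$ and $Q$, and since the belief-state covariance and observation covariance are uniformly bounded over $\set$ under Assumptions \ref{assump: stablility_control} and \ref{assump: stablility_kalman} (the strong stability of $A'-F(\Theta')C'$ and $A'-B'K(\Theta')$ keeps the steady-state covariances bounded), we get $\E[h^\star_\Theta]\le D_h$ uniformly — this is exactly the ``value function is not naturally bounded, needs more care'' point flagged in the introduction. One subtlety: the finite-horizon optimal policy need not be the truncation of the infinite-horizon optimal policy, so one direction of the inequality uses optimality of $\pi^\star$ for $V^\star$ and the other uses that the infinite-horizon optimal control is a valid finite-horizon policy; both directions land within $D_h$ because both the extra boundary bias terms and the suboptimality gap are $O(1)$.

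With this ``one-step reduction'' lemma in hand, the rest is bookkeeping. Let $\pi$ be any history-dependent policy with $H$-step regret $\mathrm{Regret}(\pi;H)=\sum_{t=0}^H \E_\pi[c_t - J^\star(\Theta^\star)]$ on the real model. Run $\pi$ on $\Theta^\star$ for the finite horizon $H$; then $V^\pi(\Theta^\star)=\E_\pi[\sum_{t=0}^H c_t] = \mathrm{Regret}(\pi;H) + (H+1)J^\star(\Theta^\star)$. Using the uniform bound above with $\Theta=\Theta^\star\in\set$ (realizability), $(H+1)J^\star(\Theta^\star)\le V^\star(\Theta^\star)+D_h$, hence $V^\pi(\Theta^\star) - V^\star(\Theta^\star) \le \mathrm{Regret}(\pi;H)+D_h$. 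Now invoke the defining property of the robust adversarial training oracle \eqref{eq:def:RT}: $\pi_{\mathrm{RT}}$ minimizes $\max_{\Theta\in\set}[V^{\pi'}(\Theta)-V^\star(\Theta)]$ over all policies $\pi'$, so in particular
\[
V^{\pi_{\mathrm{RT}}}(\Theta^\star)-V^\star(\Theta^\star)\le \max_{\Theta\in\set}[V^{\pi_{\mathrm{RT}}}(\Theta)-V^\star(\Theta)]\le \max_{\Theta\in\set}[V^{\pi}(\Theta)-V^\star(\Theta)].
\]
To finish I need the bound $V^\pi(\Theta)-V^\star(\Theta)\le \mathrm{Regret}(\pi;H)+D_h$ to hold for every $\Theta\in\set$, not just $\Theta^\star$; but since the regret statement and the one-step reduction lemma are both stated uniformly over $\set$ (the reference policy $\pi$ in the later regret theorem is defined as a history-dependent policy whose regret is controlled on any model in the pruned class), this is exactly what the uniform argument delivers: for each $\Theta$, $V^\pi(\Theta)=\mathrm{Regret}_\Theta(\pi;H)+(H+1)J^\star(\Theta)\le \mathrm{Regret}(\pi;H)+V^\star(\Theta)+D_h$. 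Combining, $\mathrm{Gap}(\pi_{\mathrm{RT}})=V^{\pi_{\mathrm{RT}}}(\Theta^\star)-V^\star(\Theta^\star)\le \mathrm{Regret}(\pi;H)+D_h$.

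The main obstacle is the first step: establishing $|V^\star(\Theta)-(H+1)J^\star(\Theta)|\le D_h$ with $D_h$ genuinely independent of $H$ and uniform over $\set$. This requires (i) a clean telescoping of the Bellman equation that isolates boundary bias terms, and (ii) uniform-over-$\set$ bounds on the optimal bias function $h^\star_\Theta$ evaluated along trajectories, which in turn rests on uniform bounds on $\|P(\Theta)\|$, the steady-state state covariance $\Sigma(\Theta)$, and the innovation covariance — all of which follow from the strong-stability Assumptions \ref{assump: stablility_control}–\ref{assump: stablility_kalman} but need the convexity of the cost and properties of the DARE solution to be made rigorous. The remaining steps (substituting into the oracle's min-max definition and using realizability $\Theta^\star\in\set$) are short and essentially algebraic.
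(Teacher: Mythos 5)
Your top-level argument coincides with the paper's: the entire lemma is made to rest on the single uniform estimate $\left|V^\star(\Theta)-(H+1)J^\star(\Theta)\right|\le D_h$ for all $\Theta\in\set$, after which one writes $V^\pi(\Theta)-V^\star(\Theta)\le \big(V^\pi(\Theta)-(H+1)J^\star(\Theta)\big)+D_h$, interprets the bracketed term as the ($\Theta$-uniform) $H$-step regret of $\pi$, and finishes with the min-max property \eqref{eq:def:RT} of $\pi_{\mathrm{RT}}$ together with realizability $\Theta^\star\in\set$; your "bookkeeping" half is essentially verbatim the paper's final display. Where you genuinely differ is in how the key estimate is proved. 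You propose telescoping the Bellman optimality equation \eqref{eq:bellman} along trajectories (once under the infinite-horizon optimal controller, once under the finite-horizon optimal controller) and bounding the boundary bias terms. The paper instead writes both quantities in closed form — $V^\star(\Theta)=\sum_{h=0}^H \tr{P_h(\Sigma_{h\mid h-1}-\Sigma_{h\mid h})}+\sum_{h=0}^H\tr{C^\top QC\,\Sigma_{h\mid h}}$ versus $J^\star(\Theta)=\tr{PLC\Sigma}+\tr{C^\top QC(I-LC)\Sigma}$ — and bounds the difference term by term using exponential convergence of the Riccati/Kalman recursions, $\|P-P_h\|_2\le a_0\gamma_1^{H-h}$, $\|\Sigma-\Sigma_{h\mid h-1}\|_2\le b_0\kappa_2(1-\gamma_2)^h$, $\|L-L_h\|_2\le c_0\kappa_2(1-\gamma_2)^h$, summing the geometric series to get an $H$-independent $D_h$.

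The gap in your route is exactly the step you defer: for the direction $(H+1)J^\star(\Theta)\le V^\star(\Theta)+D_h$ (the one your final chain actually uses), the telescoping leaves you with the terminal term $\E\big[h^\star_\Theta(\hat{x}_{H+1\mid H+1},y_{H+1})\big]$ evaluated along the trajectory of the \emph{finite-horizon} optimal policy, which is a time-varying controller $u_h=-K_h\,\E[x_h\mid\mathcal{H}_h]$ with time-varying Kalman gains $L_h$. The quantities you invoke — bounds on $\|P(\Theta)\|_2$, the steady-state covariance $\Sigma(\Theta)$, and the innovation covariance — do not by themselves control the second moment of the belief state under this time-varying closed loop, nor do they address the mismatch between the exact time-varying filter (which defines the finite-horizon optimum) and the steady-state filter implicit in \eqref{eq:bellman}. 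Closing both points requires precisely the facts the paper's proof is built on: exponential convergence $K_h\to K$, $L_h\to L$, $\Sigma_{h\mid h-1}\to\Sigma$, and uniform boundedness of the time-varying closed-loop products $(A-BK_H)(A-BK_{H-1})\cdots(A-BK_{h+1})$ under Assumption \ref{assump: stablility_control} (this is the paper's $\phi'_h$, needed there for the constant $a_0$). So your Bellman-telescoping route is viable and arguably more standard, but it does not bypass the Riccati-convergence machinery; as written, the hardest part of the lemma — showing the boundary terms are $O(1)$ uniformly over $\set$ and over $H$ — is labeled an obstacle rather than proved, whereas it constitutes essentially the whole of the paper's argument.
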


%\chij{I suggest that instead of stating this intermediate result as lemma, directly state of final result of reduction (something similar \eqref{equ: reduction}) as the main lemma in this subsection. The current lemma can be changed to explanation in words. Also ``infinite-horizon view'' and ``finite-horizon view'' seem undefined to me.}

    Although the idea of reduction is also used in \citet{chen2021understanding}, our reduction here is very different from theirs. %Our result and proof are different from Lemmas 1 and 2 in \citet{chen2021understanding}. 
    %First, in \citet{chen2021understanding}, the sim-to-real gap of $\pi_{\mathrm{DR}}$ defined in \eqref{eq:def:DR}  depends on the domain randomization distribution, which is different from \eqref{eq:1111}. 
    Their proof relies on the communication assumption, which ensures that the optimal bias function in infinite-horizon MDPs is uniformly bounded. In contrast, the optimal bias function defined in \eqref{eq:def:bias} is not naturally bounded, which requires much more complicated analysis. See Appendix \ref{appendix:pf:reduction} for a detailed proof.

%The proof is deferred to Appendix \ref{appendix:pf:reduction}.
By Lemma \ref{lemma:reduction}, it suffices to construct a history-dependent policy $\hat{\pi}$ that has low regret. Since we can treat any history-dependent policy $\pi$ as an algorithm, it suffices to design an efficient regret minimization algorithm for the infinite-horizon average-cost LQG systems as $\hat{\pi}$.

\iffalse
Lemma \ref{lemma:reduction} states that for any (history-dependent) policy $\pi$ and any $\Theta \in \set$, 
\$
V^\pi(\Theta) - V^\star(\Theta) \leq \underbrace{V^\pi(\Theta) - (H+1)J^\star(\Theta)}_{\mathrm{Regret}(\pi; H)} + D_h.
\$
Combined with the definition of $\pi_{\mathrm{RT}}$ in \eqref{eq:def:RT}, we have
\#
\label{equ: reduction}
V^{\pi_{\mathrm{RT}}}(\Theta^\star) - V^\star(\Theta^\star) \leq \max_{\Theta \in \set} \left(V^\pi(\Theta) - V^\star(\Theta)\right)%\max_{\Theta \in \set} [V^{\pi_{\mathrm{RT}}}(\Theta) - V^\star(\Theta) ] 
\leq \mathrm{Regret}(\pi; H) + D_h,
\#
%which further implies that $\mathrm{Gap}(\pi_{\mathrm{RT}}) \le O(\sqrt{H})$.
where the first inequality is by the minimax property of $\pi_{\mathrm{RT}}$ and $\Theta^\star \in \set$. Since we can treat any history-dependent policy $\pi$ as an algorithm, it suffices to design an efficient regret minimization algorithm for the infinite-horizon average-cost LQG systems.
\fi

\subsection{The Regret Minimization Algorithm}
\label{subsec: basealg}

%It turns out that we hope to find a base policy with low regret in the \textit{infinite horizon view}. Note that an online exploration algorithm is essentially a history-dependent base policy. Therefore, it suffices to design a low-regret algorithm in the infinite-horizon LQG setting. 

Motivated by the reduction in the previous subsection, we construct $\hat{\pi}$ as a sample-efficient algorithm LQG-VTR (Algorithm \ref{alg: lqg_vtr}). The key steps of LQG-VTR are summarized below.
%We propose a sample-efficient algorithm LQG-VTR as the base policy \han{base policy seems confusing. just say efficient algorithm?} for reduction in previous section, \han{Motivated by the reduction in the previous section, we propose a sample-efficient algorithm LQG-VTR,} whose pseudocode is given in Algorithm \ref{alg: lqg_vtr}. The key steps of LQG-VTR are summarized below.

\noindent\textbf{Model Selection} \quad 
It has been observed by many previous works (e.g., \citet{simchowitz2020naive, lale2021adaptive, lale2020regret, tsiamis2020sample}) that an inaccurate estimation of the system leads to the actions that cause the belief state to explode (i.e., $\|\hat{x}_{t|t, \tilde{\Theta}}\|_2$ becomes linear or even super linear as $t$ grows). To alleviate this problem, we utilize a model selection procedure before the optimistic planning algorithm to stabilize the system. The model selection procedure rules out some unlikely systems from the simulator set, which ensures that the inaccuracies do not blow up during the execution of LQG-VTR. To this end, we collect a few samples with random actions, and use a modified system identification algorithm to estimate the system parameters \citep{lale2021adaptive,  oymak2019non}. After the model selection procedure, we show that with high probability the belief states and observations stay bounded throughout the remaining steps (cf. Lemma \ref{lem: bound_state} in Appendix~\ref{sec: appendix_model_selection}), so LQG-VTR does not terminate at Line \ref{alg: action_clipping}. %The details can be found in Appendix \ref{sec: appendix_model_selection}.

%\noindent\textbf{Optimistic Planning} \quad 
%The algorithm utilizes the well-known \textit{optimism in the face of uncertainty} (OFU) \citep{bartlett2012regal} principle. Specifically, LQG-VTR divides $H$ steps into several episodes, maintains a high-probability confidence set $\gU_k$ in episode $k$, and executes the optimistic policy in $\gU_k$ within the whole episode. 

\noindent\textbf{Estimate the Model with Clipped History} \quad 
As the simulator class $\set$ is known to the agent, we use the value-target model regression procedure \citep{ayoub2020model} to estimate the real-world model $\Theta^\star$ at the end of each episode $k$. To be more concrete, suppose the agent has access to the regression dataset $\gZ = \{E_t\}_{t=1}^{|\gZ|}$ at the end of episode $k$, where $E_t$ is $t$-th sample containing the belief state $\hat{x}_{t|t}$, action $u_t$, observation $y_t$, the estimated bias function $h_{\tilde{\Theta}}^\star$, and the regression target $(\hat{x}_{t+1|t+1}, y_{t+1})$. Here $\tilde{\Theta}$ is the optimistic model used in the $t$-th sample. Then, inspired by the Bellman equation in~\eqref{eq:bellman} we can estimate the model by minimizing the following least-squares loss
\begin{align}
\label{equ: fullhist_estimated_model}
\hat{\Theta}_{k+1}^\prime = \argmin_{\Theta \in \gU_1} \sum_{E_t \in \gZ} \Big(\E_{\Theta, u_t}\left[h_{\tilde{\Theta}}^\star\left(\hat{x}^{\prime}_{t+1|t+1}, y^{\prime}_{t+1}\right) \mid \hat{x}_{t|t} \right] - h_{\tilde{\Theta}}^\star\left(\hat{x}_{t+1|t+1},y_{t+1}\right)\Big)^2,
\end{align}
where $\hat{x}^{\prime}_{t+1|t+1}, y^{\prime}_{t+1}$ denotes the random belief state and observation at step $t+1$. However, it requires the full history at step $t$ to compute the expectation in Eqn. (\ref{equ: fullhist_estimated_model}) as we show in Section \ref{subsec: bellman_equ}, which leads to an $O(H)$ intrinsic complexity (i.e., $\delta_{\set} = O(H)$). Then the sim-to-real gap of $\pi_{\mathrm{RT}}$ in Theorem \ref{thm: gap_rt} becomes $O(H)$, which is vacuous. Fortunately, we can use a \textit{clipped} history (Line \ref{arg: clipped_history} of Algorithm \ref{alg: lqg_vtr}) to compute an \textit{approximation} of the expectation. Let $f_{\Theta}(\hat{x}_{t|t}, u, \tilde{\Theta}) \defapprox \E_{\Theta, u}[h^{\star}_{\tilde{\Theta}}(\hat{x}^{\prime}_{t + 1|t + 1}, y^{\prime}_{t+1}) \mid \hat{x}_{t|t}]$ be the approximation of the expectation (see Section \ref{subsec: bellman_equ} for the formal definition), we use $f$ to estimate $\Theta^\star$ with dataset $\gZ$:
\begin{align}
\label{equ: cliphist_estimated_model}
\hat{\Theta}_{k+1} = \argmin_{\Theta \in \gU_1} \sum_{E_t \in \gZ} \left(f_{\Theta}\left(\hat{x}_{t|t}, u_t, \tilde{\Theta}\right) - h_{\tilde{\Theta}}^\star\left(\hat{x}_{t+1|t+1},y_{t+1}\right)\right)^2.    
\end{align}
With this estimator, we can construct the following confidence set,  
\begin{align}
\label{equ: confidence_set}
\gC\left(\hat{\Theta}_{k+1}, \gZ\right) = \left\{\Theta \in \gU_1: \sum_{E_t \in \gZ} \left(f_{\Theta}(\hat{x}_{t|t}, u_t, \tilde{\Theta}) - f_{\hat{\Theta}_{k+1}}(\hat{x}_{t|t}, u_t, \tilde{\Theta})\right)^2 \leq \beta \right\},
\end{align}

\begin{algorithm}[t]
\caption{LQG-VTR}
\label{alg: lqg_vtr}
  \begin{algorithmic}[1]
  \State Initialize: set model selection period length $T_w$ by Eqn. (\ref{equ: definition_tw}).
  \State ~\qquad\qquad set maximum state allowed $M_x = \bar{X}_1$ ($\bar{X}_1$ is defined in Lemma \ref{lem: bound_state}).
  %\State Initialize: the MDP set $\mathcal{U}_1 = \mathcal{U}$, episode counter $k=1$
  %\State Initialize: the history data set $\caZ = \emptyset$, $\caZ_{new} = \emptyset$
  %\State $\alpha = 4D^2+1$, $\beta = c D^{2} \log \left(H \cdot \mathcal{N}(\mathcal{F}, 1/H)\right)$ for a constant $c$.
   \State ~\qquad\qquad set maximum number of episode $\bar{\mathcal{K}}$ by Eqn. (\ref{equ: definition_bargk}).
   \State ~\qquad\qquad set $\psi$ by Lemma \ref{lem:switching}, $\beta$ by Eqn. (\ref{equ: definition_beta}), $l = O(\log (Hn+Hp))$.
  \State ~\qquad\qquad set $\gZ = \gZ_{new} = \emptyset$, initial state $x_0 \sim \gN(\bm{0}, I)$, episode $k = 1$.
  \State Compute $\gU_1 = \text{Model Selection}(T_w, \set)$ (Algorithm \ref{alg:model:selection}), $\tilde{\Theta}_1 = \argmin_{\Theta \in \gU_1} J^\star(\Theta)$. %(a warm up procedure, or a fine-tunning procedure) \jiachen{TODO}
  \For{step $t = T_w+1,\cdots, H$}
  \If {$\|\hat{x}_{t|t, \tilde{\Theta}_k}\|_2 > M_x$}
    \State Take $u_t = \bm{0}$ for the remaining steps and halt the algorithm.
    \label{alg: action_clipping}
  %(if the norm of $\tilde{u}_t$ exceeds the limit $M_u$, then clip the action).
  \EndIf
  \State Compute the optimal action under system $\tilde{\Theta}_k$: $u_t = -K(\tilde{\Theta}_k) \hat{x}_{t|t, \tilde{\Theta}_k}$.
  %\label{alg: action_clipping}
  \State Take action $u_t$ and observe $y_{t+1}$.
  \State Let $l_{clip} \defeq \min(l, t)$, define $\tau_t \defeq (y_t, u_{t - 1}, y_{t - 1}, u_{t - 2}, y_{t - 2}, ..., y_{t - l_{clip} + 1}, u_{t - l_{clip}})$.
  \label{arg: clipped_history}
  \State Add sample $E_t \defeq (\tau_t, \hat{x}_{t|t, \tilde{\Theta}_k}, \tilde{\Theta}_k, u_t, \hat{x}_{t+1|t+1, \tilde{\Theta}_k}, y_{t+1})$ to the set $\gZ_{new}$.
  \label{alg: sample_Et}
  \If{ importance score $\sup _{f_{1}, f_{2} \in \mathcal{F}} \frac{\left\|f_{1}-f_{2}\right\|_{\mathcal{Z}_{new}}^{2}}{\left\|f_{1}-f_{2}\right\|_{\mathcal{Z}}^{2}+\psi} \geq 1$ and $k < \bar{\mathcal{K}}$}
  \label{alg: importance_score}
  \State Add the history data $\gZ_{new}$ to the set $\gZ$, and reset $\gZ_{new} = \emptyset$.
  \State Calculate $\hat{\Theta}_{k+1}$ using Eqn. (\ref{equ: cliphist_estimated_model}). %\jiachen{specify that we use $\tilde{P}$ and $\tilde{C}$ as the short of $P(\tilde{\Theta})$ and $C(\tilde{\Theta})$ in the preliminary}
  
  %$$\hat{\Theta}_{k+1} = \argmin_{\Theta \in \gU_k} \sum_{E_t \in \gZ} \left(f_\Theta(\hat{x}_{t|t, \tilde{\Theta}_k}, u) - \hat{x}_{t+1|t+1, \tilde{\Theta}_k}^\top (\tilde{P} - \tilde{C}^\top Q \tilde{C}) \hat{x}_{t+1|t+1, \tilde{\Theta}_k} - y_{t+1}^\top Q y_{t+1}\right)^2$$
  
  \State Update the confidence set $\gU_{k+1} = \gC(\hat{\Theta}_{k+1}, \gZ)$ by Eqn. (\ref{equ: confidence_set}).
  
 % $$\gU_{k+1} = \left\{\Theta \in \gU_1:\sum_{E_t \in \gZ} \left(f_{\Theta}(\hat{x}_{t|t, \tilde{\Theta}_k}, u) - f_{\hat{\Theta}_{k+1}}(\hat{x}_{t|t, \tilde{\Theta}_k}, u)\right)^2 \leq \beta\right\}$$
  
  \State Compute $\tilde{\Theta}_{k+1} = \argmin_{\Theta \in \gU_{k+1}} J^\star(\Theta)$; episode counter $k = k+1$.
  \EndIf
  \EndFor
  \end{algorithmic}
\end{algorithm}

\noindent\textbf{Update with Low-switching Cost} \quad 
Since the regret bound of LQG-VTR has a linear dependency on the number of times the algorithm switches the control policies %the number of episodes 
(cf. Appendix~\ref{appendix: proof_reg_ovrall}), we have to ensure the low-switching property \citep{auer2008near,bai2019provably} of our algorithm. We follow the idea of \citet{kong2021online,chen2021understanding} that maintains two datasets $\gZ$ and $\gZ_{new}$, representing current data used in model regression and new incoming data. The importance score (Line \ref{alg: importance_score} of Algorithm \ref{alg: lqg_vtr}) %$\sup _{f_{1}, f_{2} \in \mathcal{F}} \|f_{1}-f_{2}\|_{\mathcal{Z}_{new}}^{2} / (\|f_{1}-f_{2}\|_{\mathcal{Z}}^{2}+\alpha)$
measures the importance of the data in $\gZ_{new}$ with respect to the data in $\gZ$. We only synchronize the current dataset with the new dataset and update the current policy when this value is greater than 1. % The number of episodes is guaranteed to have only poly-log dependence on $H$.

\iffalse
\begin{align}
\label{equ: func_class}
\gF & \defeq \left\{f_{\Theta} \mid \Theta \in \set\right\} \\
f_{\Theta} : \left(\R^p \times \R^m\right)^l & \times \R^n \times \R^m \times \set \to \R \quad \text{such that} \\
f_{\Theta}\left(\tau^l, \hat{x}_{\tilde{\Theta}}, u, \tilde{\Theta}\right) & \defeq \E\left[\left(\hat{x}_{\tilde{\Theta}}^\prime\right)^\top \left(\tilde{P} - \tilde{C}^\top Q \tilde{C}\right)\hat{x}_{\tilde{\Theta}}^\prime + y^\top_{\Theta} Q y_{\Theta}\right]
%\gF \defeq \left\{f_{\Theta}\left(\tau^l, \hat{x}_{\tilde{\Theta}}, u, \tilde{\Theta}\right) \defeq \E\left[\left(\hat{x}_{\tilde{\Theta}}^\prime\right)^\top \left(\tilde{P} - \tilde{C}^\top Q \tilde{C}\right)\hat{x}_{\tilde{\Theta}}^\prime + y^\top_{\Theta} Q y_{\Theta}\right] \mid \Theta \in \set\right\}
\end{align}
\fi

%We note that LQG-VTR can also be regarded as an efficient exploration algorithm in the LQG model with infinite-horizon average reward under general function approximation, which can be of independent interest. \han{move this to the next subsection?}

\subsection{Regret Bound of LQG-VTR}
\label{sec: regret}

%To begin with, we present the regret bound of LQG-VTR.

\begin{theorem}
\label{thm: reg_base}

Under Assumptions \ref{assump: stability}, \ref{assump: stablility_control}, and \ref{assump: stablility_kalman}, the regret (Eq.\eqref{eq:def:regret}) of LQG-VTR is bounded by 
\begin{align}
\tilde{O}\left(\sqrt{\delta_{\set}H}\right),
\end{align}
where the intrinsic model complexity $\delta_{\set}$ is defined as 
\begin{align}
\delta_{\set} \defeq \dim_E\left(\gF, 1/H\right) \log(\gN\left(\gF, 1/H\right)) \|\gF\|_{\infty}^2. 
\end{align}
Here we use function class $\gF$ to capture the complexity of $\set$, which is defined in Section \ref{subsec: bellman_equ}. $\dim_E(\gF, 1/H)$ denotes the $1/H$-Eluder dimension of $\gF$, $\gN\left(\gF, 1/H\right)$ is the $1/H$-covering number of $\gF$. The definitions of Eluder dimension and covering number are deferred to Appendix \ref{appendix:definiton}. %the $l_\infty$ norm of $\gF$ is $\|\gF\|_{\infty} = \max_{f \in \gF} \|f\|_{\infty}$.
\end{theorem}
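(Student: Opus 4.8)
The plan is to follow the value-targeted-regression (VTR) template, adapted to the partially observed average-cost setting, in four stages: stabilization, validity of the confidence sets (optimism), a Bellman-based regret decomposition, and an Eluder-dimension plus low-switching summation. In the first $T_w$ steps LQG-VTR runs the model-selection subroutine (Algorithm \ref{alg:model:selection}) with random inputs; by the system-identification guarantee this prunes $\gU_1$ so that every surviving model --- in particular every $\tilde{\Theta}_k$ used afterwards --- keeps the Kalman-filtered belief states bounded by $\bar{X}_1$ with high probability (Lemma \ref{lem: bound_state}). Consequently the halting test on Line \ref{alg: action_clipping} never fires, every quantity entering the analysis (belief states, observations, the bias values $h^\star_{\tilde{\Theta}_k}$, and the clipped-history predictors $f_\Theta$) is $O(\|\gF\|_\infty)$, and the regret over $[0,T_w]$ is $O(T_w\|\gF\|_\infty^2)=\tilde{O}(\mathrm{poly})$, which is absorbed into the final bound because it carries no polynomial dependence on $H$.

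Next I would show $\Theta^\star\in\gU_k$ for every episode, with high probability. The regression target $h^\star_{\tilde{\Theta}_k}(\hat{x}_{t+1|t+1},y_{t+1})$ in \eqref{equ: cliphist_estimated_model} is generated by the true system $\Theta^\star$, and its conditional expectation given the full history equals the $\Theta^\star$-side of the Bellman identity \eqref{eq:bellman}; the clipped-history predictor $f_{\Theta^\star}(\hat{x}_{t|t},u_t,\tilde{\Theta}_k)$ approximates this conditional expectation up to an error of order $\gamma_2^{\,l}$, which is precisely where Assumption \ref{assump: stablility_kalman} (strong stability of $A-F(\Theta)C$, i.e.\ exponential forgetting of the Kalman filter) enters, and the choice $l=O(\log(Hn+Hp))$ makes this bias $O(1/\mathrm{poly}(H))$ uniformly over $\set$. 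Hence the regression residuals form a bounded martingale-difference sequence plus negligible bias, and a standard self-normalized concentration argument over the $1/H$-cover of $\gF$ (with a union bound) yields $\sum_{E_t\in\gZ}(f_{\Theta^\star}-f_{\hat{\Theta}_{k+1}})^2\le\beta$ for $\beta$ as in \eqref{equ: definition_beta}, i.e.\ $\Theta^\star\in\gU_{k+1}$. Optimism, $J^\star(\tilde{\Theta}_k)\le J^\star(\Theta^\star)$, then follows from $\tilde{\Theta}_k=\argmin_{\Theta\in\gU_k}J^\star(\Theta)$.

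For $t>T_w$ the agent plays the optimal control of $\tilde{\Theta}_k$ and tracks the belief state under $\tilde{\Theta}_k$, so the Bellman equation \eqref{eq:bellman} for $\tilde{\Theta}_k$ gives $c_t-J^\star(\tilde{\Theta}_k)=h^\star_{\tilde{\Theta}_k}(\hat{x}_{t|t},y_t)-\E_{\tilde{\Theta}_k,u_t}[h^\star_{\tilde{\Theta}_k}(\hat{x}'_{t+1|t+1},y'_{t+1})\mid\hat{x}_{t|t}]$. Writing $c_t-J^\star(\Theta^\star)=(c_t-J^\star(\tilde{\Theta}_k))+(J^\star(\tilde{\Theta}_k)-J^\star(\Theta^\star))$ and dropping the last term by optimism, I split the remainder into a within-episode telescoping sum of $h^\star_{\tilde{\Theta}_k}(\hat{x}_{t|t},y_t)-h^\star_{\tilde{\Theta}_k}(\hat{x}_{t+1|t+1},y_{t+1})$ (boundary terms $O(\|\gF\|_\infty)$ per episode, so $\tilde{O}(\bar{\mathcal{K}}\|\gF\|_\infty)$ in total), a martingale-difference term (controlled by $\tilde{O}(\|\gF\|_\infty\sqrt{H})$ via Azuma--Hoeffding), the clipping bias ($O(1)$), and the key estimation term $f_{\Theta^\star}(\hat{x}_{t|t},u_t,\tilde{\Theta}_k)-f_{\tilde{\Theta}_k}(\hat{x}_{t|t},u_t,\tilde{\Theta}_k)$. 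Since both $\Theta^\star$ and $\tilde{\Theta}_k$ lie in $\gC(\hat{\Theta}_k,\gZ)$, the triangle inequality gives $\sum_{E_t\in\gZ}(f_{\Theta^\star}-f_{\tilde{\Theta}_k})^2\le 4\beta$, and the low-switching rule (importance score, Lemma \ref{lem:switching}) guarantees the data collected under $\tilde{\Theta}_k$ but not yet in $\gZ$ has bounded importance; a standard potential/pigeonhole argument over $\dim_E(\gF,1/H)$ then converts this into $\sum_{t>T_w}|f_{\Theta^\star}-f_{\tilde{\Theta}_k}|=\tilde{O}(\sqrt{\dim_E(\gF,1/H)\,\beta\,H})$, and the same argument caps $\bar{\mathcal{K}}=\tilde{O}(\dim_E(\gF,1/H)\log\gN(\gF,1/H))$. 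Substituting $\beta=\tilde{O}(\log\gN(\gF,1/H)\|\gF\|_\infty^2)$ makes the dominant term $\tilde{O}(\sqrt{\dim_E(\gF,1/H)\log\gN(\gF,1/H)\|\gF\|_\infty^2\,H})=\tilde{O}(\sqrt{\delta_\set H})$ and all other terms lower order, which is the claimed regret bound.

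I expect the main obstacle to be the interplay, in the second and third stages, of two features absent from the usual VTR analysis: the bias function $h^\star_\Theta$ is quadratic and hence a priori unbounded, so the entire argument is conditional on the stabilization step genuinely delivering uniformly bounded belief states under the \emph{mis-specified} model $\tilde{\Theta}_k$ while data is generated by $\Theta^\star$; and the conditional expectation of the regression target truly depends on the infinite past through the Kalman filter, so one must prove the clipped-history predictor $f_\Theta$ is an $O(\gamma_2^{\,l})$-accurate surrogate and then track how this approximation error propagates through both the confidence-set construction and the telescoping decomposition. Quantifying this forgetting rate uniformly over $\set$ from Assumption \ref{assump: stablility_kalman}, and controlling the belief-state growth via the model-selection guarantee, are the technically delicate parts; by contrast the Eluder/low-switching bookkeeping in the last stage is essentially routine once $\gF$ has been shown to be a bounded, finite-complexity function class.
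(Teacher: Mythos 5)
Your proposal follows essentially the same route as the paper's proof: model selection to guarantee bounded belief states (Lemma \ref{lem: bound_state}), boundedness of $\gF$, optimism of the confidence set with the clipping bias folded into $\beta$, a Bellman/optimism regret decomposition into telescoping boundary terms, a martingale part, the clipping error (killed by $l=O(\log(Hn+Hp))$), and the $f_{\Theta^\star}-f_{\tilde\Theta_k}$ estimation term summed via the Eluder-dimension pigeonhole together with the low-switching bound on $\bar{\mathcal{K}}$. The only piece the paper adds that you gloss over is the final conversion from a high-probability bound to the expected regret in Eqn.~\eqref{eq:def:regret} via Gaussian tail bounds on the low-probability failure event (and your decay rate should read $(1-\gamma_2)^l$ rather than $\gamma_2^l$), but these are routine bookkeeping points.
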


%The model complexity $\delta_{\set}$ is then defined as $\dim_E(\gF, 1/H) \log(\gN(\gF, 1/H))$. 
%Under Assumptions~\ref{assump: stability}, \ref{assump: stablility_control}, and \ref{assump: stablility_kalman}, we can show that $\delta_{\set}$ is on the order of $O(n^6 \log(H))$ (\jiachen{requires $m = \Theta(n)$}) in general thanks to our history clipping history technique. In contrast, it can be $O(H)$ without history clipping, and leads to a vague $O(H)$ sim-to-real gap of $\pi_{\mathrm{RT}}$. The $O(\sqrt{n^6 H \log (H)})$ regret also improves the previous results \citep{lale2021adaptive}, which is at least $\tilde{O}(n^6p (\sqrt{n} + \sqrt{m} + \sqrt{p})^6 \sqrt{H})$ (\jiachen{check this!}). The sim-to-real gap can be further reduced when the simulator class $\set$ enjoys nice properties (e.g., the transition dynamics have low-rank properties).

Under Assumptions~\ref{assump: stability}, \ref{assump: stablility_control}, and \ref{assump: stablility_kalman}, we obtain a $\sqrt{H}$-regret as desired because $\delta_{\set}$ is at most $\tilde{O}(\mathrm{poly}(m, n, p))$ (cf. Appendix~\ref{appendix:complexity}). Notably, if we do not adopt the history clipping technique, $\delta_\set$ will be linear in $H$, resulting in the regret bound becoming vacuous. Compared with existing works \citep{simchowitz2020improper,lale2020regret,lale2021adaptive}, we achieve a $\sqrt{H}$-regret bound for general model class $\set$ (depends on the intrinsic complexity of $\set$) with weaker assumptions and cleaner analysis. Theorem \ref{thm: gap_rt} is directly implied by Theorem \ref{thm: reg_base} and the reduction in \eqref{equ: reduction}.

\subsection{Construction of the Function Class $\gF$}
\label{subsec: bellman_equ}

In this subsection, we present the intuition and construction of $\mathcal{F}$. To begin with, recall that the optimal bias function $h^\star_{\tilde{\Theta}}$ of the optimistic system $\tilde{\Theta}$ is 
$$ h^\star_{\tilde{\Theta}}\left(\hat{x}_{t+1|t+1, \tilde{\Theta}}, y_{t+1}\right) = \hat{x}_{t+1|t+1, \tilde{\Theta}}^\top \left(\tilde{P} - \tilde{C}^\top Q \tilde{C}\right) \hat{x}_{t+1|t+1, \tilde{\Theta}} + y_{t+1}^{\top} Q y_{t+1}, $$
where $(\tilde{P}, \tilde{C})$ are the parameters with respect to $\tilde{\Theta}$. Given any underlying system $\Theta$, we know the next step observation $y_{t+1, \Theta}$ given belief state $\hat{x}_{t|t, \Theta}$ and action $u$ is defined as
$$ y_{t+1, \Theta} = \gY\left(\hat{x}_{t|t, \Theta}, u, \Theta\right) \defeq CA\hat{x}_{t|t, \Theta} + CBu + Cw_t + CA(x_t - \hat{x}_{t|t, \Theta}) + z_{t+1}. $$
Here we use $\gY$ to denote the stochastic process that generates $y_{t+1, \Theta}$ under system $\Theta$. %$\hat{x}_{t|t, \Theta}$ is the belief state under $\Theta$. 

%As we are performing optimal control policy of the optimistic model $\tilde{\Theta}$, Now given the next step observation $y_{t+1, \Theta}$ and current belief state $\hat{x}_{t+1|t+1, \tilde{\Theta}}$ under the optimistic model $\tilde{\Theta}$, we are able to compute next step belief state 

Thus, we can define function $f^{\prime}$ to be the one-step Bellman backup of the optimal bias function $h^\star_{\tilde{\Theta}}$ \textit{under the transition of the underlying system $\Theta$}:
\iffalse
\begin{equation}
\begin{aligned}
f'_{\Theta}\left(\tau^t, \hat{x}_{t|t, \tilde{\Theta}}, u, \tilde{\Theta}\right) & \defeq \E_{w_t, z_{t+1}}\left[\hat{x}_{t+1|t+1, \tilde{\Theta}}^\top \left(\tilde{P} - \tilde{C}^\top Q \tilde{C}\right) \hat{x}_{t+1|t+1, \tilde{\Theta}} + y_{t+1, \Theta}^{\top} Q y_{t+1, \Theta}\right], \\
\text{where} \quad  y_{t+1, \Theta} &= CA\hat{x}_{t|t, \Theta} + CBu + Cw_t + CA(x_t - \hat{x}_{t|t, \Theta}) + z_{t+1} , \\
 \hat{x}_{t+1|t+1, \tilde{\Theta}} &= \left(I - \tilde{L} \tilde{C}\right)\left(\tilde{A} \hat{x}_{t|t, \tilde{\Theta}} + \tilde{B} u\right) + \tilde{L} y_{t+1, \Theta} , \notag
\end{aligned}
\end{equation}
\fi
\begin{equation}
\begin{aligned}
f'_{\Theta}\left(\tau^t, \hat{x}_{t|t, \tilde{\Theta}}, u, \tilde{\Theta}\right) \defeq \E\left[h^\star_{\tilde{\Theta}}\left(\hat{x}_{t+1|t+1, \tilde{\Theta}}, y_{t+1,\Theta}\right)\right], ~~ y_{t+1, \Theta} = \gY\left(\hat{x}_{t|t, \Theta}, u, \Theta\right).
\end{aligned}
\end{equation}
The next step belief state $\hat{x}_{t+1|t+1, \tilde{\Theta}}$ is determined by the Kalman filter as 
\begin{align}
\label{equ: next_belief_state}
\hat{x}_{t+1|t+1, \tilde{\Theta}} = \left(I - \tilde{L} \tilde{C}\right)\left(\tilde{A} \hat{x}_{t|t, \tilde{\Theta}} + \tilde{B} u\right) + \tilde{L} y_{t+1, \Theta}.    
\end{align}
%\chij{explain why the first equation is Bellman backup, try to simplify the notation.}

%, $x_t$ and $\hat{x}_{t|t,\Theta}$ denotes the underlying state and belief state under system $\Theta$. 
However, we need the full history $\tau^t$ to compute $\hat{x}_{t|t, \Theta}$ as shown below, which is unacceptable. To mitigate this problem, we compute the \emph{approximate belief state} $\hat{x}^c_{t|t, \Theta}$ with clipped length-$l$ history $\tau^l = \{u_{t-l}, y_{t-l+1}, \cdots, u_{t - 1}, y_t\}$:
%As we can see $f'_{\Theta}(\tau^l, \hat{x}_{t|t, \tilde{\Theta}}, u, \tilde{\Theta})$  is the Bellman backup of the optimal bias function $h^\star_{\tilde{\Theta}}$ \textit{under the transition of the underlying system $\Theta$}, and $f$ is an approximate of $f'$. We use "approximate" here because we need $\hat{x}_{t|t, \Theta}$ to compute $y_{t+1, \Theta}$, but it requires the full history $\tau^t$ up to time $t$ to compute $\hat{x}_{t|t, \Theta}$, which is too expensive. Therefore, we compute the approximation $\hat{x}^c_{t|t, \Theta}$ with clipped history $\tau^l$, such that
\begin{align}
\hat{x}_{t|t, \Theta} & = \left(I - LC\right)A \hat{x}_{t - 1|t - 1, \Theta} + \left(I - LC\right)B u_{t - 1} + L y_t \quad \text{(require full history $\tau^t$ to compute $\hat{x}_{t|t, \Theta}$)} \notag\\
& = \left(A - LCA\right)^l \hat{x}_{t - l|t - l, \Theta} + \underbrace{\sum_{s = 1}^l \left(A - LCA\right)^{s - 1} \left((I - LC)B u_{t - s} + L y_{t - s + 1}\right)}_{\hat{x}_{t|t, \Theta}^c }.
\end{align}
Thanks to Assumption \ref{assump: stablility_kalman}, we can show that $\left(A - LCA\right)^l \hat{x}_{t - l|t - l, \Theta}$ is a small term whose $\ell_2$ norm is bounded by $O(\kappa_2 (1-\gamma_2)^l)$ since $\|(I-LC)A\|_2=\|A(I-LC)\|_2$. Therefore, $\hat{x}^c_{t|t, \Theta}$ will be a good approximation of $\hat{x}_{t | t, \Theta}$, helping us control the error of clipping the history. 

With the above observation, we formally define $\mathcal{F}$, which is an approximation by replacing the full history with the clipped history. Let the domain of $\mathcal{F}$ be $\gX \defeq \left(\gB^m_{\bar{U}} \times \gB^p_{\bar{Y}}\right)^l \times \gB^n_{\bar{X}_1} \times \gB^m_{\bar{U}} \times \set$, where $\gB^d_v = \{x \in \R^d: \|x\|_2 \leq v\}$ for any $(d, v) \in \mathbb{N} \times \mathbb{R}$. Here $\bar{U}, \bar{Y}$ and $\bar{X}_1$ will be specified in Lemma \ref{lem: bound_state}. We define $\gF$ formally as follows: %consider the function $f$ with regards to the underlying system $\Theta = (A, B, C) \in \set$
%\begin{equation}
\iffalse
\#\label{equ: definition_f}
%\begin{aligned}
\mathcal{F} & \defeq \{f_\Theta: \gX \to \R \mid \Theta \in \set\},  \notag \\
f_{\Theta}\left(\tau^l, \hat{x}_{t|t, \tilde{\Theta}}, u, \tilde{\Theta}\right) & \defeq \E_{w_t, z_{t+1}}\left[\hat{x}_{t+1|t+1, \tilde{\Theta}}^{c \top} \left(\tilde{P} - \tilde{C}^\top Q \tilde{C}\right) \hat{x}^c_{t+1|t+1, \tilde{\Theta}} + y_{t+1, \Theta}^{c \top} Q y^c_{t+1, \Theta}\right], \notag \\
\text{where} \quad  y^c_{t+1, \Theta} &= CA\hat{x}^c_{t|t, \Theta} + CBu + Cw_t + CA(x_t - \hat{x}_{t|t, \Theta}) + z_{t+1}, \\
 \hat{x}^c_{t+1|t+1, \tilde{\Theta}} &= \left(I - \tilde{L} \tilde{C}\right)\left(\tilde{A} \hat{x}_{t|t, \tilde{\Theta}} + \tilde{B} u\right) + \tilde{L} y^c_{t+1, \Theta} . \notag%y^c_{t+1, \Theta} &= CA\hat{x}^c_{t|t, \Theta} + CBu + \underbrace{Cw_t + CA(x_t - \hat{x}_{t|t, \Theta}) + z_{t+1}}_{e_t}, \\
%\text{and} \quad & \hat{x}_{t|t} = (I - L)A \hat{x}_{t - 1|t - 1} + (I - L)B u_{t - 1} + L y_t \\
%& = (A - LA)^l \hat{x}_{t - l|t - l} + \sum_{s = 1}^l (A - LA)^{s - 1} \left((I - L)B u_{t - s} + L y_{t - s + 1}\right) \\
%& \defeq \left \langle \Phi(\Theta), \tau^l_t \right \rangle + e^c, \Phi(\Theta) \in \dbR^{n \times l(m+n)} \\
%& \defeq \hat{x}_{t|t}^c + e^c \\
%\text{and} \quad & \hat{x}_{t+1|t+1, \tilde{\Theta}} = \left(I - \tilde{L}\right)\left(\tilde{A} \hat{x}_{t|t, \tilde{\Theta}} + \tilde{B} u\right) + \tilde{L} y_{t+1, \Theta} \\
%\text{and} \quad & e_t \sim \gN(0, C \Sigma C^\top + I)
%\end{aligned}
%\end{equation}
\#
\fi
\#\label{equ: definition_f}
%\begin{aligned}
\mathcal{F} & \defeq \{f_\Theta: \gX \to \R \mid \Theta \in \set\},  \notag \\
f_{\Theta}\left(\tau^l, \hat{x}_{t|t, \tilde{\Theta}}, u, \tilde{\Theta}\right) & \defeq \E\left[h^\star_{\tilde{\Theta}}\left(\hat{x}_{t+1|t+1, \tilde{\Theta}}, y^c_{t+1,\Theta}\right)\right], ~~ y^c_{t+1, \Theta} = \gY\left(\hat{x}^c_{t|t, \Theta}, u, \Theta\right).
\#

Here $\hat{x}_{t+1|t+1, \tilde{\Theta}}$ is determined similarly as in Eqn. (\ref{equ: next_belief_state}), where the only difference is the next observation $y_{t+1, \Theta}$ replaced by $y^c_{t+1, \Theta}$.

%\chij{Again try to simplify the notation, it seems to me most definitions are the same to the first set of equation in this subsection except for replacing $\hat{x}$ by its clipped version. If this is case, why not just define this in words, without repeating the complicated equations again.}

%From the innovation form of LQG \citep{zheng2021sample,lale2021adaptive}, the randomness of $w_t$ and $z_{t+1}$ can be summarized by $e_t \sim \gN(0, C \Sigma(\Theta) C^\top + I)$.

%\input{iclr2023/regret}

\section{Conclusion}
In this paper, we make the first attempt to study the sim-to-real gap in continuous domain with partial observations. We show that the output policy of the robust adversarial algorithm enjoys the near-optimal sim-to-real gap, which only depends on the intrinsic complexity of the simulator class. To prove our theoretical results, we design a novel algorithm for infinite-horizon LQG and establish a sharp regret bound. Our algorithm features a history clipping mechanism. This work opens up several directions: for example, can we extend our results to the non-linear dynamics system? Moreover, though the robust adversarial training algorithm is proved to be efficient, is it possible to find or design provably efficient algorithms that enjoy better computational efficiency in the training stage?
%\chij{what does this sentence mean? are we saying Robust Adversarial Training algorithm is not efficient? (I don't agree if we mean "efficiency" in reducing sim-to-real gap, as our entire paper is saying this algorithm is good)  do you mean computational efficient or something else. Be specific.}
We leave them to future investigations.

%\newpage
\bibliography{ref.bib}

\begin{thebibliography}{60}
\providecommand{\natexlab}[1]{#1}
\providecommand{\url}[1]{\texttt{#1}}
\expandafter\ifx\csname urlstyle\endcsname\relax
  \providecommand{\doi}[1]{doi: #1}\else
  \providecommand{\doi}{doi: \begingroup \urlstyle{rm}\Url}\fi

\bibitem[Abbasi-Yadkori and Szepesv{\'a}ri(2011)]{abbasi2011regret}
Yasin Abbasi-Yadkori and Csaba Szepesv{\'a}ri.
\newblock Regret bounds for the adaptive control of linear quadratic systems.
\newblock In \emph{Proceedings of the 24th Annual Conference on Learning
  Theory}, pages 1--26. JMLR Workshop and Conference Proceedings, 2011.

\bibitem[Auer et~al.(2008)Auer, Jaksch, and Ortner]{auer2008near}
Peter Auer, Thomas Jaksch, and Ronald Ortner.
\newblock Near-optimal regret bounds for reinforcement learning.
\newblock \emph{Advances in neural information processing systems}, 21, 2008.

\bibitem[Ayoub et~al.(2020)Ayoub, Jia, Szepesvari, Wang, and
  Yang]{ayoub2020model}
Alex Ayoub, Zeyu Jia, Csaba Szepesvari, Mengdi Wang, and Lin Yang.
\newblock Model-based reinforcement learning with value-targeted regression.
\newblock In \emph{International Conference on Machine Learning}, pages
  463--474. PMLR, 2020.

\bibitem[Bai et~al.(2019)Bai, Xie, Jiang, and Wang]{bai2019provably}
Yu~Bai, Tengyang Xie, Nan Jiang, and Yu-Xiang Wang.
\newblock Provably efficient q-learning with low switching cost.
\newblock \emph{Advances in Neural Information Processing Systems}, 32, 2019.

\bibitem[Caines and Mayne(1970)]{caines1970discrete}
Peter~E Caines and David~Q Mayne.
\newblock On the discrete time matrix riccati equation of optimal control.
\newblock \emph{International Journal of Control}, 12\penalty0 (5):\penalty0
  785--794, 1970.

\bibitem[Chan et~al.(1984)Chan, Goodwin, and Sin]{chan1984convergence}
Siew Chan, GC~Goodwin, and Kwai Sin.
\newblock Convergence properties of the riccati difference equation in optimal
  filtering of nonstabilizable systems.
\newblock \emph{IEEE Transactions on Automatic Control}, 29\penalty0
  (2):\penalty0 110--118, 1984.

\bibitem[Chen et~al.(2021)Chen, Hu, Jin, Li, and Wang]{chen2021understanding}
Xiaoyu Chen, Jiachen Hu, Chi Jin, Lihong Li, and Liwei Wang.
\newblock Understanding domain randomization for sim-to-real transfer.
\newblock \emph{arXiv preprint arXiv:2110.03239}, 2021.

\bibitem[Chen and Hazan(2021)]{chen2021black}
Xinyi Chen and Elad Hazan.
\newblock Black-box control for linear dynamical systems.
\newblock In \emph{Conference on Learning Theory}, pages 1114--1143. PMLR,
  2021.

\bibitem[Christiano et~al.(2016)Christiano, Shah, Mordatch, Schneider,
  Blackwell, Tobin, Abbeel, and Zaremba]{christiano2016transfer}
Paul Christiano, Zain Shah, Igor Mordatch, Jonas Schneider, Trevor Blackwell,
  Joshua Tobin, Pieter Abbeel, and Wojciech Zaremba.
\newblock Transfer from simulation to real world through learning deep inverse
  dynamics model.
\newblock \emph{arXiv preprint arXiv:1610.03518}, 2016.

\bibitem[Cohen et~al.(2018)Cohen, Hasidim, Koren, Lazic, Mansour, and
  Talwar]{cohen2018online}
Alon Cohen, Avinatan Hasidim, Tomer Koren, Nevena Lazic, Yishay Mansour, and
  Kunal Talwar.
\newblock Online linear quadratic control.
\newblock In \emph{International Conference on Machine Learning}, pages
  1029--1038. PMLR, 2018.

\bibitem[Cohen et~al.(2019)Cohen, Koren, and Mansour]{cohen2019learning}
Alon Cohen, Tomer Koren, and Yishay Mansour.
\newblock Learning linear-quadratic regulators efficiently with only $ \sqrt{T}
  $ regret.
\newblock In \emph{International Conference on Machine Learning}, pages
  1300--1309. PMLR, 2019.

\bibitem[Dennis et~al.(2020)Dennis, Jaques, Vinitsky, Bayen, Russell, Critch,
  and Levine]{dennis2020emergent}
Michael Dennis, Natasha Jaques, Eugene Vinitsky, Alexandre Bayen, Stuart
  Russell, Andrew Critch, and Sergey Levine.
\newblock Emergent complexity and zero-shot transfer via unsupervised
  environment design.
\newblock \emph{Advances in neural information processing systems},
  33:\penalty0 13049--13061, 2020.

\bibitem[Feng et~al.(2019)Feng, Yin, and Yang]{feng2019does}
Fei Feng, Wotao Yin, and Lin~F Yang.
\newblock How does an approximate model help in reinforcement learning?
\newblock \emph{arXiv preprint arXiv:1912.02986}, 2019.

\bibitem[Goyal and Grand-Clement(2022)]{goyal2022robust}
Vineet Goyal and Julien Grand-Clement.
\newblock Robust markov decision processes: Beyond rectangularity.
\newblock \emph{Mathematics of Operations Research}, 2022.

\bibitem[Ho et~al.(2018)Ho, Petrik, and Wiesemann]{ho2018fast}
Chin~Pang Ho, Marek Petrik, and Wolfram Wiesemann.
\newblock Fast bellman updates for robust mdps.
\newblock In \emph{International Conference on Machine Learning}, pages
  1979--1988. PMLR, 2018.

\bibitem[Iyengar(2005)]{iyengar2005robust}
Garud~N Iyengar.
\newblock Robust dynamic programming.
\newblock \emph{Mathematics of Operations Research}, 30\penalty0 (2):\penalty0
  257--280, 2005.

\bibitem[Jiang(2018)]{jiang2018pac}
Nan Jiang.
\newblock Pac reinforcement learning with an imperfect model.
\newblock In \emph{Proceedings of the AAAI Conference on Artificial
  Intelligence}, volume~32, 2018.

\bibitem[Jin et~al.(2021)Jin, Liu, and Miryoosefi]{jin2021bellman}
Chi Jin, Qinghua Liu, and Sobhan Miryoosefi.
\newblock Bellman eluder dimension: New rich classes of rl problems, and
  sample-efficient algorithms.
\newblock \emph{Advances in neural information processing systems},
  34:\penalty0 13406--13418, 2021.

\bibitem[Kalman(1960)]{kalman1960new}
Rudolph~Emil Kalman.
\newblock A new approach to linear filtering and prediction problems.
\newblock 1960.

\bibitem[Kober et~al.(2013)Kober, Bagnell, and Peters]{kober2013reinforcement}
Jens Kober, J~Andrew Bagnell, and Jan Peters.
\newblock Reinforcement learning in robotics: A survey.
\newblock \emph{The International Journal of Robotics Research}, 32\penalty0
  (11):\penalty0 1238--1274, 2013.

\bibitem[Kong et~al.(2021)Kong, Salakhutdinov, Wang, and Yang]{kong2021online}
Dingwen Kong, Ruslan Salakhutdinov, Ruosong Wang, and Lin~F Yang.
\newblock Online sub-sampling for reinforcement learning with general function
  approximation.
\newblock \emph{arXiv preprint arXiv:2106.07203}, 2021.

\bibitem[Kristinsson and Dumont(1992)]{kristinsson1992system}
Kristinn Kristinsson and Guy~Albert Dumont.
\newblock System identification and control using genetic algorithms.
\newblock \emph{IEEE Transactions on Systems, Man, and Cybernetics},
  22\penalty0 (5):\penalty0 1033--1046, 1992.

\bibitem[Kuang et~al.(2022)Kuang, Lu, Wang, Zhou, Li, and
  Li]{kuang2022learning}
Yufei Kuang, Miao Lu, Jie Wang, Qi~Zhou, Bin Li, and Houqiang Li.
\newblock Learning robust policy against disturbance in transition dynamics via
  state-conservative policy optimization.
\newblock In \emph{Proceedings of the AAAI Conference on Artificial
  Intelligence}, volume~36, pages 7247--7254, 2022.

\bibitem[Lale et~al.(2020{\natexlab{a}})Lale, Azizzadenesheli, Hassibi, and
  Anandkumar]{lale2020explore}
Sahin Lale, Kamyar Azizzadenesheli, Babak Hassibi, and Anima Anandkumar.
\newblock Explore more and improve regret in linear quadratic regulators.
\newblock \emph{arXiv preprint arXiv:2007.12291}, 2020{\natexlab{a}}.

\bibitem[Lale et~al.(2020{\natexlab{b}})Lale, Azizzadenesheli, Hassibi, and
  Anandkumar]{lale2020logarithmic}
Sahin Lale, Kamyar Azizzadenesheli, Babak Hassibi, and Anima Anandkumar.
\newblock Logarithmic regret bound in partially observable linear dynamical
  systems.
\newblock \emph{Advances in Neural Information Processing Systems},
  33:\penalty0 20876--20888, 2020{\natexlab{b}}.

\bibitem[Lale et~al.(2020{\natexlab{c}})Lale, Azizzadenesheli, Hassibi, and
  Anandkumar]{lale2020regret}
Sahin Lale, Kamyar Azizzadenesheli, Babak Hassibi, and Anima Anandkumar.
\newblock Regret minimization in partially observable linear quadratic control.
\newblock \emph{arXiv preprint arXiv:2002.00082}, 2020{\natexlab{c}}.

\bibitem[Lale et~al.(2021)Lale, Azizzadenesheli, Hassibi, and
  Anandkumar]{lale2021adaptive}
Sahin Lale, Kamyar Azizzadenesheli, Babak Hassibi, and Anima Anandkumar.
\newblock Adaptive control and regret minimization in linear quadratic gaussian
  (lqg) setting.
\newblock In \emph{2021 American Control Conference (ACC)}, pages 2517--2522.
  IEEE, 2021.

\bibitem[Lale et~al.(2022)Lale, Azizzadenesheli, Hassibi, and
  Anandkumar]{lale2022reinforcement}
Sahin Lale, Kamyar Azizzadenesheli, Babak Hassibi, and Animashree Anandkumar.
\newblock Reinforcement learning with fast stabilization in linear dynamical
  systems.
\newblock In \emph{International Conference on Artificial Intelligence and
  Statistics}, pages 5354--5390. PMLR, 2022.

\bibitem[Ma and Braatz(2001)]{ma2001worst}
David~L Ma and Richard~D Braatz.
\newblock Worst-case analysis of finite-time control policies.
\newblock \emph{IEEE Transactions on Control Systems Technology}, 9\penalty0
  (5):\penalty0 766--774, 2001.

\bibitem[Ma et~al.(1999)Ma, Chung, and Braatz]{ma1999worst}
David~L Ma, Serena~H Chung, and Richard~D Braatz.
\newblock Worst-case performance analysis of optimal batch control
  trajectories.
\newblock \emph{AIChE journal}, 45\penalty0 (7):\penalty0 1469--1476, 1999.

\bibitem[Mandlekar et~al.(2017)Mandlekar, Zhu, Garg, Fei-Fei, and
  Savarese]{mandlekar2017adversarially}
Ajay Mandlekar, Yuke Zhu, Animesh Garg, Li~Fei-Fei, and Silvio Savarese.
\newblock Adversarially robust policy learning: Active construction of
  physically-plausible perturbations.
\newblock In \emph{2017 IEEE/RSJ International Conference on Intelligent Robots
  and Systems (IROS)}, pages 3932--3939. IEEE, 2017.

\bibitem[Mania et~al.(2019)Mania, Tu, and Recht]{mania2019certainty}
Horia Mania, Stephen Tu, and Benjamin Recht.
\newblock Certainty equivalent control of lqr is efficient.
\newblock \emph{arXiv preprint arXiv:1902.07826}, 2019.

\bibitem[Mankowitz et~al.(2019)Mankowitz, Levine, Jeong, Shi, Kay, Abdolmaleki,
  Springenberg, Mann, Hester, and Riedmiller]{mankowitz2019robust}
Daniel~J Mankowitz, Nir Levine, Rae Jeong, Yuanyuan Shi, Jackie Kay, Abbas
  Abdolmaleki, Jost~Tobias Springenberg, Timothy Mann, Todd Hester, and Martin
  Riedmiller.
\newblock Robust reinforcement learning for continuous control with model
  misspecification.
\newblock \emph{arXiv preprint arXiv:1906.07516}, 2019.

\bibitem[Mnih et~al.(2015)Mnih, Kavukcuoglu, Silver, Rusu, Veness, Bellemare,
  Graves, Riedmiller, Fidjeland, Ostrovski, et~al.]{mnih2015human}
Volodymyr Mnih, Koray Kavukcuoglu, David Silver, Andrei~A Rusu, Joel Veness,
  Marc~G Bellemare, Alex Graves, Martin Riedmiller, Andreas~K Fidjeland, Georg
  Ostrovski, et~al.
\newblock Human-level control through deep reinforcement learning.
\newblock \emph{nature}, 518\penalty0 (7540):\penalty0 529--533, 2015.

\bibitem[Morimoto and Doya(2005)]{morimoto2005robust}
Jun Morimoto and Kenji Doya.
\newblock Robust reinforcement learning.
\newblock \emph{Neural computation}, 17\penalty0 (2):\penalty0 335--359, 2005.

\bibitem[OpenAI et~al.(2018)OpenAI, Andrychowicz, Baker, Chociej, Józefowicz,
  McGrew, Pachocki, Petron, Plappert, Powell, Ray, Schneider, Sidor, Tobin,
  Welinder, Weng, and Zaremba]{openai2018learning}
OpenAI, Marcin Andrychowicz, Bowen Baker, Maciek Chociej, Rafał Józefowicz,
  Bob McGrew, Jakub Pachocki, Arthur Petron, Matthias Plappert, Glenn Powell,
  Alex Ray, Jonas Schneider, Szymon Sidor, Josh Tobin, Peter Welinder, Lilian
  Weng, and Wojciech Zaremba.
\newblock Learning dexterous in-hand manipulation.
\newblock \emph{CoRR}, 2018.
\newblock URL \url{http://arxiv.org/abs/1808.00177}.

\bibitem[Osband and Van~Roy(2014)]{osband2014model}
Ian Osband and Benjamin Van~Roy.
\newblock Model-based reinforcement learning and the eluder dimension.
\newblock \emph{Advances in Neural Information Processing Systems}, 27, 2014.

\bibitem[Oymak and Ozay(2019)]{oymak2019non}
Samet Oymak and Necmiye Ozay.
\newblock Non-asymptotic identification of lti systems from a single
  trajectory.
\newblock In \emph{2019 American control conference (ACC)}, pages 5655--5661.
  IEEE, 2019.

\bibitem[Pan et~al.(2017)Pan, Cheng, Saigol, Lee, Yan, Theodorou, and
  Boots]{pan2017agile}
Yunpeng Pan, Ching-An Cheng, Kamil Saigol, Keuntaek Lee, Xinyan Yan, Evangelos
  Theodorou, and Byron Boots.
\newblock Agile autonomous driving using end-to-end deep imitation learning.
\newblock \emph{arXiv preprint arXiv:1709.07174}, 2017.

\bibitem[Pattanaik et~al.(2017)Pattanaik, Tang, Liu, Bommannan, and
  Chowdhary]{pattanaik2017robust}
Anay Pattanaik, Zhenyi Tang, Shuijing Liu, Gautham Bommannan, and Girish
  Chowdhary.
\newblock Robust deep reinforcement learning with adversarial attacks.
\newblock \emph{arXiv preprint arXiv:1712.03632}, 2017.

\bibitem[Peng et~al.(2018)Peng, Andrychowicz, Zaremba, and Abbeel]{peng2018sim}
Xue~Bin Peng, Marcin Andrychowicz, Wojciech Zaremba, and Pieter Abbeel.
\newblock Sim-to-real transfer of robotic control with dynamics randomization.
\newblock In \emph{2018 IEEE international conference on robotics and
  automation (ICRA)}, pages 3803--3810. IEEE, 2018.

\bibitem[Pinto et~al.(2017{\natexlab{a}})Pinto, Davidson, and
  Gupta]{pinto2017supervision}
Lerrel Pinto, James Davidson, and Abhinav Gupta.
\newblock Supervision via competition: Robot adversaries for learning tasks.
\newblock In \emph{2017 IEEE International Conference on Robotics and
  Automation (ICRA)}, pages 1601--1608. IEEE, 2017{\natexlab{a}}.

\bibitem[Pinto et~al.(2017{\natexlab{b}})Pinto, Davidson, Sukthankar, and
  Gupta]{pinto2017robust}
Lerrel Pinto, James Davidson, Rahul Sukthankar, and Abhinav Gupta.
\newblock Robust adversarial reinforcement learning.
\newblock In \emph{International Conference on Machine Learning}, pages
  2817--2826. PMLR, 2017{\natexlab{b}}.

\bibitem[Rusu et~al.(2017)Rusu, Ve{\v{c}}er{\'\i}k, Roth{\"o}rl, Heess,
  Pascanu, and Hadsell]{rusu2017sim}
Andrei~A Rusu, Matej Ve{\v{c}}er{\'\i}k, Thomas Roth{\"o}rl, Nicolas Heess,
  Razvan Pascanu, and Raia Hadsell.
\newblock Sim-to-real robot learning from pixels with progressive nets.
\newblock In \emph{Conference on robot learning}, pages 262--270. PMLR, 2017.

\bibitem[Sadeghi and Levine(2016)]{sadeghi2016cad2rl}
Fereshteh Sadeghi and Sergey Levine.
\newblock Cad2rl: Real single-image flight without a single real image.
\newblock \emph{arXiv preprint arXiv:1611.04201}, 2016.

\bibitem[Silver et~al.(2016)Silver, Huang, Maddison, Guez, Sifre, Van
  Den~Driessche, Schrittwieser, Antonoglou, Panneershelvam, Lanctot,
  et~al.]{silver2016mastering}
David Silver, Aja Huang, Chris~J Maddison, Arthur Guez, Laurent Sifre, George
  Van Den~Driessche, Julian Schrittwieser, Ioannis Antonoglou, Veda
  Panneershelvam, Marc Lanctot, et~al.
\newblock Mastering the game of go with deep neural networks and tree search.
\newblock \emph{nature}, 529\penalty0 (7587):\penalty0 484--489, 2016.

\bibitem[Silver et~al.(2017)Silver, Schrittwieser, Simonyan, Antonoglou, Huang,
  Guez, Hubert, Baker, Lai, Bolton, et~al.]{silver2017mastering}
David Silver, Julian Schrittwieser, Karen Simonyan, Ioannis Antonoglou, Aja
  Huang, Arthur Guez, Thomas Hubert, Lucas Baker, Matthew Lai, Adrian Bolton,
  et~al.
\newblock Mastering the game of go without human knowledge.
\newblock \emph{nature}, 550\penalty0 (7676):\penalty0 354--359, 2017.

\bibitem[Simchowitz and Foster(2020)]{simchowitz2020naive}
Max Simchowitz and Dylan Foster.
\newblock Naive exploration is optimal for online lqr.
\newblock In \emph{International Conference on Machine Learning}, pages
  8937--8948. PMLR, 2020.

\bibitem[Simchowitz et~al.(2020)Simchowitz, Singh, and
  Hazan]{simchowitz2020improper}
Max Simchowitz, Karan Singh, and Elad Hazan.
\newblock Improper learning for non-stochastic control.
\newblock In \emph{Conference on Learning Theory}, pages 3320--3436. PMLR,
  2020.

\bibitem[Tan et~al.(2018)Tan, Zhang, Coumans, Iscen, Bai, Hafner, Bohez, and
  Vanhoucke]{tan2018sim}
Jie Tan, Tingnan Zhang, Erwin Coumans, Atil Iscen, Yunfei Bai, Danijar Hafner,
  Steven Bohez, and Vincent Vanhoucke.
\newblock Sim-to-real: Learning agile locomotion for quadruped robots.
\newblock \emph{arXiv preprint arXiv:1804.10332}, 2018.

\bibitem[Tessler et~al.(2019)Tessler, Efroni, and Mannor]{tessler2019action}
Chen Tessler, Yonathan Efroni, and Shie Mannor.
\newblock Action robust reinforcement learning and applications in continuous
  control.
\newblock In \emph{International Conference on Machine Learning}, pages
  6215--6224. PMLR, 2019.

\bibitem[Tobin et~al.(2017)Tobin, Fong, Ray, Schneider, Zaremba, and
  Abbeel]{tobin2017domain}
Josh Tobin, Rachel Fong, Alex Ray, Jonas Schneider, Wojciech Zaremba, and
  Pieter Abbeel.
\newblock Domain randomization for transferring deep neural networks from
  simulation to the real world.
\newblock In \emph{2017 IEEE/RSJ international conference on intelligent robots
  and systems (IROS)}, pages 23--30. IEEE, 2017.

\bibitem[Tsiamis et~al.(2020)Tsiamis, Matni, and Pappas]{tsiamis2020sample}
Anastasios Tsiamis, Nikolai Matni, and George Pappas.
\newblock Sample complexity of kalman filtering for unknown systems.
\newblock In \emph{Learning for Dynamics and Control}, pages 435--444. PMLR,
  2020.

\bibitem[Tzeng et~al.(2015)Tzeng, Devin, Hoffman, Finn, Peng, Levine, Saenko,
  and Darrell]{tzeng2015towards}
Eric Tzeng, Coline Devin, Judy Hoffman, Chelsea Finn, Xingchao Peng, Sergey
  Levine, Kate Saenko, and Trevor Darrell.
\newblock Towards adapting deep visuomotor representations from simulated to
  real environments.
\newblock \emph{arXiv preprint arXiv:1511.07111}, 2\penalty0 (3), 2015.

\bibitem[Wang et~al.(2018)Wang, Zhang, He, and Zha]{wang2018supervised}
Lu~Wang, Wei Zhang, Xiaofeng He, and Hongyuan Zha.
\newblock Supervised reinforcement learning with recurrent neural network for
  dynamic treatment recommendation.
\newblock In \emph{Proceedings of the 24th ACM SIGKDD International Conference
  on Knowledge Discovery \& Data Mining}, pages 2447--2456, 2018.

\bibitem[Xu and Mannor(2010)]{xu2010distributionally}
Huan Xu and Shie Mannor.
\newblock Distributionally robust markov decision processes.
\newblock \emph{Advances in Neural Information Processing Systems}, 23, 2010.

\bibitem[Zhang et~al.(2021)Zhang, Li, and Li]{zhang2021regret}
Runyu Zhang, Yingying Li, and Na~Li.
\newblock On the regret analysis of online lqr control with predictions.
\newblock In \emph{2021 American Control Conference (ACC)}, pages 697--703.
  IEEE, 2021.

\bibitem[Zhao et~al.(2020)Zhao, Queralta, and Westerlund]{zhao2020sim}
Wenshuai Zhao, Jorge~Pe{\~n}a Queralta, and Tomi Westerlund.
\newblock Sim-to-real transfer in deep reinforcement learning for robotics: a
  survey.
\newblock In \emph{2020 IEEE Symposium Series on Computational Intelligence
  (SSCI)}, pages 737--744. IEEE, 2020.

\bibitem[Zheng et~al.(2021)Zheng, Furieri, Kamgarpour, and Li]{zheng2021sample}
Yang Zheng, Luca Furieri, Maryam Kamgarpour, and Na~Li.
\newblock Sample complexity of linear quadratic gaussian (lqg) control for
  output feedback systems.
\newblock In \emph{Learning for dynamics and control}, pages 559--570. PMLR,
  2021.

\bibitem[Zhong et~al.(2019)Zhong, Deshmukh, and Scott]{zhong2019pac}
Yuren Zhong, Aniket~Anand Deshmukh, and Clayton Scott.
\newblock Pac reinforcement learning without real-world feedback.
\newblock \emph{arXiv preprint arXiv:1909.10449}, 2019.

\end{thebibliography}

\newpage
\appendix

\section{Missing Parts}

\subsection{Notation Table}
For the convenience of the reader, we summarize some notations that will be used.
%\jiachen{TODO}

 \begin{tabular}{ll}

\hline
\textbf{Notation} \qquad \qquad & \textbf{Explanation}\\
\hline 
$\set, \delta_\set$ & model class and its complexity \\
$\Theta = (A, B, C)$ & dynamics parameters defined in \eqref{equ: lqg}\\
$V^\star(\Theta), J^\star(\Theta) $ & optimal cost of finite-horizon and infinite-horizon setting for $\Theta$ \\
$K, P, L, \Sigma$ & see \eqref{eq:def:K}, \eqref{def:P}, \eqref{eq:def:kalman}, and \eqref{equ: steady_state_cov}\\ 
$F$ & $AL$ (cf. \eqref{equ: predictor_form})\\
$\gamma_1, \kappa_2, \gamma_2$ & strongly stable coefficients \\
$\bar{X}_1, \bar{Y}, \bar{U}, \bar{X}_2$ & upper bounds of $\hat{x}_{t|t, \tilde{\Theta}_k}, y_t, u_t, \hat{x}_{t|t, \Theta}$ in \eqref{eq:barx1}, \eqref{eq:baryu}, and \eqref{eq:barx2}\\
$N_S$ & $\max\{\|A\|, \|B\|, \|C\|\} \le N_S$ for all $\Theta \in \set$ \\
$N_U$ & $\max\{\|Q\|_2, \|R\|_2\}$ \\
$N_K, N_P, N_\Sigma, N_L$ & upper bounds of $K, P, \Sigma, L$, respectively (see Appendix \ref{sec: append_notation})\\
$\gK, \bar{\gK}$ & number of episodes of LQG-VTR, upper bound of $\gK$ (see Lemma \ref{lem:switching})  \\
$S(k), T(k)$ & start step and end step of episode $k$, respectively \\
$\tilde{\Theta}_t$ & optimistic model used for planning at step $t$ \\
$\tilde{\Theta}_k$ & optimistic model used during episode $k$ ($\tilde{\Theta}_t = \tilde{\Theta}_k$ for $S(k) \leq t \leq T(k)$) \\
$\hat{x}_{t|t, \Theta}$ & belief state at step $t$ measured under system $\Theta$ \\
$\bar{w}, \bar{z}$ & high probability upper bound of $\|w_t\|_2, \|z_t\|_2$ \\
$\bar{e}_t, \tilde{e}_t$ & Gaussian noises used in the induction stage 2 of the proof of Lemma \ref{lem: bound_state} \\
$C_0, C_1, C_2$ & quantities defined in the induction proof of Lemma \ref{lem: bound_state} \\
$T_A, T_B, T_C, T_L$ & lower bound of $T_w$ for desired properties at induction stage 1 \\
$T_A^\prime, T_B^\prime, T_{g0}, T_{g1}, T_{g2}, T_M$ & lower bound of $T_w$ for desired properties at induction stage 2 \\
$D_h$ & instance dependent constant used in reduction (see Lemma \ref{lemma:reduction}) \\
$a_0, b_0, c_0$ & instance dependent constant used to define $D_h$ \\
$D$ & the $\|\cdot\|_\infty$ norm of $\gF$ (see Lemma \ref{lem: definition_D}) \\
$\Delta$ & the clipping error (difference between $f^{\prime}$ and $f$, see Lemma \ref{lem: clip_error}) \\
\hline
\end{tabular}

\subsection{Additional Definitions} \label{appendix:definiton}
%Our following analysis also relies on the following definitions. We first introduce the covering number.

\begin{definition}[Controllablity \& Observability]
A LQG system $\Theta = (A, B, C)$ is $(A, B)$ controllable if the controllability matrix $ \begin{bmatrix} B & A B & A^2 B & \ldots & A^{n-1} B \end{bmatrix}$ has full row rank. We say $(A, C)$ is observable if the observability matrix $\begin{bmatrix} C^{\top} & (C A)^{\top} & (C A^2)^{\top} & \ldots & (C A^{n-1})^\top\end{bmatrix}^{\top} $ has full column rank.
\end{definition}

\begin{definition}[Stability \citep{cohen2018online,lale2022reinforcement}]
A LQG system $\Theta = (A, B, C)$ is stable if $\rho(A - BK(\Theta)) < 1$. It is strongly stable in terms of parameter $(\kappa, \gamma)$ if $\|K(\Theta)\|_2 \leq \kappa$ and there exists matrices $L$ and $H$ such that $A - BK = H L H^{-1}$ with $\|L\|_2 \leq 1 - \gamma, \|H\|_2 \|H^{-1}\|_2 \leq \kappa$.

Note that any stable LQG system is also strongly stable in terms of some parameters \citep{cohen2018online}, and a strongly stable system is also stable. 
\end{definition}

\begin{definition}[Covering Number]
    We use $\mathcal{N}(\mathcal{F}, \epsilon)$ to denote the $\epsilon$-covering  number of a set $\mathcal{F}$ with respect to the $\ell_\infty$ norm, which is the minimum integer $N$ such that their exists $\mathcal{F}' \in \mathcal{F}$ with $|\mathcal{F}'| = N$, and for any $f \in \mathcal{F}$ their exists $f' \in \mathcal{F}'$ satisfying $\|f - f'\|_{\infty} \le \epsilon$. 
\end{definition}

\begin{definition}[$\epsilon$-Independent]
    For the function class $\mathcal{F}$ defined in $\mathcal{Z}$, we say $z$ is $\epsilon$-independent of $\{z_1, \cdots, z_n\} \in \mathcal{Z}$ if there exist $f, f' \in \mathcal{F}$ satisfying $\sqrt{\sum_{i = 1}^n (f(z_i) - f'(z_i))^2} \le \epsilon$ and $f(z) -f'(z) \ge \epsilon$.
\end{definition}

\begin{definition}[Eluder Dimension]
    For the function class $\mathcal{F}$ defined in $\mathcal{Z}$, the $\epsilon$-Eluder dimension is the longest sequence $\{z_1, \cdots, z_n\} \in \mathcal{Z}$ such that there exists $\epsilon' \ge \epsilon$ where $z_i$ is $\epsilon'$-independent of $\{z_1, \cdots, z_{i-1}\}$ for any $i \in [n]$.
\end{definition}

\subsection{Preliminaries about the optimal control and the Kalman Filter}
\label{appendix:oc_and_kf}

It is known that the optimal policy for this problem is a linear feedback control policy, i.e., $u_t = - K(\Theta) \hat{x}_{t \mid t, \Theta}$, where $K(\Theta)$ is the optimal control gain matrix and $\hat{x}_{t \mid t, \Theta}$ is the belief state at step $t$ (i.e. the estimation of the hidden state). 

Let $P(\Theta)$ be the unique solution to the discrete-time algebraic Riccati equation (DARE):
\# \label{def:P}
P(\Theta) = A^\top P(\Theta) A + C^\top Q C - A^\top P(\Theta) B (R + B^\top P(\Theta) B)^{-1} B^\top P(\Theta) A.
\#

Then $K(\Theta)$ can be calculated by
\# \label{eq:def:K}
K(\Theta) = (R + B^\top P(\Theta) B)^{-1} B^\top P(\Theta) A.
\#

The belief state $\hat{x}_{t \mid t, \Theta}$ is defined as the mean of $x_t$ under system $\Theta$, which is decided by the system parameter $\Theta$ and history $\mathcal{H}_t$. Moreover, assuming $\hat{x}_{0 \mid - 1, \Theta} = 0$, the belief state can be calculated by the Kalman filter:
\begin{equation}
\begin{aligned} \label{eq:def:kalman}
\hat{x}_{t \mid t, \Theta} & = (I - L(\Theta)C) \hat{x}_{t \mid t - 1, \Theta} + L(\Theta) y_t, \\
\hat{x}_{t \mid t-1, \Theta} &= A \hat{x}_{t - 1 \mid t - 1, \Theta} + B u_{t - 1}, \\
L(\Theta) & = \Sigma(\Theta) C^\top (C \Sigma(\Theta) C^\top + I)^{-1},
\end{aligned}
\end{equation}
%\chij{why we are using this complicated $\hat{x}_{t|t-1}$ notation. If I understand correctly, we only used $\hat{x}_{t|t-h}$ for $h=1$, then why not simplify $\hat{x}_{t|t-1}$ as $\hat{x}_{t-1/2}$, and simplify $\hat{x}_{t|t}$ as $\hat{x}_{t}$}
where $\Sigma(\Theta)$ is the \textbf{unique positive semidefinite} solution to the following DARE:
\# \label{equ: steady_state_cov}
\Sigma(\Theta) = A \Sigma(\Theta) A^\top - A \Sigma(\Theta) C^\top (C \Sigma(\Theta) C^\top + I)^{-1} C \Sigma(\Theta) A^\top + I.
\#
We sometimes use $L, P, K, \Sigma$ as the shorthand of $L(\Theta), P(\Theta), K(\Theta), \Sigma(\Theta)$ when the system $\Theta$ is clear from the context.

The predictor form is another formulation of LQG instance (\ref{equ: lqg}) \citep{kalman1960new,lale2020logarithmic,lale2021adaptive}
\begin{equation} %\label{equ: predictor_form}
\begin{aligned}
x_{t+1} &= (A - F(\Theta)C) x_t+B u_t+F(\Theta) y_t, \quad
y_t =C x_t+e_t,
\end{aligned}
\end{equation}
where $F(\Theta) = AL(\Theta)$ and $e_t$ denotes a zero-mean innovation process. In the steady state, we have $e_t \sim \gN(\bm{0}, C\Sigma(\Theta) C^\top + I)$. The dynamics of $\hat{x}_{t|t-1,\Theta}$ follows exactly the predictor form.

\subsection{Domain Randomization} \label{appendix:DR}

\begin{definition}[Domain Randomization Oracle \citep{chen2021understanding}]
The domain randomization oracle returns a (history-dependent) policy $\pi_{\mathrm{DR}}$ such that 
\begin{align} \label{eq:def:DR}
\pi_{\mathrm{DR}} = \argmin_\pi \mathbb{E}_{\Theta \sim d(\set)} \left[V^\pi(\Theta) - V^\star(\Theta)\right],
\end{align}
where $V^\star$ is the optimal cost, $V^\pi$ is the cost of $\pi$, and $d(\set)$ is a distribution over $\set$.    
\end{definition}

As shown in Theorem 4 of \citet{chen2021understanding},  even with an additional smooth assumption \citep[][Assumption 3]{chen2021understanding}, the performance of $\pi_{\mathrm{DR}}$ depends crucially on $d(\set)$. In high dimensional continuous domain, the probability of sampling an accurate model close to the real world model by uniform randomization is exponentially small. Thus, the learner needs to carefully choose the domain randomization distribution $d(\set)$ with strong prior knowledge of the real world model. In contrast, the robust adversarial training oracle does not worry about this, and it just needs to solve \eqref{eq:def:RT} by some min-max optimal control or robust RL algorithms.

\section{Uniform Bounded Simulator Set}
\label{sec: append_notation}

Under Assumptions \ref{assump: stability}, \ref{assump: stablility_control}, and \ref{assump: stablility_kalman}, it is possible to provide a uniform upper bound on the spectral norm of $P(\Theta), \Sigma(\Theta), K(\Theta)$ and $L(\Theta)$ after the model selection procedure. Let $N_U \defeq \max(\|Q\|_2, \|R\|_2)$.

For any $\Theta = (A, B, C) \in \set$, we have $\|K\|_2 \leq N_K$ by Assumption \ref{assump: stablility_control}.

By definition $P$ is the unique solution to the equation 
$$ (A - BK)^\top P (A - BK) - P = C^\top Q C + K^\top R K, $$
where $\rho(A - BK) < 1$. Therefore, Lemma B.4 of \citet{simchowitz2020improper} shows
$$ P = \sum_{k=0}^\infty \left((A - BK)^{\top}\right)^k (C^\top Q C + K^\top R K) (A - BK)^k. $$
Since $\|(A - BK)^k\|_2 \leq (1 - \gamma_1)^k$, 
$$ \|P\|_2 \leq \left(N_S^2 N_U + N_K^2 N_U\right) \sum_{k=0}^\infty (1 - \gamma_1)^{2k} \leq N_P \defeq \frac{N_U (N_S^2 + N_K^2)}{2\gamma_1 - \gamma_1^2}. $$
Similarly, $\Sigma$ is the unique solution to the equation 
$$ (A - FC) \Sigma (A - FC)^\top - \Sigma = I + FF^\top,$$
where $F = AL$ and $A - FC$ is $(\kappa_2, \gamma_2)$-strongly stable under Assumption \ref{assump: stablility_kalman}. Thus we know
$$ \|\Sigma\|_2 \leq N_\Sigma \defeq \frac{\kappa_2^2 (1 + \kappa_2^2)}{2\gamma_2 - \gamma_2^2}. $$
Finally, we have
$$ \|L\|_2 = \left\|\Sigma C^\top (C \Sigma C^\top + I)^{-1}\right\|_2 \leq N_L \defeq N_{\Sigma} N_S $$
since $\|(C \Sigma C^\top + I)^{-1}\|_2 \leq 1$.

In this paper, we use $N_P, N_\Sigma, N_K, N_L$ as the uniform upper bound on the spectral norm of $P(\Theta), \Sigma(\Theta), K(\Theta)$ and $L(\Theta)$ for any $\Theta \in \set$.

\section{Details for the Model Selection Procedure}
\label{sec: appendix_model_selection}

In this section we provide a complete description of the model selection procedure. %We use $\Theta$ to denote the real world LQG model instead of $\Theta^\star$ to keep symbols clean for the remaining of this section. 

The main purpose of running a model selection procedure at the beginning of LQG-VTR is to obtain a more accurate estimate $\hat{A}, \hat{B}, \hat{C}, \hat{L}$ of the real-world model $\Theta^\star$. This accurate estimate is very useful to show that the belief states encountered in LQG-VTR will be bounded (see Lemma \ref{lem: bound_state}).

We follow the warm up procedure used in \citet{lale2021adaptive} to estimate Markov matrix $\mathbf{M}$, and perform the model selection afterwards. The matrix $\mathbf{M}$ is
$$ \mathbf{M}= \begin{bmatrix} C F & C \bar{A} F & \ldots & C \bar{A}^{\tilde{H}-1} F & C B & C \bar{A} B & \ldots & C \bar{A}^{\tilde{H}-1} B \end{bmatrix}, $$
where $\bar{A} = A - FC = A - ALC $.

\begin{algorithm}
\caption{Model Selection} \label{alg:model:selection}
\begin{algorithmic}[1] 
\State Input: model selection period length $T_w$ by Eqn. (\ref{equ: definition_tw}), simulator set $\set$.
\State Set $\sigma_u = 1$.
\State Execute action $u_t \sim \gN(\bm{0}, \sigma_u^2 I)$ for $T_w$ steps, and gather dataset $\gD_{\mathrm{init}} = \{y_t, u_t\}_{t=1}^{T_w} $.
\State Set the truncation length $\tilde{H} = O(\log(\kappa_2 H) / \log(1 / \gamma_2))$.
\State Estimate $\hat{\mathbf{M}}$ with $\gD_{\mathrm{init}}$ following Eqn. (11) of \citet{lale2021adaptive}.
\State Run SYSID($\tilde{H}, \hat{G}, n$) (Algorithm 2 of \citet{lale2021adaptive}) 
to obtain an estimate $\hat{A}, \hat{B}, \hat{C}, \hat{L}$ of the real world system $\Theta^\star$.
\State Return $\set \cap \gC$, where $\gC$ is computed through Eqn. (\ref{equ: warm_up_confidence_set}).
\end{algorithmic}
\end{algorithm}

The regression model for $\mathbf{M}$ can be established by
$$ y_t=\mathbf{M} \phi_t+e_t+C \bar{A}^{\tilde{H}} x_{t-\tilde{H}}, $$
where $\phi_t = \begin{bmatrix} y^\top_{t-1} & \ldots & y^\top_{t - H} & u^\top_{t-1} & \ldots & u_{t-H}^\top \end{bmatrix}^\top $. Here $e_t$ is a Gaussian noise, and $C \bar{A}^H x_{t-H}$ is an exponentially small bias term.

Thanks to Assumption \ref{assump: stablility_kalman}, $\bar{A}$ is a $(\kappa_2, \gamma_2)$-strongly stable matrix. Hence $\|\bar{A}^{\tilde{H}}\|_2 = O(\kappa_2 (1 - \gamma_2)^{\tilde{H}}) = O(1/H^2) $, and the bias term will be negligible.

Therefore, by estimating $\hat{M}$ with random action set $\gD_{\mathrm{init}}$, we come to a similar confidence set as shown in the Theorem 3.4 of \citet{lale2021adaptive}: There exists a unitary matrix $T \in \R^{n \times n}$ such that with probability at least $1 - \delta/4$, the real world model $\Theta^\star$ is contained in the set $\gC$, where
\begin{equation}
\label{equ: warm_up_confidence_set}
\begin{aligned}
\gC & \defeq \Big\{ \bar{\Theta} = (\bar{A}, \bar{B}, \bar{C}): \\
& \left\|\hat{A} - T^\top \bar{A} T\right\|_2 \leq \beta_A, \left\|\hat{B} - T^\top \bar{B} \right\|_2 \leq \beta_B, \left\|\hat{C} - \bar{C} T\right\|_2 \leq \beta_C, \left\|\hat{L} - T^\top L(\bar{\Theta}) \right\|_2 \leq \beta_L \Big\}.
\end{aligned}
\end{equation}
Here the unitary $T$ is used because any LQG system transformed by a unitary matrix has exactly the same distribution of observation sequences as the original one (i.e., LQGs are equivalent under unitary transformations). Without loss of generality, we can assume $T = I$. $\beta_A, \beta_B, \beta_C, \beta_L$ measure the width of the confidence set, and we have 
\begin{align}
\label{equ: concentration_warmup}
\beta_A, \beta_B, \beta_C, \beta_L \leq \sqrt{\frac{C_w}{T_w}}
\end{align}
for some instance-dependent parameter $C_w$ according to Theorem 3.4 of \citet{lale2021adaptive}. %(for simplicity we omit the dependency of $n, m, p$ in the $O(\cdot)$ notation here, because these terms only contribute to $T_w$, which is not the dominating term of the regret bound of LQG-VTR).

\iffalse
Follow the exploration phase of ExpCommit \citep{lale2020regret}, we collect the initial dataset $D_{\textrm{init}}$ with random actions for $T_w$ steps. Let $\tilde{H}$ be some constant that is greater than $2n+1$, and define $N \defeq T_w - \tilde{H} + 1$ as the effective number of samples. We follow Eqn. (8) of \citet{lale2020regret} to estimate the Markov parameters 

$$G^\star \defeq \left[\begin{array}{lllll}
\bm{0} & C^\star B^\star & C^\star A^\star B^\star & \ldots & C^\star (A^{\star})^{\tilde{H}-2} B^\star \end{array}\right] \in \mathbb{R}^{p \times \tilde{H} m} $$

of length $\tilde{H}$. Then we call the Ho-Kalman algorithm (Algorithm 2 of \citet{lale2020regret}, see also \citet{oymak2019non}) with parameter $\tilde{H}, \hat{G}, n$, where $\hat{G}$ denotes the estimate of $G^\star$.
\fi

\iffalse
Consider 

$$
\bar{u}_t\defeq\left[\begin{array}{llll}
u_t^{\top} & u_{t-1}^{\top} & \cdots & u_{t-\tilde{H}+1}^{\top}
\end{array}\right]^{\top} \in \mathbb{R}^{\tilde{H} m}, \quad \bar{w}_t\defeq\left[\begin{array}{llll}
w_t^{\top} & w_{t-1}^{\top} & \ldots & w_{t-\tilde{H}+1}^{\top}
\end{array}\right]^{\top} \in \mathbb{R}^{H n} .
$$

Then we have 

$$ y_t = G\bar{u}_t + F \bar{w}_t + z_t + e_t $$

for $G = $
\fi

Now we discuss how to decide the value of $T_w$ so that the following bounded state property holds after the model selection procedure, which is crucial to bound the regret of LQG-VTR. We define $\gK$ as the number of episodes of LQG-VTR.

\begin{lemma}[Bounded State]
\label{lem: bound_state}

With probability at least $1 - \delta$, throughout the optimistic planning stage of LQG-VTR (i.e., after the model selection procedure), there exists constants $\bar{X}_1, \bar{X}_2, \bar{Y}, \bar{U}$ such that the belief state $\hat{x}_{t|t, \tilde{\Theta}_k}$ measured in $\tilde{\Theta}_k$ for any episode $k \in [\mathcal{K}]$ and any step $t$ in episode $k$ satisfy
\begin{align} \label{eq:barx1}
\left\|\hat{x}_{t|t, \tilde{\Theta}_k}\right\|_2 \leq \bar{X}_1.
\end{align}
Accordingly for any step $t \in [H]$, 
\begin{align} \label{eq:baryu}
\left\|y_{t}\right\|_2 \leq \bar{Y}, \quad \left\|u_{t}\right\|_2 \leq \bar{U}. \quad 
\end{align}
For any step $t \in [H]$ and any system $\Theta = (A, B, C) \in \set$, the belief state measured under $\Theta$ with history $(y_0, u_0, y_1, u_1, ..., y_{t - 1}, u_{t-1}, y_t)$ is bounded
\begin{align} \label{eq:barx2}
\left\|\hat{x}_{t|t, \Theta}\right\|_2 \leq \bar{X}_2,
\end{align}
where 
\begin{align}
\bar{X}_1, \bar{Y}, \bar{U}, \bar{X}_2 = O\left((\sqrt{n} + \sqrt{p}) \log H\right).
\end{align}
As a consequence, the algorithm does not terminate at Line \ref{alg: action_clipping} with probability $1 - \delta$.
%As a consequence, there exists constant $D_h$ such that for any $\Theta \in %\set$, we have
%\begin{align}
%h^{\star}_{\Theta}\left(\hat{x}_{t|t, \Theta}, y_t\right) & = \hat{x}_{t|t, \Theta}^\top (P(\Theta) - C^\top Q C) \hat{x}_{t|t, \Theta} + y_{t}^\top Q y_{t} \\
%& \leq D_h = \left(N_P + N_S^2 N_U\right) \bar{X}_2^2 + N_U \bar{U}^2 .
%\end{align}
\end{lemma}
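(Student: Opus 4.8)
The plan is to condition on two high‑probability events and then run a coupled induction over the steps $t = T_w+1,\dots,H$ of the optimistic‑planning phase. The first event is the success of the model selection procedure: by \eqref{equ: concentration_warmup} and the discussion around \eqref{equ: warm_up_confidence_set}, with probability at least $1-\delta/4$ the output $\gU_1 = \set\cap\gC$ contains $\Theta^\star$ and every $\Theta\in\gU_1$ satisfies $\|A-A^\star\|_2,\|B-B^\star\|_2,\|C-C^\star\|_2,\|L(\Theta)-L(\Theta^\star)\|_2 \le \beta \defeq 2\sqrt{C_w/T_w}$. The second event is a noise bound: by Gaussian tail bounds and a union bound over the (at most $H$) steps, with probability at least $1-3\delta/4$ we have $\|w_t\|_2\le\bar w$, $\|z_t\|_2\le\bar z$ for all $t$, with $\bar w = O(\sqrt n + \sqrt{\log(H/\delta)})$ and $\bar z = O(\sqrt p + \sqrt{\log(H/\delta)})$, and the same bound controls the innovations $e_t$ of the predictor form \eqref{equ: predictor_form}. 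On the intersection of these events (probability $\ge 1-\delta$) we prove the deterministic claims \eqref{eq:barx1}–\eqref{eq:barx2}; since the algorithm sets $M_x=\bar X_1$, \eqref{eq:barx1} immediately implies the check at Line~\ref{alg: action_clipping} never fires.

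\textbf{Stage 1 (perturbation of derived quantities).} For $T_w\ge\max\{T_A,T_B,T_C,T_L\}$, so that $\beta$ falls below a fixed threshold, standard quantitative continuity of the DARE solution $P(\cdot)$ and of the gains $K(\cdot)=(R+B^\top P B)^{-1}B^\top P A$ and $L(\cdot)$ — valid uniformly over $\set$ because Assumptions~\ref{assump: stablility_control} and~\ref{assump: stablility_kalman} keep $\|P(\Theta)\|_2\le N_P$, $\|\Sigma(\Theta)\|_2\le N_\Sigma$, $\|K(\Theta)\|_2\le N_K$, $\|L(\Theta)\|_2\le N_L$ bounded and the relevant closed‑loop maps contractive — gives, for every $\Theta\in\gU_1$,
\[
\|K(\Theta)-K(\Theta^\star)\|_2,\ \|F(\Theta)-F(\Theta^\star)\|_2,\ \big\|(I-L(\Theta)C^\star)A^\star-(I-L(\Theta^\star)C^\star)A^\star\big\|_2 \;=\; O(\beta).
\]
Since $\|A^\star-B^\star K(\Theta^\star)\|_2\le\gamma_1<1$ (Assumption~\ref{assump: stablility_control}), the perturbed matrix $A^\star-B^\star K(\tilde\Theta_k)$ has spectral norm $\le(1+\gamma_1)/2<1$; and since the powers of $(I-L(\Theta^\star)C^\star)A^\star$ decay geometrically — using $(MA)^k=M(AM)^{k-1}A$ with $M=I-L^\star C^\star$ and the $(\kappa_2,\gamma_2)$‑strong stability of $A^\star-F(\Theta^\star)C^\star=A^\star(I-L^\star C^\star)$ from Assumption~\ref{assump: stablility_kalman} — the perturbed matrix $(I-L(\tilde\Theta_k)C^\star)A^\star$ still has $O(\kappa_2(1-\gamma_2/2)^k)$ decay of its powers, uniformly over episodes $k$.

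\textbf{Stage 2 (coupled induction).} Inside episode $k$ write $\hat x_t\defeq\hat x_{t|t,\tilde\Theta_k}$ and $\varepsilon_t\defeq x_t-\hat x_t$, and carry the induction hypothesis that up to the current step $\|\hat x_s\|_2,\|x_s\|_2\le\bar X_1$, $\|y_s\|_2\le\bar Y$, $\|u_s\|_2\le\bar U$, and $\|\hat x_{s|s,\Theta}\|_2\le\bar X_2$ for every $\Theta\in\set$. Substituting the certainty‑equivalent control $u_t=-K(\tilde\Theta_k)\hat x_t$ into \eqref{equ: lqg} and the Kalman recursion \eqref{eq:def:kalman} yields a linear recursion for the pair $(\hat x_t,\varepsilon_t)$ whose coefficient matrix, by a direct computation, is the block‑triangular matrix with diagonal blocks $A^\star-B^\star K(\tilde\Theta_k)$ and $(I-L(\tilde\Theta_k)C^\star)A^\star$ plus a cross‑coupling block that \emph{vanishes when} $\tilde\Theta_k=\Theta^\star$ and is hence $O(\beta)$ by Stage 1; for $T_w$ large this coefficient matrix therefore has geometrically decaying powers with rate $\le 1-\gamma/4$ for a uniform $\gamma>0$. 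Unrolling over the steps $S(k),\dots,t$ of episode $k$ and using the noise bounds gives $\|\hat x_t\|_2+\|\varepsilon_t\|_2 \le \kappa(1-\gamma/4)^{t-S(k)}(\|\hat x_{S(k)}\|_2+\|\varepsilon_{S(k)}\|_2) + O(\bar w+\bar z)$; the episode‑start term is controlled by $\bar X_2$ via the induction hypothesis, and \eqref{eq:barx2} itself follows by unrolling \eqref{eq:def:kalman}, $\hat x_{t|t,\Theta}=\sum_{s\ge0}[(I-L(\Theta)C)A]^s\big((I-L(\Theta)C)Bu_{t-1-s}+L(\Theta)y_{t-s}\big)$, a geometrically decaying sum (strong stability of $(I-L(\Theta)C)A$ via the $(MA)^k$ identity and Assumption~\ref{assump: stablility_kalman}) of terms bounded by $\bar U$ and $\bar Y$, so $\bar X_2=O(\bar U+\bar Y)$. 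Finally $\|y_{t+1}\|_2\le N_S\|x_{t+1}\|_2+\bar z\le\bar Y$ and $\|u_{t+1}\|_2\le N_K\|\hat x_{t+1}\|_2\le\bar U$, closing the induction. It remains to pick $\bar X_1,\bar X_2,\bar Y,\bar U$ as a consistent solution of the resulting finite system of inequalities; this is possible precisely because the strong‑stability margins make the self‑referential coefficients strictly below one once $T_w$ exceeds $\max\{T_A',T_B',T_{g0},T_{g1},T_{g2},T_M\}$ (and $T_w\gtrsim\bar{\mathcal{K}}^2\cdot\mathrm{poly}$ so that the $O(\beta)$ drift of the belief state across the $\le\bar{\mathcal{K}}$ episode switches — where $\hat x_{S(k)|S(k),\tilde\Theta_k}$ is recomputed with a new model — stays negligible, using that $\tilde\Theta_k$ and $\tilde\Theta_{k-1}$ are both $O(\beta)$‑close to $\Theta^\star$ and the Kalman filter forgets exponentially), and after substituting the noise scales it yields $\bar X_1,\bar Y,\bar U,\bar X_2=O((\sqrt n+\sqrt p)\log H)$.

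\textbf{Main obstacle.} The crux is Stage 2: one must show the joint belief/error recursion driven by a \emph{mis‑specified} certainty‑equivalent controller has geometrically stable dynamics — which hinges on the algebraic fact that at $\Theta=\Theta^\star$ the joint matrix degenerates into a block‑triangular form whose diagonal blocks are the independently‑stable closed‑loop matrix $A^\star-B^\star K^\star$ and Kalman‑error matrix $(I-L^\star C^\star)A^\star$ — and then close a self‑referential system of bounds in $\bar X_1,\bar X_2,\bar Y,\bar U$ in which the belief state under an arbitrary $\Theta\in\set$ (Eqn.~\eqref{eq:barx2}) feeds back into the observation bound; exponential forgetting of the Kalman filter, plus the fact that every optimistic model lies within $O(\beta)$ of $\Theta^\star$, is what breaks that loop and pins down the thresholds on $T_w$.
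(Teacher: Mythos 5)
Your overall architecture matches the paper's: condition on the success of model selection and on Gaussian noise bounds, bound the belief state through a coupled recursion for $(\hat x_{t|t,\tilde\Theta_k},\,x_t-\hat x_{t|t,\tilde\Theta_k})$ driven by the certainty-equivalent controller, control the $O(\beta)$ drift of the belief state at the at most $\bar{\mathcal K}$ episode switches via exponential forgetting and $T_w\gtrsim\bar{\mathcal K}^2$, and close a self-referential system for $\bar X_1,\bar X_2,\bar Y,\bar U$; your block computation (an $\hat x$-block that is a norm contraction up to $O(\beta)$ — the analogue of the paper's matrix $M$ in \eqref{equ: induction2_main_term} — and a cross-coupling block that vanishes at $\Theta^\star$) is correct. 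The genuine gap is in how the induction is closed across episodes. Your error coordinate $\varepsilon_t=x_t-\hat x_{t|t,\tilde\Theta_k}$ evolves under $(I-\tilde L_kC^\star)A^\star$, which (via Assumption \ref{assump: stablility_kalman} and your swap identity) is only \emph{strongly} stable: its powers decay like $\kappa(1-\gamma)^s$ with a transient constant $\kappa\ge 1$, and it is not a one-step contraction in Euclidean norm. Your unrolled within-episode bound therefore has the form $\max_t\|\varepsilon_t\|_2\le\kappa\,\|\varepsilon_{S(k)}\|_2+O(\text{noise})$, and since you initialize $\varepsilon_{S(k+1)}$ from the previous episode's values (``the episode-start term is controlled by $\bar X_2$ via the induction hypothesis'', plus the switch drift), the envelope is multiplied by $\kappa$ at every data-driven switch; episodes can be arbitrarily short, so nothing lets the transient decay before the next switch. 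Over up to $\bar{\mathcal K}=\tilde O(\mathrm{polylog}(H))$ episodes this yields a factor $\kappa^{\mathcal K}$, which is not $O((\sqrt n+\sqrt p)\log H)$. In other words, the ``finite system of inequalities'' you invoke does not have a consistent solution of the advertised size: the self-referential coefficient attached to the error component is $\kappa>1$, not something strictly below one, and this is exactly the step the paper spends its two-stage induction on.

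What is missing is the mechanism that neutralizes the transient constant. The paper splits your $\varepsilon_t$ into the stationary, policy-independent true-filter error $x_t-\hat x_{t|t-1,\Theta^\star}$ (bounded at every step by a direct Gaussian bound, no recursion and hence no compounding) and the filter mismatch $\Delta_t=\hat x_{t|t-1,\Theta^\star}-\hat x_{t|t-1,\tilde\Theta_k}$, and shows that $\Delta$ is re-initialized at each episode start to a value smaller than its budget by a factor $1/\kappa_3$ (both filters are exponentially forgetting functions of the \emph{same} history under $O(\beta)$-close models), so the within-episode transient $\kappa_3\|\Delta_1\|_2$ never exceeds the allotted $(1-\gamma_1)D_0/(4N_LN_S)$; this is the two-phase induction around \eqref{equ: decomposition_Delta}, and it is why the paper pays only a $(1+1/\bar{\mathcal K})$ inflation per switch, compounding to at most $e$, rather than a $\kappa$ factor. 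An alternative repair inside your framework is a common-Lyapunov argument: all matrices $(I-\tilde L_kC^\star)A^\star$ are $O(\beta)$-perturbations of the single matrix $(I-L^\star C^\star)A^\star$, so in one fixed similarity basis they are simultaneously norm-contractive and products across switches do not compound — but you would have to state and use this explicitly; per-matrix power decay, which is all you assert, is not enough. A secondary point: your Stage 1 leans on an unproved ``standard'' continuity of $K(\cdot)$ under DARE perturbations, whereas the paper avoids needing it by applying Assumption \ref{assump: stablility_control} to $\tilde\Theta_k\in\set$ itself (e.g.\ bounding $G_2$ through $\|\tilde A-\tilde B\tilde K\|_2\le\gamma_1$ plus an $O(\beta)$ parameter mismatch); this is fixable but should not be waved through.
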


%Now we are ready to present the proof of Lemma \ref{lem: bound_state}.

\begin{proof}%[Proof of Lemma \ref{lem: bound_state}]
Let $S(k)$ and $T(k)$ be the starting step and ending step of episode $k$. For simplicity, we use $\tilde{\Theta}_t = (\tilde{A}_t, \tilde{B}_t, \tilde{C}_t)$ to denote $\tilde{\Theta}_k$ for $S(k) \leq t \leq T(k)$ for time step $t$ (The first episode starts after the model selection procedure, $S(1) = T_w + 1$). The core problem here is to show that after the model selection procedure (Algorithm \ref{alg:model:selection}), the belief state $\hat{x}_{t|t, \tilde{\Theta}_t}$ under $\tilde{\Theta}_t$ stays bounded, then $u_t$ also stays bounded since $u_t = -K(\tilde{\Theta}_t) \hat{x}_{t|t, \tilde{\Theta}_t}$. Since $\Theta^\star \in \gC$ with probability at least $1 - \delta/4$, we assume this event happens.

Before going through the proof, we define constant $\bar{w}$, $\bar{z}$ and $\bar{u}$ such that with probability $1 - \delta/4$, $\|w_t\|_2 \leq \bar{w}, \|z_t\|_2 \leq \bar{z}$ for any $0 \leq t \leq H$, and $\|u_t\|_2 \leq \bar{u}$ for $0 \leq t \leq T_w$. Since $w_t, z_t, u_t$ are independent Gaussian variables, we know $\bar{w} = O(\sqrt{n \log (H)}), \bar{z} = O(\sqrt{p \log (H)}), \bar{u} = O(\sqrt{m \log (T_w)})$. For the model selection procedure (which contains the first $T_w$ steps), it holds that with probability $1 - \delta / 4$,  for any $t \leq T_w$, $\|x_t\|_2 \leq O(\sqrt{n \log (H)}), \|u_t\|_2 \leq \bar{u}, \|y_t\|_2 \leq O(N_S \sqrt{n \log (H)} + \sqrt{p \log (H)})$ by Lemma D.1 of \citet{lale2020regret}. We assume this high probability event happens for now.

We use an inductive argument to show that for any episode $k$ and step $t$ therein, it holds that $\|\hat{x}_{t|t, \tilde{\Theta}_t}\|_2 \leq C_0 (1 + 1/\bar{\mathcal{K}})^k$ for some $C_0$. To guarantee action $u_t$ does not blow up, we set the upper bound of $\|\hat{x}_{t|t, \tilde{\Theta}_t}\|_2$ as $\bar{X}_1 \defeq eC_0 \geq C_0 (1 + 1/\bar{\mathcal{K}})^k$ for any $k \leq \gK < \bar{\gK}$. Thus, LQG-VTR terminates as long as $\|\hat{x}_{t|t, \tilde{\Theta}_t}\|_2$ exceeds that upper bound (Line \ref{alg: action_clipping} of Algorithm \ref{alg: lqg_vtr}). For convenient, we call the model selection procedure as \emph{episode 0} (so $T(0) = T_w$). As the inductive hypothesis, suppose that $\|\hat{x}_{t|t, \tilde{\Theta}_t}\|_2 \leq D_0 \leq C_0 (1 + 1/\bar{\mathcal{K}})^k$ holds for some $D_0$ and step $t \leq T(k) + 1$ when $k \geq 1$. We define $D_0 = C_0$ for $C_0 = O((\sqrt{n} + \sqrt{p}) \log H)$ specified later when $k = 0$. Then we know that the algorithm does not terminate in the first $k$ episodes. Moreover, we have for any $t \leq T(k)$ it holds that $\|u_t\|_2 \leq N_K D_0$. 
Thus for $t \leq T(k) + 1$, by definition of $x_t$ and $y_t$ we have 
\begin{align}
\|x_t\|_2 \leq \frac{\Phi(A^\star)}{1 - \rho(A^\star)} \cdot ( N_S N_K D_0 + \bar{w}), \|y_t\|_2 \leq N_S \|\bar{x_t}\|_2 + \bar{z}.
\end{align}
Suppose $D_0 \geq \bar{w} \Phi(A^\star) / (1 - \rho(A^\star))$, so that 
\begin{align}
\label{equ: large_enough_D0_1}
\|x_t\|_2 \leq \left(1 + \frac{N_S N_K \Phi(A^\star)}{1 - \rho(A^\star)}\right) D_0, \|y_t\|_2 \leq \left(1 + N_S + \frac{N_S^2 N_K \Phi(A^\star)}{1 - \rho(A^\star)}\right) D_0.
\end{align}
%Define $D_u \defeq N_K, D_x \defeq 1 + N_S N_K \Phi(A^\star)/(1 - \rho(A^\star)), D_y \defeq 1 + N_S + N_S^2 N_K \Phi(A^\star)(1 - \rho(A^\star)) $.
Hence, we can define three constants $D_u, D_x, D_y$ such that for any $t \leq T(k)$ it holds that $\|u_t\|_2 \leq D_u D_0$; for any $t \leq T(k) + 1$, we have $\|x_t\|_2 \leq D_x D_0, \|y_t\|_2 \leq D_y D_0$. Note that $D_u, D_x, D_y$ are all instance-dependent constants. To be more concrete, it suffices to set
$$ D_u = N_K \quad D_x = 1 + \frac{N_S N_K \Phi(A^\star)}{1 - \rho(A^\star)} \quad D_y = 1 + N_S + \frac{N_S^2 N_K \Phi(A^\star)}{1 - \rho(A^\star)}. $$

\subsubsection*{\textbf{Induction Stage 1}: fix $t = T(k) + 1$ and show $\|\hat{x}_{t|t, \tilde{\Theta}_k}\|_2$ is very close to $\|\hat{x}_{t|t, \tilde{\Theta}_{k + 1}}\|_2$} 

In stage 1, we need to derive four terms $T_A, T_B, T_C, T_L$ as the lower bound of $T_w$ to achieve the desired concentration property. This means as long as $T_w \geq T_A, T_B, T_C, T_L$, we can prove the induction stage 1 thanks to the tight confidence set $\gC$ defined in Eqn. (\ref{equ: warm_up_confidence_set}). Now fix $t = T(k) + 1 = S(k+1)$, for any $\Theta \in \gU_1$ we have 
\begin{align}
\label{equ: belief_recursion}
\hat{x}_{t|t, \Theta} = \sum_{s = 1}^t \gA^{s-1} \left(\gB u_{t-s} + L y_{t - s + 1}\right) + \gA^t L y_0,
\end{align}
where $\gA \defeq (I - LC)A, \gB \defeq (I - LC)B$. Note that $\|\gA^s\|_2 \leq \kappa_2 (1 - \gamma_2)^s$ according to Assumption \ref{assump: stablility_kalman}. Therefore, for $\Theta = \tilde{\Theta}_k$ and $\Theta = \tilde{\Theta}_{k+1}$ (both $\tilde{\Theta}_k$ and $\tilde{\Theta}_{k+1}$ is chosen from the confidence set $\gC$ by definition), 
\begin{align*}
& \hat{x}_{t|t, \tilde{\Theta}_k} -  \hat{x}_{t|t, \tilde{\Theta}_{k + 1}}  \\ 
& = \sum_{s = 1}^t \tilde{\gA}_k^{s-1} \left(\tilde{\gB}_{k} u_{t-s} + \tilde{L}_k y_{t - s + 1}\right) + \tilde{\gA}_k^t \tilde{L}_k y_0 - \sum_{s = 1}^t \tilde{\gA}_{k+1}^{s-1} \left(\tilde{\gB}_{k+1} u_{t-s} + \tilde{L}_{k+1} y_{t - s + 1}\right) + \tilde{\gA}_{k+1}^t \tilde{L}_{k+1} y_0.
\end{align*}
To move on, 
\begin{align*}
& \hat{x}_{t|t, \tilde{\Theta}_k} -  \hat{x}_{t|t, \tilde{\Theta}_{k + 1}} \\
& = \sum_{s = 1}^t \tilde{\gA}_k^{s-1} \left(\tilde{\gB}_{k} u_{t-s} + \tilde{L}_k y_{t - s + 1}\right) - \sum_{s = 1}^t \tilde{\gA}_k^{s-1} \left(\tilde{\gB}_{k+1} u_{t-s} + \tilde{L}_{k+1} y_{t - s + 1}\right) \\
& + \sum_{s = 1}^t \tilde{\gA}_k^{s-1} \left(\tilde{\gB}_{k+1} u_{t-s} + \tilde{L}_{k+1} y_{t - s + 1}\right) - \sum_{s = 1}^t \tilde{\gA}_{k+1}^{s-1} \left(\tilde{\gB}_{k+1} u_{t-s} + \tilde{L}_{k+1} y_{t - s + 1}\right) \\
& + \tilde{\gA}_k^t \tilde{L}_k y_0 - \tilde{\gA}_{k+1}^t \tilde{L}_{k+1} y_0.
\end{align*}
We bound these three terms separately in the following. For the first term, observe that when $T_w \geq T_L$, we have
\begin{equation}
\label{equ: example_T1}
\begin{aligned}
\left\|\sum_{s = 1}^t \tilde{\gA}_k^{s-1} (\tilde{L}_k - \tilde{L}_{k+1}) y_{t - s + 1} \right\|_2 & \leq \sum_{s = 1}^t \kappa_2 (1 - \gamma_2)^{s-1} \cdot \frac{2\sqrt{C_w}}{\sqrt{T_L}} \cdot D_y D_0 \\
& \leq \frac{2\kappa_2 D_y \sqrt{C_w}}{\gamma_2 \sqrt{T_L}} \cdot D_0.
\end{aligned}    
\end{equation}
where the first inequality is due to the confidence set (Eqn. (\ref{equ: warm_up_confidence_set})), the induction hypothesis, and Assumption \ref{assump: stablility_kalman}. Thus, as long as $T_L \geq (12 \kappa_2 D_y \sqrt{C_w} \bar{\gK} / \gamma_2)^2 = O(\bar{\gK}^2)$, we know 
\begin{align}
\label{equ: induction1_term1_exp1}
\left\|\sum_{s = 1}^t \tilde{\gA}_k^{s-1} (\tilde{L}_k - \tilde{L}_{k+1}) y_{t - s + 1} \right\|_2 \leq \frac{D_0}{6 \bar{\gK}}.
\end{align}
Similarly, we have
$$ \left\|\tilde{\gB}_{k} - \tilde{\gB}_{k+1}\right\|_2 \leq \sqrt{C_w} \left(\frac{1 + N_LN_S}{\sqrt{T_B}} + \frac{N_S^2}{\sqrt{T_L}} + \frac{N_LN_S}{\sqrt{T_C}} \right) $$
by checking the definition of $\tilde{\gB}_k$ and $\tilde{\gB}_{k+1}$.

Therefore, as long as 
\begin{align*}
T_B & \geq \left(\frac{36 \kappa_2 D_u \sqrt{C_w} (1 + N_L N_S) \bar{\gK}}{\gamma_2}\right)^2 = O(\bar{\gK}^2) \\
T_L & \geq \left(\frac{36 \kappa_2 D_u \sqrt{C_w} N_S^2 \bar{\gK}}{\gamma_2}\right)^2 = O(\bar{\gK}^2) \\
T_C & \geq \left(\frac{36 \kappa_2 D_u \sqrt{C_w} N_L N_S \bar{\gK}}{\gamma_2}\right)^2 = O(\bar{\gK}^2),
\end{align*}
we have
$$ \left\|\tilde{\gB}_{k} - \tilde{\gB}_{k+1}\right\|_2 \leq \frac{\gamma_2}{12\kappa_2 D_u \bar{\gK}}. $$
Therefore, it holds that 
\begin{align}
\label{equ: induction1_term1_exp2}
\left\|\sum_{s = 1}^t \tilde{\gA}_k^{s-1} (\tilde{\gB}_{k} - \tilde{\gB}_{k+1}) u_{t-s} \right\|_2 \leq \frac{D_0}{6\bar{\mathcal{K}}}.
\end{align}
following the same reason as Eqn. (\ref{equ: example_T1}).

To sum up, we require $T_B \geq O(\bar{\gK}^2)$, $T_L \geq O(\bar{\gK}^2), T_C \geq O(\bar{\gK}^2)$ to ensure that as long as $T_w \geq \max(T_B, T_L, T_C)$, the first term is bounded by
\begin{align}
\label{equ: induction1_term1}
\left\|\sum_{s = 1}^t \tilde{\gA}_k^{s-1} \left((\tilde{\gB}_{k} - \tilde{\gB}_{k+1}) u_{t-s} + (\tilde{L}_k - \tilde{L}_{k+1}) y_{t - s + 1}\right) \right\|_2 \leq \frac{D_0}{3\bar{\mathcal{K}}}
\end{align}

from Eqn. (\ref{equ: induction1_term1_exp1}) and Eqn. (\ref{equ: induction1_term1_exp2}).

For the second term,
\begin{align}
& \left\|\sum_{s = 1}^t \left(\tilde{\gA}_k^{s-1} - \tilde{\gA}_{k+1}^{s-1} \right) \left(\tilde{\gB}_{k+1} u_{t-s} + \tilde{L}_{k+1} y_{t - s + 1}\right)    \right\| \\
& \leq \sum_{s=1}^t \xi_k \cdot (s - 1) \kappa_2^2 (1 - \gamma_2)^{s-2} \left((N_S + N_S^2N_L) D_u + N_L D_y \right)D_0 \quad \text{(see Eqn. (\ref{equ: decomposition_a^s_b^s}))}\\
& \leq \xi_k \cdot (\kappa_2/\gamma_2)^2 t (1 - \gamma_2)^{t-1} \left((N_S + N_S^2N_L) D_u + N_L D_y \right)D_0, 
\end{align}
where 
$$\xi_k = \left\|\tilde{\gA}_k - \tilde{\gA}_{k+1}\right\|_2.$$
Note that $C_1 \defeq \max_{t \geq 0} t(1 - \gamma_2)^{t-1}$ is a constant. Similarly, we require $T_A, T_C, T_L \geq O(\bar{\gK}^2)$ (see Eqn. (\ref{equ: example_T1}) and Eqn. (\ref{equ: induction1_term1_exp1}) for an example on how $T_A, T_C, T_L$ are derived) to ensure that as long as $T_w \geq \max(T_A, T_C, T_L)$, it holds that

\begin{align}
\label{equ: induction1_term2}
\xi_k \cdot (\kappa_2/\gamma_2)^2  C_1 \left((N_S + N_S^2N_L) D_u + N_L D_y \right)D_0 \leq \frac{D_0}{3\bar{\gK}}.
\end{align}

%Note that $C_1 \defeq \max_{t \geq 0} t (1 - \gamma_2)^t $ is a constant. 
%for our choice of $T_w \geq \max(T_A, T_C, T_L)$, we have $C_1 \xi_k \left(N_S D_u + N_L D_y \right) \leq D_0 / (3 \bar{\mathcal{K}})$.

Using the same argument, by setting an appropriate $T_L$ we have as long as $T_w \geq T_L$, it holds that 
\begin{align}
\label{equ: induction1_term3}
\left\|\tilde{\gA}_k^t \tilde{L}_k y_0 - \tilde{\gA}_{k+1}^t \tilde{L}_{k+1} y_0\right\| \leq \frac{D_0}{3\bar{\mathcal{K}}}.
\end{align}
As a consequence, combining Eqn. (\ref{equ: induction1_term1}), (\ref{equ: induction1_term2}), and (\ref{equ: induction1_term3}) gives
\begin{align}
\label{equ: induction1_term}
& \left\|\hat{x}_{t|t, \tilde{\Theta}_k} -  \hat{x}_{t|t, \tilde{\Theta}_{k + 1}}\right\|_2 \leq \frac{D_0}{\bar{\mathcal{K}}} \\
& \left\|\hat{x}_{t|t, \tilde{\Theta}_{k + 1}}\right\|_2 \leq D_0 \left(1 + \frac{1}{\bar{\mathcal{K}}}\right).
\end{align}
For now we have proved the induction at the beginning step $t = S(k+1)$ of episode $k+1$, it suffices to show that $\|\hat{x}_{t|t, \tilde{\Theta}_{k+1}}\|_2 \leq (1 + 1/\bar{\mathcal{K}})D_0$ for the rest of episode $k + 1$.  

\subsubsection*{\textbf{Induction Stage 2}: show that $\|\hat{x}_{t|t, \tilde{\Theta}_{k+1}}\|_2 \leq (1 + 1/\bar{\mathcal{K}})D_0$ holds for the whole episode $k + 1$.}
Suppose LQG-VTR does not terminate during episode $k+1$, then the algorithms follows the optimal control of $\tilde{\Theta}_{k+1}$. Denote $\tilde{\Theta}_{k+1} = (\tilde{A}, \tilde{B}, \tilde{C})$, we follow the Eqn. (46) of \citet{lale2020regret} to decompose $\hat{x}_{t|t, \tilde{\Theta}_{k+1}}$ for $S(k+1) + 1\leq t \leq T(k+1)+1$:
%we use the decomposition of $\hat{x}_{t|t, \tilde{\Theta}_{k+1}}$ for $S(k+1) + 1\leq t \leq T(k+1)+1$ following the Eqn. (46) of \citet{lale2020regret}:
\begin{equation}
\label{equ: induction2_main_term}
\begin{aligned}
\hat{x}_{t|t, \tilde{\Theta}_{k+1}} = M \hat{x}_{t-1 \mid t-1, \tilde{\Theta}_{k+1}} +\tilde{L} C^\star \left(\hat{x}_{t \mid t-1, \Theta^\star}-\hat{x}_{t \mid t-1, \tilde{\Theta}}\right)+\tilde{L} z_t+\tilde{L} C^\star\left(x_t-\hat{x}_{t \mid t-1, \Theta^\star}\right),
\end{aligned}    
\end{equation}

where $M \defeq (\tilde{A}-\tilde{B} \tilde{K}-\tilde{L}(\tilde{C} \tilde{A}-\tilde{C} \tilde{B} \tilde{K}-C^\star \tilde{A}+C^\star \tilde{B} \tilde{K}))$. The main idea to prove induction stage 2 is to show that $\hat{x}_{t|t, \tilde{\Theta}_{k+1}}$ will also be bounded by $(1 + 1/\bar{\mathcal{K}})D_0$ as long as $\hat{x}_{t-1|t-1, \tilde{\Theta}_{k+1}}$ is bounded by $(1 + 1/\bar{\mathcal{K}})D_0$, given the conclusion $\hat{x}_{S(k+1)|S(k+1), \tilde{\Theta}_{k+1}} \leq (1 + 1/\bar{\mathcal{K}})D_0$ of the induction stage 1. To achieve this end, we divide the induction stage 2 into 2 phases. We first show that $\hat{x}_{t \mid t-1, \Theta^\star}-\hat{x}_{t \mid t-1, \tilde{\Theta}}$ is small in the first phase, and prove the above main idea formally in the second phase.

Before coming to the main proof for induction stage 2, we first observe that $ \bar{e}_t \defeq z_t+ C^\star\left(x_t-\hat{x}_{t \mid t-1, \Theta^\star}\right)$ is a $(N_S N_\Sigma + 1)$-subGaussian noise, so with probability at least $1 - \delta / 4$ we know $\|\bar{e}_t\|_2$ is bounded by $C_2 (N_S N_\Sigma + 1)\sqrt{n \log (H)}$ for any $1 \leq t \leq H$ and some constant $C_2$. Assume this high probability event happens for now.

\paragraph{Phase 1: bound $\hat{x}_{t \mid t-1, \Theta^\star}-\hat{x}_{t \mid t-1, \tilde{\Theta}}$} We mainly follow the decomposition and induction techniques in \citet{lale2020regret} to finish phase 1. However, we note that our analysis here is much more complicated than the analysis in \citet{lale2020regret}, because they have \textit{only one} commit episode after the pure exploration stage (the agent chooses random actions in their pure exploration stage, just the same as the model selection procedure in LQG-VTR) but we have multiple episodes here.

Define $\Delta_{t} \defeq \hat{x}_{t + S(k+1)|t+ S(k+1)-1, \Theta^\star} - \hat{x}_{t+ S(k+1)|t+ S(k+1)-1, \tilde{\Theta}_{k+1}}$ for $1\leq t \leq T(k+1) - S(k+1)+1$. For $t = 1$, we have
\begin{equation}
\label{equ: decomposition_delta1}
\begin{aligned}
\Delta_{1} & = \hat{x}_{S(k + 1) + 1|S(k + 1), \Theta^\star} - \hat{x}_{S(k + 1) + 1|S(k + 1), \tilde{\Theta}_{k+1}}  \\
& = A^\star \hat{x}_{S(k + 1)|S(k + 1), \Theta^\star} + B^\star u_{S(k+1)} - \tilde{A} \hat{x}_{S(k + 1)|S(k + 1), \tilde{\Theta}_{k+1}} - \tilde{B} u_{S(k+1)}.
\end{aligned}
\end{equation}
%where $\|\hat{x}_{S(k + 1)|S(k + 1), \Theta^\star} - \hat{x}_{S(k + 1)|S(k + 1), \tilde{\Theta}_{k+1}}\|_2 \leq D_0 / \bar{\gK}$.

Since we have assumed $\Theta^\star \in \gC$, we know $\|\hat{x}_{S(k + 1)|S(k + 1), \Theta^\star} - \hat{x}_{S(k + 1)|S(k + 1), \tilde{\Theta}_{k+1}}\|_2$ will be small according to induction stage 1. Specifically, we can set $T_A^\prime, T_B^\prime \geq O((N_LN_S\kappa_3)^2 / (1 - \gamma_1)^2)$ so that as long as $T_w \geq T_A^\prime, T_B^\prime$, we have $\|\Delta_{1}\|_2 \leq D_0 (1 - \gamma_1) / (8N_LN_S\kappa_3)$ for a constant $\kappa_3$ defined later. Now that we have come up with an upper bound on $\|\Delta_{1}\|_2$, we hope to bound each $\|\Delta_{t}\|_2$ and prove Eqn. (\ref{equ: large_enough_D0_2}).

For $1 < t \leq T(k+1) - S(k+1) + 1$, we follow Eqn. (49) and Eqn. (50) of \citet{lale2020regret} to perform the decomposition
\begin{align}
\label{equ: decomposition_Delta}
\Delta_t = G_0^{t-1} \Delta_{1} + G_1 \sum_{j = 1}^{t - 1} G_0^{t - 1 - j} \left(G_2^{j - 1} \hat{x}_{S(k+1)+1|S(k+1), \Theta^\star} + \sum_{s=1}^{j - 1} G_2^{j - s - 1} G_3 \Delta_s\right) + \tilde{e}_t, 
\end{align}
where $\tilde{e}_t$ is a noise term, whose $l_2$-norm is bounded by $O(\sqrt{p\log (H)})$ with probability $1 - \delta / 4$ for any episode $k$ and step $t$. We again assume this high probability event happens for now. We define $\Lambda_{\tilde{\Theta}} \defeq \tilde{A} - A^\star - \tilde{B} \tilde{K} + B^\star \tilde{K}$ %\han{we have used $\delta_\Theta$}
and thus 
\begin{equation}
\begin{aligned}
G_0 &=\left(A^\star+\Lambda_{\tilde{\Theta}}\right)(I-\tilde{L} \tilde{C}), \\
G_1 &=\left(A^\star+\Lambda_{\tilde{\Theta}}\right) \tilde{L}(\tilde{C}-C^\star)-\Lambda_{\tilde{\Theta}}, \\
G_2 &=A^\star-B^\star \tilde{K}+B^\star \tilde{K} \tilde{L}(\tilde{C}-C^\star), \\
G_3 &=(A^\star-B^\star \tilde{K}) L^\star+B^\star \tilde{K}(L^\star-\tilde{L}) .
\end{aligned}
\end{equation}
Using a similar argument as in \citet{lale2020regret}, we define $T_{g0}, T_{g2}$ so that $G_0, G_2$ are $(\kappa_3, \gamma_3)$-strongly stable for some $\gamma_3 \leq \min(\gamma_2/2, (1 - \gamma_1) / 2)$ and $\kappa_3 \geq 1$ as long as $T_w \geq \max(T_{g0}, T_{g2})$. This is possible because as long as $\tilde{\Theta}$ is close enough to $\Theta$, $G_0$ will be strongly stable according to Assumption \ref{assump: stablility_kalman} while $G_2$ will be contractible in that $\|G_2\|_2 \leq (1 + \gamma_1) / 2$ according to Assumption \ref{assump: stablility_control}. To be more concrete, we show how to construct $T_{g2}$ in the following, and the construction of $T_{g0}$ are analogous to that of $T_{g2}$. 

Observe that 
$$ G_2 - \left(\tilde{A} - \tilde{B} \tilde{K}\right) = A^\star - \tilde{A} + (\tilde{B} - B^\star) \tilde{K} + B^\star \tilde{K} \tilde{L}(\tilde{C}-C^\star). $$
By a similar argument used in Eqn. (\ref{equ: example_T1}) and Eqn. (\ref{equ: induction1_term1_exp1}), we know
$$ \left\|A^\star - \tilde{A} + (\tilde{B} - B^\star) \tilde{K} + B^\star \tilde{K} \tilde{L}(\tilde{C}-C^\star)\right\|_2 \leq \frac{1 - \gamma_1}{2}, $$
as long as $T_w \geq T_{g2} \defeq 36 C_w N_S^2 N_K^2 N_L^2 / (1 - \gamma_1)^2$. This further implies 
$$ \|G_2\|_2 \leq \|\tilde{A} - \tilde{B} \tilde{K}\|_2 + \frac{1 - \gamma_1}{2} \leq \frac{1 + \gamma_1}{2}, $$
where the second inequality is by Assumption \ref{assump: stablility_kalman}.

%The strong stability property ensures that $\|G_0^{t-1}\|_2 \leq \kappa_3 (1 - \gamma_3)^{t - 1}$ for $t > 1$. 
Now we prove that 
%As $G_1$ can be arbitrarily small by tuning $T_A, T_B, T_C$, we can always define $T_{g1}$ to guarantee that as long as $T_w \geq T_{g1}$ and $D_0$ is large enough,  for $1 \leq t \leq T(k+1) - S(k+1) + 1$
\begin{align}
\label{equ: large_enough_D0_2}
\left\|\Delta_t\right\|_2 \leq \frac{1 - \gamma_1}{4N_LN_S} D_0
\end{align}
holds for all $1 \leq t \leq T(k+1) - S(k+1) + 1$.

First of all, we know $\|\Delta_{1}\|_2 \leq D_0 (1 - \gamma_1) / (8N_LN_S\kappa_3) \leq (1 - \gamma_1) D_0 / (4N_LN_S)$ according to Eqn. (\ref{equ: decomposition_delta1}) and $\kappa_3 \geq 1$. For any fixed $t$, suppose $\|\Delta_{s}\|_2 \leq (1 - \gamma_1) D_0 / (4N_LN_S)$ for all $1 \leq s \leq t - 1$, then the decomposition equation (\ref{equ: decomposition_Delta}) implies
\begin{align*}
\left\|\Delta_t\right\|_2 & = \left\|G_0^{t-1} \Delta_{1} + G_1 \sum_{j = 1}^{t - 1} G_0^{t - 1 - j} \left(G_2^{j - 1} \hat{x}_{S(k+1)+1|S(k+1), \Theta^\star} + \sum_{s=1}^{j - 1} G_2^{j - s - 1} G_3 \Delta_s\right) + \tilde{e}_t\right\|_2 \\
& \leq \left\|G_1 \sum_{j = 1}^{t - 1} G_0^{t - 1 - j} \left(G_2^{j - 1} \hat{x}_{S(k+1)+1|S(k+1), \Theta^\star} + \sum_{s=1}^{j - 1} G_2^{j - s - 1} G_3 \Delta_s\right)\right\|_2 \\
& \quad + \kappa_3 (1 - \gamma_3)^{t-1} \|\Delta_1\|_2 + \|\tilde{e}_t\|_2 \\
& \leq \|G_1\|_2 \cdot (t - 1) \kappa_3^2 (1 - \gamma_3)^{t-1} \cdot 2N_S(1 + N_K)D_0 + \left\|G_1 \sum_{j = 1}^{t - 1} G_0^{t - 1 - j} \sum_{s=1}^{j - 1} G_2^{j - s - 1} G_3 \Delta_s\right\|_2 \\
& \quad + \kappa_3 (1 - \gamma_3)^{t-1} \|\Delta_1\|_2 + \|\tilde{e}_t\|_2 \\
& \leq \|G_1\|_2 \cdot (t - 1) \kappa_3^2 (1 - \gamma_3)^{t-1} \cdot 2N_S(1 + N_K)D_0 + \left(\frac{\kappa_3}{\gamma_3}\right)^2 \|G_1G_3\|_2 \cdot \frac{1 - \gamma_1}{4N_LN_S}D_0 \\
& \quad + \kappa_3 (1 - \gamma_3)^{t-1} \|\Delta_1\|_2 + \|\tilde{e}_t\|_2.
\end{align*}
The first inequality follows from the strong stability of $G_0$ and the concentration of noises $\tilde{e}_t$. The second inequality is due to the strong stability of $G_0, G_2$, and $\|\hat{x}_{S(k + 1)+1|S(k + 1), \Theta^\star}\|_2 \leq (1 + N_K)(N_S + N_S / \bar{\gK}) D_0 \leq 2N_S(1 + N_K)D_0$ according to induction stage 1. The last inequality is because the assumption that $\|\Delta_{s}\|_2 \leq (1 - \gamma_1) D_0 / (4N_LN_S)$ for all $1 \leq s \leq t - 1$.

When $T_w \geq T_{g1}$ for some $T_{g1} \geq 1$ we know that 
\begin{align*}
\|G_1\|_2 & \leq \left(N_S + \frac{\sqrt{C_w}(1 + N_K)}{\sqrt{T_{g1}}} \right) \frac{N_L\sqrt{C_w}}{\sqrt{T_{g1}}} + \frac{\sqrt{C_w}(1 + N_K)}{\sqrt{T_{g1}}} \\
& \leq \frac{\sqrt{C_w} \left(N_LN_S + (\sqrt{C_w} N_L + 1)(1 + N_K)\right)}{\sqrt{T_{g1}}}.
\end{align*}
Since $\max_{t \geq 0} (t - 1) (1 - \gamma_3)^{t-1}$ and $\|G_3\|_2$ are both (instance-dependent) constants, there exists an instance-dependent constant $T_{g1}$ that for $T_w \geq T_{g1}$, it holds that 
\begin{align}
\label{equ: decomposition_deltat_term1}
\|G_1\|_2 \cdot (t - 1) \kappa_3^2 (1 - \gamma_3)^{t-1} \cdot 2N_S(1 + N_K)D_0 & \leq \frac{1 - \gamma_1}{24N_LN_S} D_0. \\
\label{equ: decomposition_deltat_term2}
\|G_1\|_2 \cdot \left(\frac{\kappa_3}{\gamma_3}\right)^2 \|G_3\|_2 \cdot \frac{1 - \gamma_1}{4N_LN_S}D_0 & \leq \frac{1 - \gamma_1}{24N_LN_S} D_0.
\end{align}
Moreover, the bound on $\|\Delta_1\|_2$ from Eqn. (\ref{equ: decomposition_delta1}) implies that 
\begin{align}
\label{equ: decomposition_deltat_term3}
\kappa_3 (1 - \gamma_3)^{t-1} \|\Delta_1\|_2 \leq \frac{1 - \gamma_1}{8N_LN_S} D_0.
\end{align}

Note that we can set $C_0$ so that $D_0$ is large enough to guarantee 
\begin{align}
\label{equ: decomposition_deltat_term4}
\frac{(1 - \gamma_1)D_0}{24 N_L N_S} \geq \|\tilde{e}_t\|_2 = O(\sqrt{p\log (H)})
\end{align}
for any episode $k$ and step $t$ in it. 

The proof of Eqn. (\ref{equ: large_enough_D0_2}) can be obtained by combining Eqn. (\ref{equ: decomposition_deltat_term1}), (\ref{equ: decomposition_deltat_term2}), (\ref{equ: decomposition_deltat_term3}), and (\ref{equ: decomposition_deltat_term4}).

As a final remark, $T_{g0}, T_{g1}, T_{g2}$ has no dependency on $H$.

\paragraph{Phase 2: finish induction stage 2} Now we come back to the main decomposition formula (\ref{equ: induction2_main_term}). Define $T_M = O(2^2 / (1 - \gamma_1)^2)$ so that $\|C - \tilde{C}\|_2$ is small enough to guarantee $\|M\|_2 \leq (1 + \gamma_1) / 2$ (see the definition of $M$ at the beginning of induction stage 2) as long as $T_w \geq T_M$. Suppose we know $\|\hat{x}_{t - 1|t - 1, \tilde{\Theta}_{k+1}}\|_2 \leq (1 + 1 / \bar{\gK}) D_0$ for time step $t - 1$, then as long as $D_0 \geq 4C_2 N_L(N_S N_\Sigma + 1) \sqrt{n \log (H)} / (1 - \gamma_1) $, we have
\begin{align}
\label{equ: large_enough_D0_3}
\left\|\hat{x}_{t|t, \tilde{\Theta}_{k+1}}\right\|_2 & \leq \|M\|_2 \left(1 + \frac{1}{\bar{\gK}}\right) D_0 + \tilde{L} C^\star \left\|\Delta_{t - S(k+1)}\right\|_2 + \tilde{L} \bar{e}_t \\%z_t+\tilde{L} C^\star\left(x_t-\hat{x}_{t \mid t-1, \Theta^\star}\right) \\
& \leq \frac{1 + \gamma_1}{2} \left(1 + \frac{1}{\bar{\gK}}\right) D_0 + \frac{1 - \gamma_1}{4} D_0 + \frac{1 - \gamma_1}{4} D_0 \\
& \leq \left(1 + \frac{1}{\bar{\gK}}\right) D_0.
\end{align}
Fortunately, we know that $\|\hat{x}_{S(k+1)|S(k+1), \tilde{\Theta}_{k+1}}\|_2 \leq (1 + 1 / \bar{\gK})D_0$ according to induction stage 1. Using the recursion argument above, we can show that $\|\hat{x}_{t|t, \tilde{\Theta}_{k+1}}\|_2 \leq (1 + 1 / \bar{\gK})D_0$ holds for $S(k+1) \leq t \leq T(k + 1) +1$.

We have proved the induction from episode $k$ to $k+1$ so far under the condition that LQG-VTR does not terminate at Line \ref{alg: action_clipping} during episode $k$. On the other hand, if for some step $S(k+1)+1 \leq t_0 \leq T(k+1)$ the algorithm entered Line \ref{alg: action_clipping} (note that it cannot be $t_0 = S(k+1)$ because induction stage 1 does not depend on whether the algorithm terminates), then the decomposition for $\hat{x}_{t|t, \tilde{\Theta}_{k+1}}$ (\ref{equ: induction2_main_term}) and decomposition for $\Delta_t$ (\ref{equ: decomposition_Delta}) still works for steps $t < t_0$. Therefore, there must be some high probability event fails at step $t_0 - 1$ (i.e., $\bar{e}_{t_0-1}$ and/or $\tilde{e}_{t_0 - 1}$ explode). This is impossible since we have assumed that all the high probability events happen.

Thus under the condition that all the high probability events happen, it suffices to choose the constant $C_0 = O((\sqrt{n} + \sqrt{p}) \log H)$ so that Eqn. (\ref{equ: large_enough_D0_1}), (\ref{equ: decomposition_deltat_term4}), (\ref{equ: large_enough_D0_3}) all hold, and LQG-VTR does not terminate at Line \ref{alg: action_clipping} due to the definition of $\bar{X}_1 = eC_0$.  Since these high probability events happen with probability $1 - \delta$ and recall that the total number of episodes is bounded by $\bar{\gK}$, we know that with probability at least $1 - \delta$, 
\begin{align}
\|\hat{x}_{t|t, \tilde{\Theta}_{t}}\|_2 \leq C_0 \left(1 + \frac{1}{\bar{\gK}}\right)^{\bar{\gK}} \leq e C_0 = \bar{X}_1 = O((\sqrt{n} + \sqrt{p})\log H).
\end{align}
As a result, we know $\|u_t\|_2 \leq \bar{U} \defeq N_K \bar{X}_1$ because $u_t = -K(\tilde{\Theta}_{t}) \hat{x}_{t|t, \tilde{\Theta}_{t}}$. Moreover, 
\begin{align*}
y_t & = C^\star x_t + z_t = C^\star \hat{x}_{t|t-1, \tilde{\Theta}_{t-1}}+ C^\star \left(x_t - \hat{x}_{t|t-1, \tilde{\Theta}_{t-1}}\right) + z_t \\
& = C^\star \left(\tilde{A}_{t-1} \hat{x}_{t-1|t-1, \tilde{\Theta}_{t-1}} + \tilde{B}_{t-1} u_{t-1}\right) + C^\star \left(x_t  - \hat{x}_{t|t-1, \tilde{\Theta}_{t-1}}\right) + z_t \\
& = C^\star \left(\tilde{A}_{t-1} \hat{x}_{t-1|t-1, \tilde{\Theta}_{t-1}} + \tilde{B}_{t-1} u_{t-1}\right) \\
& \qquad + C^\star \left(x_t  - \hat{x}_{t|t - 1, \Theta^\star} + \hat{x}_{t|t - 1, \Theta^\star} - \hat{x}_{t|t-1, \tilde{\Theta}_{t-1}}\right) + z_t .
\end{align*}
Observe that $\|u_{t-1}\|_2 \leq \bar{U}$, the $l_2$-norm of noise term $x_t  - \hat{x}_{t|t - 1, \Theta^\star}$ is bounded by $C_2 (N_S N_\Sigma + 1)\sqrt{n \log (H)}$, and the last term $\hat{x}_{t|t - 1, \Theta^\star} - \hat{x}_{t|t-1, \tilde{\Theta}_{t-1}}$ can be bounded through Eqn. (\ref{equ: large_enough_D0_2}), we have 
$$ \|y_t\|_2 \leq \bar{Y} \defeq N_S^2 (\bar{X}_1 + \bar{U}) + C_2 (N_S N_\Sigma + 1)\sqrt{n \log (H)} + \frac{(1 - \gamma_1) \bar{X}_1}{4 N_L}. $$

Note that $\hat{x}_{0|0, \Theta} = L(\Theta) y_0$, by Eqn. (\ref{equ: belief_recursion}) 
\begin{align*}
\left\|\hat{x}_{t|t, \Theta}\right\|_2 & \leq \kappa_2 (1 - \gamma_2)^t N_L \bar{Y} + \sum_{s=1}^t \kappa_2 (1 - \gamma_2)^{s - 1} \left((1 + N_L N_S) N_S \bar{U} + N_L \bar{Y}\right) \\
& \leq \bar{X}_2 \defeq \frac{\kappa_2 (1 + N_L N_S) N_S \bar{U} + \kappa_2 (1 + \gamma_2) N_L \bar{Y}}{\gamma_2} .
\end{align*}
\end{proof}
At the end of this section, we set the value of $T_w$ so that the confidence set $\gC$ is accurate enough to stabilize the LQG system:
\begin{align}
\label{equ: definition_tw}
T_w \defeq \max(T_A, T_B, T_C, T_L, T_A^\prime, T_B^\prime, T_{g0}, T_{g1}, T_{g2}, T_M) = \tilde{O}(\bar{\gK}^2).
\end{align}

\section{Proof of Lemma \ref{lemma:reduction}} \label{appendix:pf:reduction}

\begin{proof}[Proof of Lemma \ref{lemma:reduction}]
    %For the first part of the lemma, suppose we have a policy $\pi$ such that 
    %$$ \forall \Theta \in \set, V^{\pi}(\Theta) - V^\star(\Theta) \leq C $$
    %for constant $C$.
    %By the definition of $\pi_{\mathrm{RT}}$ in \eqref{eq:def:RT},  we have
    %$$ \mathrm{Gap}(\pi_{\mathrm{RT}}) = V^{\pi_{\mathrm{RT}}}(\Theta^\star) - V^\star(\Theta^\star) \leq \max_{\Theta \in \set} V^{\pi_{\mathrm{RT}}}(\Theta) - V^\star(\Theta) \leq \max_{\Theta \in \set} V^{\pi}(\Theta) - V^\star(\Theta) \leq C. $$
    
    %For the second part, 
    The key observation to prove the lemma is that for any $\Theta \in \set$, the difference of $H$-step optimal cost between the infinite-horizon setting of $\Theta$ and finite-horizon setting of $\Theta$ is bounded:
    $$ \left|V^\star(\Theta) - (H+1)J^\star(\Theta)\right| \leq D_h, $$
    where $D_h$ is independent of $H$. We now prove this statement.
    
    We first recall the optimal control of the finite horizon LQG problem. The cost for any policy $\pi$ in a $H$-step finite horizon LQG problem is defined as 
    $$ \mathbb{E}_{\pi} \Big[\sum_{h = 0}^H y_h^\top Q y_h + u_h^\top R u_h \Big], $$
    where the initial state $x_0 \sim \gN(0, I)$.

    The optimal control depends on the belief state $ \E[x_h|\gH_h]$, where $\gH_h$ is the history observations and actions up to time step $h$. It is well known that $x_h$ is a Gaussian variable, whose mean and covariance can be estimated by the Kalman filter. %For the remaining of this section, we use $A, B, C$ in place of $A^\star, B^\star, C^\star$ to keep the symbols clean.
    
    Define the initial covariance estimate $\Sigma_{0\mid -1} \defeq I$, then Kalman filter gives
    \begin{align}
    \label{equ: reduction_dareS}
    \Sigma_{h + 1 \mid h} = A \Sigma_{h \mid h - 1} A^\top - A \Sigma_{h \mid h - 1} C^\top (C \Sigma_{h \mid h - 1} C^\top + I)^{-1} C \Sigma_{h \mid h - 1} A^\top + I,    
    \end{align}
    and 
    $$ \Sigma_{h \mid h} = (I - L_h C) \Sigma_{h \mid h - 1}, $$
    where 
    $$ L_h = \Sigma_{h \mid h - 1} C^\top \left(C \Sigma_{h \mid h - 1} C^\top + I\right)^{-1}. $$
    The optimal control is linear in the belief state $ \E[x_h|\gH_h]$, with an optimal control matrix $K_h$. This control matrix in turn depends on a Riccati difference equation as follows. Define $P_H \defeq C^\top Q C$, the Riccati difference equation for $P_h$ takes the form:
    \begin{align}
    \label{equ: reduction_dareP}
     P_h = A^\top P_{h+1} A + C^\top Q C - A^\top P_{h+1} B (R + B^\top P_{h+1} B)^{-1} B^\top P_{h+1} A.
    \end{align}
    The optimal control matrix is $K_h = (R + B^\top P_h B)^{-1} B^\top P_h A$. Then the optimal cost of the finite horizon LQG problem $V^\star(\Theta)$ equals
    \begin{align}
    V^\star(\Theta) = \sum_{h=0}^H \tr{P_h (\Sigma_{h \mid h - 1} - \Sigma_{h \mid h})} + \sum_{h=0}^H \tr{C^\top Q C \Sigma_{h \mid h}}.
    \end{align}
    For the infinite-horizon view of the same LQG instance $\Theta$, we denote the stable solution to two DAREs in \eqref{equ: reduction_dareS} and \eqref{equ: reduction_dareP} by $\Sigma$ and $P$ (i.e., we use $\Sigma$, $P$ and $L$ to represent $\Sigma(\Theta)$, $P(\Theta)$ and $L(\Theta)$), then the optimal cost $J^\star(\Theta)$ equals
    \begin{align}
    J^\star(\Theta) = \tr{P LC \Sigma} + \tr{C^\top Q C (I - LC) \Sigma}.    
    \end{align}
    Now let us check the difference between $(H+1)J^\star(\Theta)$ and $V^\star(\Theta)$. %Fortunately, we know that the initial covariance matrix $\Sigma_{0|-1}$ is already in the steady-state $\Sigma$, so we have $\Sigma_{h|h-1} = \Sigma$ for all $h < H$, and $\Sigma_{h|h} = (I - L(\Theta^\star)C) \Sigma$ for all $h \leq H$.
    The difference can be bounded as
    \begin{align}
    V^\star(\Theta) - (H+1)J^\star(\Theta) & = \sum_{h=0}^H \left(\tr{P_h L_h C \Sigma_{h \mid h - 1}} - \tr{P L C \Sigma}\right) \\
    & + \sum_{h=0}^H \left(\tr{C^\top Q C (I - L_h C) \Sigma_{h|h-1}} - \tr{C^\top Q C (I - L C) \Sigma}\right) %\sum_{h=0}^H \tr{(P - P_h) L(\Theta^\star) C \Sigma}.
    \end{align}
    Positive definite matrix $Q$ can be decomposed as $Q = \Gamma^\top \Gamma $, where $\Gamma \in \R^{p \times p}$ is also positive definite. Therefore, for matrix $C^\top Q C = (\Gamma C)^\top \Gamma C$, we can show that $(A, \Gamma C)$ is also observable owing to $(A, C)$ observable and $\Gamma$ positive definite. Since $(A, \Gamma C)$ is observable, and $(A, B)$ is controllable, we know that the DARE for $P$ has a unique positive semidefinite solution, and the Riccati difference sequence converges \emph{exponentially fast} to that unique solution (see e.g., \citet[][Theorem 4.2]{chan1984convergence}, \citet[][Theorem 2.2]{caines1970discrete}, and the references therein). That is, there exists a uniform constant $a_0$ such that 
    \begin{align}
    \left\|P - P_h\right\|_2 \leq a_0 \gamma_1^{H - h},
    \end{align}
    since $\|A - BK\|_2 \leq \gamma_1$ for any $\Theta \in \set$.
    
    %To see this, we perform a similar decomposition for $P_h - P$ used in the proof of Theorem 4.2 of \citet{chan1984convergence}: 
    %\begin{align}
    %P_h - P = \left((A - BK)^\top\right)^{H-h}(P_H - P) \phi^P_h, 
    %\end{align}
    %where $\phi^P_h \defeq (A - BK_H) (A - BK_{H-1}) \cdots (A - BK_{h+1})$.
    
    %Further we can show that $(\phi^P_h)^\top C^\top $
    %Note that $P_h$ and $K_h$ converges ($P_h \to P$ and $K_h \to K$) to the stationary solution of DARE, so we know $\|\phi^P_h\|_2$ is uniformly upper bounded for any $0 \leq h \leq H$ according to Assumption \ref{assump: stablility_control}. Thus, it suffices to choose $a_0 = \|P_H - P\|_2 \cdot \max_{0 \leq h \leq H} \|\phi^P_h\|_2$.
    
    Similarly, $\Sigma_{h|h-1}$ also converges to $\Sigma$ exponentially fast in that there exists a (instance-dependent) constant $b_0$ \citep{chan1984convergence, lale2021adaptive} such that 
    \begin{align}
    \left\|\Sigma - \Sigma_{h|h-1}\right\|_2 \leq b_0 \kappa_2 (1 - \gamma_2)^h,
    \end{align}
    which further implies that $L_h$ also converges to $L$ exponentially fast for some constant $c_0$:
    \begin{align}
    \left\|L - L_h\right\|_2 \leq c_0 \kappa_2 (1 - \gamma_2)^h,
    \end{align}
    since $\Sigma_{0|-1} = I$ is positive definite (the techniques of Lemma 3.1 of \cite{lale2020regret} can be used here).
    
    To show the existence of $b_0$, observe that
    \begin{align}
    \Sigma_{h|h-1} - \Sigma = \left(A - ALC\right)^{h}(\Sigma_{0|-1} - \Sigma) \phi_h,
    \end{align}
    where $\phi_h \defeq (A - AL_0C)^\top (A - AL_1C)^\top \cdots (A - AL_{h-1}C)^\top$.
    
    In the proof of Theorem 4.2 of \cite{chan1984convergence} we have
    \begin{align}
    \Sigma_{h|h-1} = \phi_h^\top \Sigma_{0|-1} \phi_h + ~ \text{non-negative constant matrices}.
    \end{align}
    
    Theorem 4.1 of \cite{caines1970discrete} shows that for any $\Theta \in \set$ and any $0 \leq h \leq H$ it holds that 
    \begin{align}
    \|\Sigma_{h|h-1}\|_2 \leq \frac{\kappa_2^2 (1 + \kappa_2^2)}{2\gamma_2 - \gamma_2^2} = N_\Sigma,
    \end{align}
    which helps us determine $b_0 = (1 + N_\Sigma) \sqrt{N_{\Sigma}}$.
    
    The existence of $a_0$ also relies on the state transition matrix \citep{zhang2021regret} $\phi^\prime_h = (A - BK_H)(A - BK_{H-1})\cdots (A - BK_{h+1})$. As a minimum requirement to tackle the finite horizon LQR control problem, we shall assume $\|\phi^\prime_h\|_2$ is uniformly upper bounded (i.e., the system $\Theta$ is stable) for any $\Theta$ (in fact, people often assume a stronger condition that it is exponentially stable). According to Assumption \ref{assump: stablility_control}, we can now identify the existence of $a_0$.
    
    As a result, we have
     \begin{align}
    & \left|V^\star(\Theta^\star) - (H+1)J^\star(\Theta^\star)\right| = \left|\sum_{h=0}^H \left(\tr{P_h L_h C \Sigma_{h \mid h - 1}} - \tr{P L C \Sigma}\right) \right. \\
    & + \left. \sum_{h=0}^H \left(\tr{C^\top Q C (I - L_h C) \Sigma_{h|h-1}} - \tr{C^\top Q C (I - L C) \Sigma}\right)\right| \\
    & \leq \sum_{h=0}^H \left|\tr{(P_h - P) L_h C \Sigma_{h \mid h - 1}}\right| + \sum_{h=0}^H \left|\tr{P \left(L_h C \Sigma_{h \mid h - 1} - L C \Sigma\right)}\right| \\
    & + \sum_{h=0}^H \left|\tr{C^\top Q C \left(\Sigma_{h|h-1} - \Sigma + LC\Sigma - L_h C \Sigma_{h|h-1}}\right) \right| \\
    & \leq \sum_{h=0}^H  O\left(n a_0 \gamma_1^{H-h} + n \kappa_2 (b_0 + c_0) (1 - \gamma_2)^h \right) \\
    & \leq D_h \defeq O\left(\frac{na_0}{1 - \gamma_1} + \frac{n\kappa_2(b_0 + c_0)}{\gamma_2}\right),
    %& \leq \sum_{h=0}^H n \left\|P - P_h\right\|_2 \left\|L(\Theta^\star) C  \Sigma\right\|_2 \\
    %& \leq \sum_{h=0}^H n \left\|L(\Theta^\star) C  \Sigma\right\|_2 b_{\Theta^\star} \cdot a_{\Theta^\star}^{H - h} \\
    %& \leq \frac{n \left\|L(\Theta^\star) C  \Sigma\right\|_2 b_{\Theta^\star}}{1 - a_{\Theta^\star}},
    \end{align}
    where the second inequality  uses $\tr{X} \leq n\|X\|_2$ for any $X \in \R^{n \times n}$. %Note the final result is independent of $H$, which concludes our proof.

    %Under Assumptions, \ref{assump: stability}, \ref{assump: stablility_control}, and \ref{assump: stablility_kalman}, 
    Now for any (history-dependent) policy $\pi$ and any $\Theta \in \set$, 
\$
V^\pi(\Theta) - V^\star(\Theta) \leq \underbrace{V^\pi(\Theta) - (H+1)J^\star(\Theta)}_{\mathrm{Regret}(\pi; H)} + D_h.
\$
Combined with the definition of $\pi_{\mathrm{RT}}$ in Eqn. \eqref{eq:def:RT}, we have
\#
\label{equ: reduction}
\mathrm{Gap}(\pi_{\mathrm{RT}}) = V^{\pi_{\mathrm{RT}}}(\Theta^\star) - V^\star(\Theta^\star) \leq \max_{\Theta \in \set} \left(V^\pi(\Theta) - V^\star(\Theta)\right)%\max_{\Theta \in \set} [V^{\pi_{\mathrm{RT}}}(\Theta) - V^\star(\Theta) ] 
\leq \mathrm{Regret}(\pi; H) + D_h,
\#
%which further implies that $\mathrm{Gap}(\pi_{\mathrm{RT}}) \le O(\sqrt{H})$.
where the first inequality is by the minimax property of $\pi_{\mathrm{RT}}$ and $\Theta^\star \in \set$.

\end{proof}

\section{Proof of Theorem \ref{thm: reg_base}}
\label{appendix: proofreg}

For all the conclusions in this section, the bounded state property of LQG-VTR (Lemma \ref{lem: bound_state}) is a crucial condition. Let $\gV$ be the event that for all $1 \leq t \leq H$, $\|\hat{x}_{t|t, \tilde{\Theta}_t}\|_2 \leq \bar{X}_1$ (so that LQG-VTR does not terminate at Line \ref{alg: action_clipping}), then by Lemma \ref{lem: bound_state} we know $\mathbb{P}(\gV) \geq 1 - \delta$. We provide the full proof of Theorem \ref{thm: reg_base} in this section, but defer the proof of technical lemmas to Appendix \ref{appendix: technical_lemmas}.

\subsection{Bounded $\ell_\infty$ norm of $\gF$}
\label{appendix: linf_norm_F}

To begin with, we show that $\|\gF\|_\infty$ is well bounded under event $\gV$.

\begin{lemma}[Bounded norm of $\gF$]
\label{lem: definition_D}

Suppose event $\gV$ happens, then each sample 
$$E_t = (\tau_t, \hat{x}_{t|t, \tilde{\Theta}_k}, \tilde{\Theta}_k, u_t, \hat{x}_{t+1|t+1, \tilde{\Theta}_k}, y_{t+1})$$ 
(Line \ref{alg: sample_Et} of Algorithm \ref{alg: lqg_vtr}) satisfies for any $1 \leq t \leq H$, $(\tau_t, \hat{x}_{t|t, \tilde{\Theta}_k}, u_t, \tilde{\Theta}_k) \in \gX$ (recall that $\gX$ is the domain of $f$ defined in Section \ref{subsec: bellman_equ}).

There exists a constant $D$ such that for any $\Theta \in \set$,
$$ \left\|f_{\Theta}\right\|_\infty \defeq \max_{E \in \gX} \left\|f_{\Theta}(E)\right\| \leq D = O((n+p)\log^2 H), $$
$$ 
\left\|h^{\star}_{\Theta}\right\|_\infty \leq D.
$$
\end{lemma}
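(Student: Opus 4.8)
The plan is to bound $\|h^\star_\Theta\|_\infty$ and $\|f_\Theta\|_\infty$ by direct computation, using the bounded-state guarantee of Lemma~\ref{lem: bound_state} (valid on the event $\gV$) together with the \emph{uniform} bounds $N_S,N_U,N_P,N_\Sigma,N_L,N_K$ on the system parameters and on the Riccati/Kalman quantities provided by Appendix~\ref{sec: append_notation} and Assumptions~\ref{assump: stablility_control}--\ref{assump: stablility_kalman}. On $\gV$, Lemma~\ref{lem: bound_state} gives $\|\hat{x}_{t\mid t,\tilde{\Theta}_k}\|_2\le\bar{X}_1$, $\|y_t\|_2\le\bar{Y}$ and $\|u_t\|_2\le\bar{U}$ for every $t\le H$, so the clipped history $\tau_t=(y_t,u_{t-1},\dots,y_{t-l_{clip}+1},u_{t-l_{clip}})$ has all observation entries of norm $\le\bar{Y}$ and all action entries of norm $\le\bar{U}$, the belief-state slot is of norm $\le\bar{X}_1$, the action slot of norm $\le\bar{U}$, and $\tilde{\Theta}_k\in\set$; comparing with the definition of $\gX$ in Section~\ref{subsec: bellman_equ} we conclude $(\tau_t,\hat{x}_{t\mid t,\tilde{\Theta}_k},u_t,\tilde{\Theta}_k)\in\gX$, which is the first claim.

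For $h^\star_\Theta(\hat{x},y)=\hat{x}^\top(P(\Theta)-C^\top QC)\hat{x}+y^\top Qy$, the matrices obey $\|P(\Theta)-C^\top QC\|_2\le N_P+N_S^2N_U$ and $\|Q\|_2\le N_U$, and the arguments that ever arise satisfy $\|\hat{x}\|_2\le\bar{X}:=\max(\bar{X}_1,\bar{X}_2)$ (by \eqref{eq:barx1} and \eqref{eq:barx2}) and $\|y\|_2\le\bar{Y}$. Hence $\|h^\star_\Theta\|_\infty\le(N_P+N_S^2N_U)\bar{X}^2+N_U\bar{Y}^2$, which is $O((n+p)\log^2 H)$ since $\bar{X},\bar{Y}=O((\sqrt{n}+\sqrt{p})\log H)$, the $O(\cdot)$ hiding the instance-dependent constants.

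For $f_\Theta$, fix $(\tau^l,\hat{x}_{t\mid t,\tilde{\Theta}},u,\tilde{\Theta})\in\gX$ and write $y^c_{t+1,\Theta}=\mu_y+\xi_y$ with deterministic $\mu_y=CA\hat{x}^c_{t\mid t,\Theta}+CBu$ and zero-mean Gaussian $\xi_y=Cw_t+CA(x_t-\hat{x}_{t\mid t,\Theta})+z_{t+1}$ of covariance $\Sigma_y$, $\|\Sigma_y\|_2\le 1+N_S^2(1+N_S^2N_\Sigma)$. The truncated series defining $\hat{x}^c_{t\mid t,\Theta}$ is bounded by the same geometric-sum argument as in Lemma~\ref{lem: bound_state} (using $\|(A-LCA)^s\|_2=O(\kappa_2(1-\gamma_2)^s)$ from Assumption~\ref{assump: stablility_kalman}), so $\|\hat{x}^c_{t\mid t,\Theta}\|_2\le\bar{X}_2$ and $\|\mu_y\|_2=O(\bar{X}_1+\bar{U}+\bar{Y})$; likewise $\hat{x}_{t+1\mid t+1,\tilde{\Theta}}=\mu_x+\tilde{L}\xi_y$ with deterministic $\mu_x=(I-\tilde{L}\tilde{C})(\tilde{A}\hat{x}_{t\mid t,\tilde{\Theta}}+\tilde{B}u)+\tilde{L}\mu_y$ of norm $O(\bar{X}_1+\bar{U}+\bar{Y})$. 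Expanding the quadratic $h^\star_{\tilde{\Theta}}$ and taking expectations,
\begin{align*}
f_\Theta(\cdot)=\mu_x^\top(\tilde{P}-\tilde{C}^\top Q\tilde{C})\mu_x+\mu_y^\top Q\mu_y+\tr{(\tilde{P}-\tilde{C}^\top Q\tilde{C})\tilde{L}\Sigma_y\tilde{L}^\top}+\tr{Q\Sigma_y},
\end{align*}
whose two ``mean-squared'' terms are $O((\bar{X}_1+\bar{U}+\bar{Y})^2)=O((n+p)\log^2 H)$ and whose two trace terms are at most $n(N_P+N_S^2N_U)N_L^2\|\Sigma_y\|_2+pN_U\|\Sigma_y\|_2=O(n+p)$. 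Hence $\|f_\Theta\|_\infty\le D$ with $D=O((n+p)\log^2 H)$, and taking the supremum over $\Theta\in\set$ gives $\|\gF\|_\infty\le D$.

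The main obstacle here is not conceptual but bookkeeping. First, one must verify that every constant appearing ($N_P,N_\Sigma,N_L,\kappa_2,\gamma_2,\dots$) is genuinely uniform over $\set$ --- precisely what Appendix~\ref{sec: append_notation} and Assumptions~\ref{assump: stablility_control}--\ref{assump: stablility_kalman} supply. Second, the expectation of the quadratic bias function must be split carefully into a mean-squared part and a trace-of-covariance part so that the dimension dependence emerges as the claimed $O(n+p)$ and the logarithmic dependence as $O(\log^2 H)$ (the square coming from squaring the $O(\log H)$ state and observation bounds of Lemma~\ref{lem: bound_state}). Finally, one should pin down exactly which noises the expectation defining $f_\Theta$ ranges over --- $w_t$, $z_{t+1}$, and the steady-state-distributed filtering error $x_t-\hat{x}_{t\mid t,\Theta}$ --- but since each of these has covariance of uniformly bounded spectral norm, the final bound is insensitive to this choice.
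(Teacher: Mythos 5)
Your proposal is correct and follows essentially the same route as the paper's proof: expand the quadratic bias $h^\star_{\tilde{\Theta}}$ under the expectation into a deterministic mean-squared part plus trace-of-covariance terms, and bound each using the uniform constants $N_S,N_U,N_P,N_\Sigma,N_L$ together with the bounds $\bar{X}_1,\bar{X}_2,\bar{Y},\bar{U}=O((\sqrt{n}+\sqrt{p})\log H)$ from Lemma~\ref{lem: bound_state}, exactly as in Eqns.~(\ref{equ: expansion_f_start})--(\ref{equ: expansion_f_end}). The only cosmetic differences (your explicit covariance bound on $\xi_y$ versus the paper's $e_t\sim\gN(0,C\Sigma C^\top+I)$, and absorbing the $(1+\kappa_2(1-\gamma_2)^l)$ factor into the geometric-sum bound for $\hat{x}^c_{t\mid t,\Theta}$) do not affect the argument.
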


\subsection{The Optimistic Confidence Set}
\label{appendix: optimism}

Now that $\|\gF\|_\infty$ is bounded, we can show the real-world model $\Theta^\star$ is contained in the confidence set with high probability. Note that this lemma requires a more complicated analysis since we are using a \textit{biased} value target regression \citep{ayoub2020model}.

\begin{lemma}[Optimism]
\label{lem: optimism}

It holds that with probability at least $1 - 2 \delta$, 
$$ \Theta^\star \in \gU_k $$
for any episode $k \in [\mathcal{K}]$.

\end{lemma}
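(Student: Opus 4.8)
The plan is to run a standard value-targeted-regression optimism argument, with the twist that history clipping makes the regression \emph{biased}, so the clipping error must be carried through. Throughout I condition on the event $\gV$ of Lemma~\ref{lem: bound_state} (so every stored tuple $E_t$ lies in the compact domain $\gX$ and, by Lemma~\ref{lem: definition_D}, $\|\gF\|_\infty,\|h^\star_\Theta\|_\infty\le D=O((n+p)\log^2 H)$) and on the model-selection event of \eqref{equ: warm_up_confidence_set}--\eqref{equ: concentration_warmup}, which gives $\Theta^\star\in\gU_1=\set\cap\gC$ with probability at least $1-\delta/4$. Because $\gU_{k+1}=\gC(\hat\Theta_{k+1},\gZ)$ in \eqref{equ: confidence_set} requires only $\Theta\in\gU_1$ together with a single-episode inequality, no induction on $k$ is needed: it suffices to show that at the end of every episode $k$ one has $\sum_{E_t\in\gZ}(f_{\hat\Theta_{k+1}}(\hat x_{t|t},u_t,\tilde\Theta)-f_{\Theta^\star}(\hat x_{t|t},u_t,\tilde\Theta))^2\le\beta$, i.e. $\Theta^\star\in\gU_{k+1}$.

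Next I set up the noise decomposition. Let $\gG_t$ be the $\sigma$-field generated by everything up through the choice of $u_t$, so that $\tilde\Theta_k$, the belief states, the clipped history $\tau_t$, and $u_t$ are $\gG_t$-measurable. The regression target $Y_t:=h^\star_{\tilde\Theta_k}(\hat x_{t+1|t+1,\tilde\Theta_k},y_{t+1})$ has conditional mean $\E[Y_t\mid\gG_t]=f'_{\Theta^\star}(\tau^t,\hat x_{t|t,\tilde\Theta_k},u_t,\tilde\Theta_k)$, the \emph{full-history} one-step Bellman backup of Section~\ref{subsec: bellman_equ}; writing $f_{\Theta^\star,t}$ for the clipped version, Lemma~\ref{lem: clip_error} gives $|f'_{\Theta^\star,t}-f_{\Theta^\star,t}|\le\Delta$. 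Hence $Y_t=f_{\Theta^\star,t}+b_t+\varepsilon_t$ with $|b_t|\le\Delta$ and $\{\varepsilon_t\}$ a martingale-difference sequence adapted to $\{\gG_t\}$ with $|\varepsilon_t|\le O(D)$ on $\gV$.

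Then comes the concentration step. Using that $\hat\Theta_{k+1}$ minimizes the least-squares objective \eqref{equ: cliphist_estimated_model}, expanding $\sum_t(f_{\hat\Theta_{k+1},t}-Y_t)^2\le\sum_t(f_{\Theta^\star,t}-Y_t)^2$ yields $\sum_t(f_{\hat\Theta_{k+1},t}-f_{\Theta^\star,t})^2\le 2\sum_t(f_{\hat\Theta_{k+1},t}-f_{\Theta^\star,t})(\varepsilon_t+b_t)$. For the martingale term I pass to a $1/H$-cover of $\gF$ of size $\gN(\gF,1/H)$ and apply a self-normalized (Freedman-type) inequality with a union bound over the cover and over the at most $\bar{\gK}$ episodes: with probability at least $1-\delta$, for all episodes and all $\Theta$, $|\sum_t(f_{\Theta,t}-f_{\Theta^\star,t})\varepsilon_t|\le\tfrac14\sum_t(f_{\Theta,t}-f_{\Theta^\star,t})^2+CD^2\log(\gN(\gF,1/H)\bar{\gK}/\delta)$, the $1/H$ covering slack being harmless since $|\gZ|\le H$. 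The bias term is bounded crudely by Cauchy--Schwarz, $|\sum_t(f_{\hat\Theta_{k+1},t}-f_{\Theta^\star,t})b_t|\le\Delta\sqrt{|\gZ|}\,(\sum_t(f_{\hat\Theta_{k+1},t}-f_{\Theta^\star,t})^2)^{1/2}$. Plugging both into the displayed inequality and solving the resulting quadratic in $(\sum_t(f_{\hat\Theta_{k+1},t}-f_{\Theta^\star,t})^2)^{1/2}$ gives $\sum_t(f_{\hat\Theta_{k+1},t}-f_{\Theta^\star,t})^2\le\beta$ with $\beta=\tilde{O}(D^2\log(\gN(\gF,1/H)/\delta)+\Delta^2 H)$, which is the choice in \eqref{equ: definition_beta}. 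Thus $\Theta^\star\in\gU_{k+1}$ for every $k$, and a union bound over the model-selection event, the event $\gV$, and the martingale-concentration event (whose failure probabilities we arrange to sum to at most $2\delta$) finishes the proof.

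The main obstacle is exactly the biased value-target regression: the conditional mean of $Y_t$ is $f'_{\Theta^\star}$, not the clipped $f_{\Theta^\star}$ that parameterizes $\gF$, so one must ensure the accumulated bias contribution $\Delta^2|\gZ|\le\Delta^2 H$ does not dominate the statistical error. This is precisely where the choice $l=O(\log(Hn+Hp))$ enters: via Assumption~\ref{assump: stablility_kalman} it forces $\Delta$ to be polynomially small in $H$, so $\Delta^2 H$ stays within the intended $\tilde{O}(\mathrm{poly}(n,m,p))$ budget (cf.\ Section~\ref{subsec: bellman_equ}). A secondary, but necessary, point is that boundedness of $\varepsilon_t$ and finiteness of the $1/H$-cover of $\gF$ both hinge on all tuples $E_t$ remaining in the compact domain $\gX$, which is the content of event $\gV$ from Lemma~\ref{lem: bound_state}.
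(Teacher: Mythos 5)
Your proposal is correct and follows essentially the same route as the paper's proof: decompose the regression target into the full-history conditional mean $f'_{\Theta^\star}$ plus a bounded martingale-difference noise and a clipping bias of size $\Delta$ (Lemma~\ref{lem: clip_error}), use the least-squares basic inequality, and obtain uniform concentration over the data-dependent $\hat{\Theta}_{k+1}$ via a $1/H$-cover of $\gF$ and a union bound over episodes, together with $\Theta^\star\in\gU_1$ from the model-selection event and event $\gV$. The only divergence is how the bias is absorbed: you solve a quadratic after Cauchy--Schwarz and get a $\Delta^2 H$ contribution, whereas the paper bounds $\langle d, f-f_*\rangle_{\gZ}\le \Delta D H$ and puts $4\Delta D H$ into $\beta$ in \eqref{equ: definition_beta}, so your claim that your bound ``is the choice in \eqref{equ: definition_beta}'' is slightly inaccurate in form but harmless, since $\Delta\le D$ under the chosen clipping length $l$ and your bound therefore sits below the algorithm's $\beta$ up to constants.
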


%\input{iclr2023/regret.tex}

%Before presenting our proof, we introduce some technical lemmas.
\subsection{The Clipping Error}
\label{appendix: clipping_error}

We also analyze the clipping error introduced by replacing $f^{\prime}$ with $f$ (see Section \ref{subsec: bellman_equ}) in the LQG-VTR.

\begin{lemma}
\label{lem: clip_error}
Suppose event $\gV$ happens, there exists constant $\Delta$ such that for any system $\Theta = (A, B, C) \in \set$ and for any input $(\tau^l, x, u, \tilde{\Theta}) \in \gX$ at time step $t$,
\begin{align}
\Delta_f\left(\tau^t, x, u, \tilde{\Theta}\right) \defeq \left|f'\left(\tau^t, x, u, \tilde{\Theta}\right) - f\left(\tau^l, x, u, \tilde{\Theta}\right)\right| \leq \Delta = O\left(\kappa_2 (1 - \gamma_2)^l (n+p)\log^2 H\right),
\end{align}
where $\tau^t = \{y_0, u_0, y_1, ..., y_t\}$ is the full history, and $\tau^l = \{u_{t-l}, y_{t-l+1}, \cdots, u_{t - 1}, y_t\}$ is the clipped history.
\end{lemma}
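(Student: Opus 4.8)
The plan is to trace how the error from using the clipped history $\tau^l$ instead of the full history $\tau^t$ propagates, first into the approximate belief state, then through one Kalman update, and finally through the quadratic bias function, after which taking an expectation finishes the bound. Throughout I work on the bounded-state event $\gV$, so by Lemma \ref{lem: bound_state} all of $x_t,\hat{x}_{t|t,\Theta},\hat{x}_{t|t,\tilde{\Theta}},y_t,u_t$ have $\ell_2$-norm $O((\sqrt{n}+\sqrt{p})\log H)$. The first step is the telescoping identity from Section \ref{subsec: bellman_equ}, $\hat{x}_{t|t,\Theta}-\hat{x}^c_{t|t,\Theta}=(A-LCA)^l\,\hat{x}_{t-l|t-l,\Theta}$. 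Since $A-LCA=A(I-LC)=A-F(\Theta)C$ is $(\kappa_2,\gamma_2)$-strongly stable by Assumption \ref{assump: stablility_kalman}, one gets $\|((I-LC)A)^l\|_2\le\|I-LC\|_2\,\|(A(I-LC))^{l-1}\|_2\,\|A\|_2=O(\kappa_2(1-\gamma_2)^l)$, and combined with $\|\hat{x}_{t-l|t-l,\Theta}\|_2\le\bar{X}_2$ this yields $\|\hat{x}_{t|t,\Theta}-\hat{x}^c_{t|t,\Theta}\|_2\le\eps_l$ for some $\eps_l=O(\kappa_2(1-\gamma_2)^l(\sqrt{n}+\sqrt{p})\log H)$.

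Next I propagate $\eps_l$ through the one-step map. The only place clipping enters the definitions of $f'$ and $f$ is the replacement of $\hat{x}_{t|t,\Theta}$ by $\hat{x}^c_{t|t,\Theta}$ inside the $CA(\cdot)$ term of $\gY$; the innovation term $CA(x_t-\hat{x}_{t|t,\Theta})$ and the noises $w_t,z_{t+1}$ are left untouched. Coupling the full- and clipped-history processes through these shared random quantities, $y_{t+1,\Theta}-y^c_{t+1,\Theta}=CA(\hat{x}_{t|t,\Theta}-\hat{x}^c_{t|t,\Theta})$ is \emph{deterministic} with norm $\le N_S^2\eps_l$, and by \eqref{equ: next_belief_state}, $\hat{x}_{t+1|t+1,\tilde{\Theta}}-\hat{x}^c_{t+1|t+1,\tilde{\Theta}}=\tilde{L}(y_{t+1,\Theta}-y^c_{t+1,\Theta})$ has norm $\le N_LN_S^2\eps_l$. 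Writing $h^\star_{\tilde\Theta}$ as two quadratic forms and using $v^\top M v-w^\top M w=(v-w)^\top M(v+w)$ for symmetric $M$,
\[
\big|h^\star_{\tilde\Theta}(\hat{x}_{t+1},y_{t+1})-h^\star_{\tilde\Theta}(\hat{x}^c_{t+1},y^c_{t+1})\big|\le (N_P+N_S^2N_U)\,N_LN_S^2\eps_l\,(\|\hat{x}_{t+1}\|_2+\|\hat{x}^c_{t+1}\|_2)+N_U\,N_S^2\eps_l\,(\|y_{t+1}\|_2+\|y^c_{t+1}\|_2).
\]
Taking expectations over $w_t,z_{t+1}$ (and over the trajectory, on $\gV$): each of $\hat{x}_{t+1},y_{t+1}$ and their clipped versions is a deterministic part of size $O((\sqrt{n}+\sqrt{p})\log H)$ (via $\bar{X}_1,\bar{X}_2,\bar{Y},\bar{U}$ of Lemma \ref{lem: bound_state}) plus the Gaussian terms $Cw_t,z_{t+1}$ with expected norm $O(\sqrt{n}+\sqrt{p})$, so the expected norms are $O((\sqrt{n}+\sqrt{p})\log H)$. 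Hence, by Jensen's inequality, $\Delta_f=|\E[\cdot]-\E[\cdot]|\le\E|\cdot-\cdot|=O(\eps_l(\sqrt{n}+\sqrt{p})\log H)=O(\kappa_2(1-\gamma_2)^l(n+p)\log^2 H)=:\Delta$, which is the claimed bound.

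The main obstacle is the coupling in the middle step: one must evaluate $f'$ and $f$ under the \emph{same} realization of $w_t$, $z_{t+1}$, and of the filter residual $x_t-\hat{x}_{t|t,\Theta}$, so that $y_{t+1,\Theta}-y^c_{t+1,\Theta}$ becomes a deterministic vector controlled purely by the contraction factor $\kappa_2(1-\gamma_2)^l$; without this coupling, cross terms between the clipping perturbation and the fresh Gaussian noise would not close, and one would lose the exponential-in-$l$ decay. The remaining ingredients — the strong-stability bound on the matrix power and the expectation of a quadratic form evaluated at a bounded-plus-Gaussian argument — are routine given Assumption \ref{assump: stablility_kalman} and Lemma \ref{lem: bound_state}.
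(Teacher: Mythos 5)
Your proof is correct and follows essentially the same route as the paper: decompose $\hat{x}_{t|t,\Theta}=\hat{x}^c_{t|t,\Theta}+e^c$ with $\|e^c\|_2\le O(\kappa_2(1-\gamma_2)^l\bar{X}_2)$ via Assumption \ref{assump: stablility_kalman}, note that clipping only shifts the mean part of $y_{t+1,\Theta}$ and hence of $\hat{x}_{t+1|t+1,\tilde{\Theta}}$ by $CAe^c$ and $\tilde{L}CAe^c$, and bound the resulting linear-plus-quadratic terms in $e^c$ using the bounds of Lemma \ref{lem: bound_state}. The only cosmetic difference is that you argue pathwise under a coupling of $w_t$, $z_{t+1}$ and the filter residual and then apply Jensen, whereas the paper first evaluates the expectation in closed form (Eqn. (\ref{equ: expansion_f_start})--(\ref{equ: expansion_f_end})) so that $f'-f$ is deterministic algebra in $e^c$; both give the same cross terms and the same $O(\kappa_2(1-\gamma_2)^l(n+p)\log^2 H)$ bound.
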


%In order to compute the next step belief $\hat{x}_{t+1|t+1, \tilde{\Theta}}$, 

\subsection{Low Switching Property}
\label{appendix: low_switching}

At last, we observe that the episode switching protocol based on the importance score (Line \ref{alg: importance_score} of Algorithm \ref{alg: lqg_vtr}) ensures that the total number of episodes will be only $O(\log H)$.

\begin{lemma}[Low Switching Cost] \label{lem:switching}
Suppose event $\gV$ happens and setting $\psi = 4D^2 + 1$, the total number of episodes $\mathcal{K}$ of LQG-VTR(Algorithm \ref{alg: lqg_vtr}) is bounded by
\$
\mathcal{K} < \bar{\mathcal{K}},
\$
where $\bar{\gK}$ is defined as 
\begin{align}
\label{equ: definition_bargk}
\bar{\mathcal{K}} \defeq C_{\gK} \mathrm{dim}_{\mathrm{E}}(\gF, 1 / H) \log^2 (DH)
\end{align}
for some constant $C_{\gK}$.
\end{lemma}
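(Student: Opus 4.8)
The plan is to run the standard online sub-sampling / low-switching argument of \citet{kong2021online} (also used by \citet{chen2021understanding}) for the class $\gF$. Throughout I would condition on the event $\gV$ of Lemma~\ref{lem: bound_state}, so that by Lemma~\ref{lem: definition_D} every sample $E_t$ added on Line~\ref{alg: sample_Et} lies in the domain $\gX$ of $\gF$, $\|\gF\|_\infty \le D$ (hence $\|f_1-f_2\|_\infty \le 2D$ for all $f_1,f_2\in\gF$), and LQG-VTR collects at most $H$ samples overall. Index the episodes by $k=1,2,\dots$ and write $\gZ^{(k)}$ for the dataset $\gZ$ at the start of episode $k$ (the samples collected during episodes $1,\dots,k-1$), $\gZ^{(k)}_{\mathrm{new}}$ for the samples collected during episode $k$, and $\|f\|_{\gZ}^2 \defeq \sum_{z\in\gZ} f(z)^2$. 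By the test on Line~\ref{alg: importance_score} with $\psi = 4D^2+1$, a switch from episode $k$ to $k+1$ occurs exactly when
\begin{align*}
\sup_{f_1,f_2\in\gF}\ \big\|f_1-f_2\big\|^2_{\gZ^{(k)}_{\mathrm{new}}} \ \ge\ \big\|f_1-f_2\big\|^2_{\gZ^{(k)}} + \psi ,
\end{align*}
while the reverse inequality holds after every earlier sample addition within episode $k$; in particular $\mathcal{K}$ equals the number of switches plus one.

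First I would extract a doubling property of the witness pairs. Fix a pair $f_1^k,f_2^k\in\gF$ attaining the supremum at the $k$-th switch and set $g^k \defeq f_1^k - f_2^k$; the switch condition applied to this pair reads $\|g^k\|^2_{\gZ^{(k)}_{\mathrm{new}}} \ge \|g^k\|^2_{\gZ^{(k)}}+\psi$, so, since $\gZ^{(k+1)}=\gZ^{(k)}\cup\gZ^{(k)}_{\mathrm{new}}$,
\begin{align*}
\big\|g^k\big\|^2_{\gZ^{(k+1)}} \ =\ \big\|g^k\big\|^2_{\gZ^{(k)}} + \big\|g^k\big\|^2_{\gZ^{(k)}_{\mathrm{new}}} \ \ge\ 2\,\big\|g^k\big\|^2_{\gZ^{(k)}} + \psi .
\end{align*}
So each switch at least doubles, and additively raises by $\psi$, the squared empirical discrepancy of its own witness pair along the accumulated data. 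I would then feed this into the low-switching machinery: partition the episodes into $O(\log(D H))$ dyadic layers according to the size of $\|g^k\|^2_{\gZ^{(k)}}$ at the $k$-th switch; within a fixed layer the doubling bound forces each batch $\gZ^{(k)}_{\mathrm{new}}$ to contain a point on which two members of $\gF$ disagree by at least $\Omega(1/\sqrt{H})$ while (after the standard reduction) being ``independent'' of the points extracted from earlier switches in that layer, so the Russo--Van Roy width-counting bound limits the number of switches in the layer to $O\big(\dim_E(\gF,1/H)\log(DH)\big)$. Summing over the $O(\log(DH))$ layers gives $\mathcal{K} \le C_{\gK}\,\dim_E(\gF,1/H)\log^2(DH) = \bar{\mathcal{K}}$ for a suitable universal constant $C_{\gK}$.

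Finally, since the ``$k<\bar{\mathcal{K}}$'' guard on Line~\ref{alg: importance_score} only caps $\mathcal{K}$ at $\bar{\mathcal{K}}$, and the argument above shows that even in the absence of the guard the number of switches stays strictly below $\bar{\mathcal{K}}$, the guard is never active, so $\mathcal{K} < \bar{\mathcal{K}}$, as claimed. I expect the counting step to be the main obstacle: in the linear/kernel case the analogous doubling of a regularized log-determinant directly yields an $O(d\log T)$ bound, but for a general $\gF$ one has to convert the per-episode, per-witness doubling --- where the witness pair varies with $k$ and the accumulated discrepancy need not be small --- into an $\epsilon$-independence statement countable by the Eluder dimension, which is precisely where the dyadic layering and the second logarithmic factor appear.
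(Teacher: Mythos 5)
Your proposal is correct and follows essentially the same route as the paper, which proves this lemma by combining Lemma~\ref{lem: definition_D} (the uniform bound $\|\gF\|_\infty \le D$ under event $\gV$) with the importance-score low-switching argument of \citet[][Lemma 9]{chen2021understanding}, i.e., the same doubling-plus-Eluder-dimension counting you reconstruct. The dyadic layering and width-counting step you flag as the main obstacle is exactly how that cited proof obtains the $\dim_E(\gF,1/H)\log^2(DH)$ bound, so no new ideas are needed beyond what you outline.
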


\begin{proof}
    This lemma is implied by Lemma~\ref{lem: definition_D} and the proof of \citet[][Lemma 9]{chen2021understanding}.
\end{proof}

This lemma also implies that the number of episode will NOT reach $\bar{\gK}$ since $\gK < \bar{\gK}$ but not $\gK \leq \bar{\gK}$. This property is important as it guarantees that LQG-VTR switches to a new episode immediately as long as the importance score gets greater than 1.

%Now we are ready to prove the optimism lemma (Lemma \ref{lem: optimism}).

\subsection{Proof of Theorem \ref{thm: reg_base}}
\label{appendix: proof_reg_ovrall}

We have prepared all the crucial lemmas for the proof of the main theorem till now. Putting everything together, we provide the full proof for the Theorem \ref{thm: reg_base} below.

\begin{proof}[Proof of Theorem \ref{thm: reg_base}]
 Define 
 $$\widetilde{\mathrm{Regret}}(H) \defeq \sum_{t=T_w+1}^H (y_t^\top Q y_t + u_t ^\top R u_t - J(\Theta^\star))$$,
 then $\mathrm{Regret}(H) = \E[\widetilde{\mathrm{Regret}}(H)] + O(T_w)$. We assume event $\gV$ happens and the optimistic property of $\gU_k$ (Lemma \ref{lem: optimism}) holds for now, leaving the failure of these two events at the end of the proof. As a consequence of $\gV$, the algorithm will not terminate at Line \ref{alg: action_clipping}.
 
 As we know the regret incurred in the model selection stage is $O(T_w) = O(\bar{\gK}^2)$ is only logarithmic in $H$, it cannot be the dominating term of the regret. %With a little abuse of notations, we use $H$ as the total length of optimistic planning stage in the proof to keep symbols clean. 
 Denote the full history at time step $t$ as $\tau^t = \{y_0, u_0, y_1, ..., u_{t-1},y_t\}$, we decompose the regret as
\begin{align}
\label{equ: regret_decompose}
& \widetilde{\mathrm{Regret}}(H) = \sum_{t=T_w+1}^H \left(y_t^\top Q y_t + u_t ^\top R u_t - J^\star(\Theta^\star)\right) \\
& \leq \sum_{t=T_w+1}^H \left(y_t^\top Q y_t + u_t ^\top R u_t - J^\star(\tilde{\Theta}_k)\right) \quad \text{(by Lemma \ref{lem: optimism})} \\
& = \sum_{k = 1}^{\mathcal{K}} \sum_{t = S(k)}^{T(k)} \mathbb{E}_{w_t, z_{t+1}}\left[\hat{x}_{t+1 \mid t+1, \tilde{\Theta}_k}^{u_t \top}\left(\tilde{P}-\tilde{C}^{\top} Q \tilde{C}\right) \hat{x}_{t+1 \mid t+1, \tilde{\Theta}_k}^{u_t}+y_{t+1, \tilde{\Theta}_k}^{u_t \top} Q y_{t+1, \tilde{\Theta}_k}^{u_t}\right] \\
& - \hat{x}_{t \mid t, \tilde{\Theta}_k}^{\top}\left(\tilde{P}-\tilde{C}^{\top} Q \tilde{C}\right) \hat{x}_{t \mid t, \tilde{\Theta}_k} - y_t^{\top} Q y_t \quad \text{(by the Bellman equation of system $\tilde{\Theta}_k$)} \\
\label{equ: regret_decompose_term1_1}
& = \sum_{k=1}^{\mathcal{K}} \sum_{t = S(k)}^{T(k)} \mathbb{E}_{w_t, z_{t+1}}\left[\hat{x}_{t+1 \mid t+1, \tilde{\Theta}_k}^{u_t \top}\left(\tilde{P}-\tilde{C}^{\top} Q \tilde{C}\right) \hat{x}_{t+1 \mid t+1, \tilde{\Theta}_k}^{u_t}+y_{t+1, \tilde{\Theta}_k}^{u_t \top} Q y_{t+1, \tilde{\Theta}_k}^{u_t}\right] \\
\label{equ: regret_decompose_term1_2}
& - f^{'}_{\Theta^\star}\left(\tau^t, \hat{x}_{t \mid t, \tilde{\Theta}_k}, u_t, \tilde{\Theta}_k\right) \\% \mathbb{E}_{w_t, z_{t+1}}\left[\hat{x}_{t+1 \mid t+1, \tilde{\Theta}_k,\star}^{u_t \top}\left(\tilde{P}-\tilde{C}^{\top} Q \tilde{C}\right) \hat{x}_{t+1 \mid t+1, \tilde{\Theta}_k,\star}^{u_t}+y_{t+1, \Theta^\star}^{u_t \top} Q y_{t+1, \Theta^\star}^{u_t}\right] \\ 
\label{equ: regret_decompose_term2}
& + f^{'}_{\Theta^\star}\left(\tau^t, \hat{x}_{t \mid t, \tilde{\Theta}_k}, u_t, \tilde{\Theta}_k\right) %+ \mathbb{E}_{w_t, z_{t+1}}\left[\hat{x}_{t+1 \mid t+1, \tilde{\Theta}_k,\star}^{u_t \top}\left(\tilde{P}-\tilde{C}^{\top} Q \tilde{C}\right) \hat{x}_{t+1 \mid t+1, \tilde{\Theta}_k,\star}^{u_t}+y_{t+1, \Theta^\star}^{u_t \top} Q y_{t+1, \Theta^\star}^{u_t}\right]\\
%\label{equ: regret_decompose_term2_2}
- \hat{x}_{t \mid t, \tilde{\Theta}_k}^{\top}\left(\tilde{P}-\tilde{C}^{\top} Q \tilde{C}\right) \hat{x}_{t \mid t, \tilde{\Theta}_k} - y_t^{\top} Q y_t.
\end{align}
%where $y_{t+1, \Theta^\star}^{u_t} = C^\star (A^\star x_t + B^\star u_t + w_t) + z_{t+1}$ is the (random) next step observation under the real-world model $\Theta^\star$, and $\hat{x}_{t+1 \mid t+1,\tilde{\Theta}_k,\star}^{u_t} = (I-\tilde{L} \tilde{C})\left(\tilde{A} \hat{x}_{t \mid t,\tilde{\Theta}_k}+\tilde{B} u_t\right)+\tilde{L} y_{t+1,\Theta^\star}^{u_t} $ is the next state belief measured on the optimistic model $\tilde{\Theta}_k$ with the next state observation $y_{t+1, \Theta^\star}^{u_t}$.

Here we denote $\tilde{\Theta}_k = (\tilde{A}, \tilde{B}, \tilde{C})$, and $\tilde{P} = P(\tilde{\Theta}_k)$. For any $k \in [\mathcal{K}]$, the last term (\ref{equ: regret_decompose_term2}) of the regret decomposition can be bounded as 
\begin{align}
& \sum_{t=S(k)}^{T(k)} f^{'}_{\Theta^\star}\left(\tau^t, \hat{x}_{t \mid t, \tilde{\Theta}_k}, u_t, \tilde{\Theta}_k\right) %\mathbb{E}_{w_t, z_{t+1}}\left[\hat{x}_{t+1 \mid t+1, \tilde{\Theta}_k,\star}^{u_t \top}\left(\tilde{P}-\tilde{C}^{\top} Q \tilde{C}\right) \hat{x}_{t+1 \mid t+1, \tilde{\Theta}_k,\star}^{u_t}+y_{t+1, \Theta^\star}^{u_t \top} Q y_{t+1, \Theta^\star}^{u_t}\right]
 - \hat{x}_{t \mid t, \tilde{\Theta}_k}^{\top}\left(\tilde{P}-\tilde{C}^{\top} Q \tilde{C}\right) \hat{x}_{t \mid t, \tilde{\Theta}_k} - y_t^{\top} Q y_t \\
\label{equ: martingale_1}
& = \sum_{t=S(k)}^{T(k) - 1} f^{'}_{\Theta^\star}\left(\tau^t, \hat{x}_{t \mid t, \tilde{\Theta}_k}, u_t, \tilde{\Theta}_k\right) %\mathbb{E}_{w_t, z_{t+1}}\left[\hat{x}_{t+1 \mid t+1, \tilde{\Theta}_k,\star}^{u_t \top}\left(\tilde{P}-\tilde{C}^{\top} Q \tilde{C}\right) \hat{x}_{t+1 \mid t+1, \tilde{\Theta}_k,\star}^{u_t}+y_{t+1, \Theta^\star}^{u_t \top} Q y_{t+1, \Theta^\star}^{u_t}\right]\\
- \hat{x}_{t+1 \mid t+1, \tilde{\Theta}_k}^{\top}\left(\tilde{P}-\tilde{C}^{\top} Q \tilde{C}\right) \hat{x}_{t+1 \mid t+1, \tilde{\Theta}_k} -
y_{t+1}^{\top} Q y_{t+1} \\
\label{equ: martingale_2}
& + f^{'}_{\Theta^\star}\left(\tau^{T(k)}, \hat{x}_{T(k) \mid T(k), \tilde{\Theta}_k}, u_{T(k)}, \tilde{\Theta}_k\right)  - \hat{x}_{S(k) \mid S(k), \tilde{\Theta}_k}^{\top}\left(\tilde{P}-\tilde{C}^{\top} Q \tilde{C}\right) \hat{x}_{S(k) \mid S(k), \tilde{\Theta}_k} - y_{S(k)}^{\top} Q y_{S(k)}.%\sum_{t=S(k)}^{T(k)} \mathbb{I}\left[t = T(k)\right] \mathbb{E}_{w_t, z_{t+1}}\left[\hat{x}_{t+1 \mid t+1, \tilde{\Theta}_k,\star}^{u_t \top}\left(\tilde{P}-\tilde{C}^{\top} Q \tilde{C}\right) \hat{x}_{t+1 \mid t+1, \tilde{\Theta}_k,\star}^{u_t}+y_{t+1, \Theta^\star}^{u_t \top} Q y_{t+1, \Theta^\star}^{u_t}\right] \\
%\sum_{t=S(k)}^{T(k)} \mathbb{I}\left[t = S(k)\right] \hat{x}_{t+1 \mid t+1, \tilde{\Theta}_k}^{\top}\left(\tilde{P}-\tilde{C}^{\top} Q \tilde{C}\right) \hat{x}_{t+1 \mid t+1, \tilde{\Theta}_k} - y_{t+1}^{\top} Q y_{t+1} \\
%& \leq \sum_{t=S(k)}^{T(k) - 1} \mathbb{E}_{w_t, z_{t+1}}\left[\hat{x}_{t+1 \mid t+1, \tilde{\Theta}_k,\star}^{u_t \top}\left(\tilde{P}-\tilde{C}^{\top} Q \tilde{C}\right) \hat{x}_{t+1 \mid t+1, \tilde{\Theta}_k,\star}^{u_t}+y_{t+1, \Theta^\star}^{u_t \top} Q y_{t+1, \Theta^\star}^{u_t}\right]\\
%& - \hat{x}_{t+1 \mid t+1, \tilde{\Theta}_k}^{\top}\left(\tilde{P}-\tilde{C}^{\top} Q \tilde{C}\right) \hat{x}_{t+1 \mid t+1, \tilde{\Theta}_k} -
%y_{t+1}^{\top} Q y_{t+1} + D + D_h.
\end{align}
Observe that Eqn. (\ref{equ: martingale_1}) is a martingale difference sequence, and Eqn. (\ref{equ: martingale_2}) is bounded by $2D$ by Lemma \ref{lem: definition_D}. Summing over $k$ we have with probability $1 - \delta$,
%and gives that Eqn. (\ref{equ: martingale_1}) and (\ref{equ: martingale_2}) are martingale difference sequences bounded by $D_h$, so with probability $1 - \delta$, terms (\ref{equ: regret_decompose_term2_1}) and (\ref{equ: regret_decompose_term2_2}) are bounded as
\begin{align}
& \sum_{k=1}^{\mathcal{K}} \sum_{t = S(k)}^{T(k)} f^{'}_{\Theta^\star}\left(\tau^t, \hat{x}_{t \mid t, \tilde{\Theta}_k}, u_t, \tilde{\Theta}_k\right) - \hat{x}_{t \mid t, \tilde{\Theta}_k}^{\top}\left(\tilde{P}-\tilde{C}^{\top} Q \tilde{C}\right) \hat{x}_{t \mid t, \tilde{\Theta}_k} - y_t^{\top} Q y_t \\
%\label{equ: regret_decompose_term2_2}
%& - \hat{x}_{t \mid t, \tilde{\Theta}_k}^{\top}\left(\tilde{P}-\tilde{C}^{\top} Q \tilde{C}\right) \hat{x}_{t \mid t, \tilde{\Theta}_k} - y_t^{\top} Q y_t \\
\label{equ: regret_martingale_bound}
& \leq  D \sqrt{H \log(1/\delta)} + 2D\mathcal{K}.
\end{align}
Now we focus on Eqn. (\ref{equ: regret_decompose_term1_1}) and (\ref{equ: regret_decompose_term1_2}). Observe that Eqn. (\ref{equ: regret_decompose_term1_1}) can be written as 
\begin{align}
& \mathbb{E}_{w_t, z_{t+1}}\left[\hat{x}_{t+1 \mid t+1, \tilde{\Theta}_k}^{u_t \top}\left(\tilde{P}-\tilde{C}^{\top} Q \tilde{C}\right) \hat{x}_{t+1 \mid t+1, \tilde{\Theta}_k}^{u_t}+y_{t+1, \tilde{\Theta}_k}^{u_t \top} Q y_{t+1, \tilde{\Theta}_k}^{u_t}\right] \\
& = f^{'}_{\tilde{\Theta}_k}\left(\tau^t, \hat{x}_{t \mid t, \tilde{\Theta}_k}, u_t, \tilde{\Theta}_k\right).
\end{align}
Therefore, Eqn. (\ref{equ: regret_decompose_term1_1}) and (\ref{equ: regret_decompose_term1_2}) becomes
\begin{align}
\label{equ: diameter_F'}
\sum_{k=1}^{\mathcal{K}} \sum_{t = S(k)}^{T(k)} f^{'}_{\tilde{\Theta}_k}\left(\tau^t, \hat{x}_{t \mid t, \tilde{\Theta}_k}, u_t, \tilde{\Theta}_k\right) - f^{'}_{\Theta^\star}\left(\tau^t, \hat{x}_{t \mid t, \tilde{\Theta}_k}, u_t, \tilde{\Theta}_k\right).
\end{align}
Consider any episode $1 \leq k \leq \gK$, define $\gZ_k$ to be the dataset used for the regression at the end of episode $k$ ($\gZ_0 = \emptyset$). The construction of confidence set $\gU_k$ (Eqn. (\ref{equ: confidence_set})) shows that 
\begin{align}
\label{equ: diameter_F1}
\left\|f_{\tilde{\Theta}_k} - f_{\Theta^\star}\right\|_{\gZ_{k-1}}^2 \leq 2\beta.
\end{align}
By the definition of importance score (Line \ref{alg: importance_score} of Algorithm \ref{alg: lqg_vtr}), we know for any $S(k) \leq t \leq T(k)$, 
\begin{align}
\label{equ: diameter_F2}
\sum_{s = S(k)}^t \left(f_{\tilde{\Theta}_k}\left(\tau^l_s, \hat{x}_{s \mid s, \tilde{\Theta}_k}, u_s, \tilde{\Theta}_k\right) - f_{\Theta^\star}\left(\tau^l_s, \hat{x}_{s \mid s, \tilde{\Theta}_k}, u_s, \tilde{\Theta}_k\right)\right)^2 \leq 2\beta + \psi + 4D^2,
\end{align}
where $\tau^l_s = \{u_{s - l}, y_{s - l + 1}, ..., u_{s-1}, y_s\}$ if $l \leq s$, or $\tau^l_s$ denotes the full history at step $s$ if $l > s$.

Summing up Eqn. (\ref{equ: diameter_F1}) and (\ref{equ: diameter_F2}) implies for any $k, t$
\begin{align}
\sum_{o = 1}^k \sum_{s = S(o)}^{\min(t, T(o))} \left(f_{\tilde{\Theta}_k}\left(\tau^l_s, \hat{x}_{s \mid s, \tilde{\Theta}_o}, u_s, \tilde{\Theta}_o\right) - f_{\Theta^\star}\left(\tau^l_s, \hat{x}_{s \mid s, \tilde{\Theta}_o}, u_s, \tilde{\Theta}_o\right)\right)^2 \leq 4\beta + \psi + 4D^2.
\end{align}
Invoking \citet[][Lemma 26]{jin2021bellman} with $\gG = \gF - \gF, g_t = f_{\tilde{\Theta}_k} - f_{\Theta^\star}, \omega = 1/H$, and $\mu_s(\cdot) = \bm{1}[\cdot = (\tau^l_s, \hat{x}_{s \mid s, \tilde{\Theta}_o}, u_s, \tilde{\Theta}_o)]$, we have
\begin{align}
&\sum_{k=1}^{\mathcal{K}} \sum_{t = S(k)}^{T(k)} \left|f_{\tilde{\Theta}_k}\left(\tau^l_t, \hat{x}_{t \mid t, \tilde{\Theta}_k}, u_t, \tilde{\Theta}_k\right) - f_{\Theta^\star}\left(\tau^l_t, \hat{x}_{t \mid t, \tilde{\Theta}_k}, u_t, \tilde{\Theta}_k\right)\right| \\
& \leq O\left(\sqrt{\dim_E\left(\gF, 1/H\right) \beta H} + D\min(H, \dim_E\left(\gF, 1/H\right))\right).
\end{align}
The clipping error between $f'$ and $f$ can be bounded by Lemma \ref{lem: clip_error}:
\begin{align}
&\sum_{k=1}^{\mathcal{K}} \sum_{t = S(k)}^{T(k)} 
\left|f^{'}_{\tilde{\Theta}_k}\left(\tau^t, \hat{x}_{t \mid t, \tilde{\Theta}_k}, u_t, \tilde{\Theta}_k\right) - f^{'}_{\Theta^\star}\left(\tau^t, \hat{x}_{t \mid t, \tilde{\Theta}_k}, u_t, \tilde{\Theta}_k\right)\right| \\
& \leq \sum_{k=1}^{\mathcal{K}} \sum_{t = S(k)}^{T(k)}  \left|f_{\tilde{\Theta}_k}\left(\tau^l_t, \hat{x}_{t \mid t, \tilde{\Theta}_k}, u_t, \tilde{\Theta}_k\right) - f_{\Theta^\star}\left(\tau^l_t, \hat{x}_{t \mid t, \tilde{\Theta}_k}, u_t, \tilde{\Theta}_k\right)\right| \\
& + \sum_{k=1}^{\mathcal{K}} \sum_{t = S(k)}^{T(k)} \Delta_{f_{\tilde{\Theta}_k}}\left(\tau^t, \hat{x}_{t \mid t, \tilde{\Theta}_k}, u_t, \tilde{\Theta}_k\right) + \Delta_{f_{\Theta^\star}}\left(\tau^t, \hat{x}_{t \mid t, \tilde{\Theta}_k}, u_t, \tilde{\Theta}_k\right) \\
& \leq O\left(\sqrt{\dim_E\left(\gF, 1/H\right) \beta H} + \kappa_2 (1 - \gamma_2)^l (n+p)H \log^2 H \right).
\end{align}
Therefore, we choose $l = O(\log(Hn+Hp))$ to obtain
\begin{align}
\sum_{k=1}^{\mathcal{K}} \sum_{t = S(k)}^{T(k)} 
\left|f^{'}_{\tilde{\Theta}_k}\left(\tau^t, \hat{x}_{t \mid t, \tilde{\Theta}_k}, u_t, \tilde{\Theta}_k\right) - f^{'}_{\Theta^\star}\left(\tau^t, \hat{x}_{t \mid t, \tilde{\Theta}_k}, u_t, \tilde{\Theta}_k\right)\right| \leq O\left(\sqrt{\dim_E\left(\gF, 1/H\right) \beta H}\right),
\end{align}
where $\beta = O(D^2 \log (\gN(\gF, 1/H)/\delta))$ is defined in Eqn. (\ref{equ: definition_beta}).

Plugging it into Eqn. (\ref{equ: diameter_F'}) combined with Eqn. (\ref{equ: regret_martingale_bound}), we can finally bound the regret as
\begin{align}
\widetilde{\mathrm{Regret}}(H) \leq O\left(\sqrt{\dim_E\left(\gF, 1/H\right) \beta H} +  D \sqrt{H \log(1/\delta)} + 2D\mathcal{K} \right)
\end{align}
as long as $\gV$ happens, the optimistic property $\Theta^\star \in \gU_k$ for any episode $k$, and the martingale difference (\ref{equ: martingale_1}) converges. The probability that any of these three events fail is $4\delta$, in which case the $\widetilde{\mathrm{Regret}}(H)$ can be very large.

The tail bound of the Gaussian variables \citep{abbasi2011regret} indicates that for any $q > 0$ and $t > 0$, 
$$ \mathbb{P}\left(\|w_t\|_2 \geq q\right) \leq 2n\exp\left(-\frac{q^2}{2n}\right), \quad  \mathbb{P}\left(\|v_t\|_2 \geq q\right) \leq 2p\exp\left(-\frac{q^2}{2p}\right). $$
Therefore, a union bound implies
$$ \mathbb{P}\left(\exists t, \max(\|w_t\|_2, \|v_t\|_2) \geq q \right) \leq 2Hn \exp\left(-\frac{q^2}{2n}\right) + 2Hp \exp\left(-\frac{q^2}{2p}\right).$$
Note that as long as $\forall 0 \leq t \leq H, \|w_t\|_2, \|v_t\|_2 \leq q$, we know that $$\widetilde{\mathrm{Regret}}(H) \leq C_R (q^2 + \bar{U}^2), $$
since $\|u_t\|_2 \leq \bar{U}$ always hold for some instance dependent constant $C_R$ thanks to the terminating condition (Line \ref{alg: action_clipping} of LQG-VTR).

Thus we know 
$$ \mathbb{P}\left(\widetilde{\mathrm{Regret}}(H) > C_R (q^2 + \bar{U}^2)\right) \leq 2Hn \exp\left(-\frac{q^2}{2n}\right) + 2Hp \exp\left(-\frac{q^2}{2p}\right).$$
We set $q = Hnp$ and $\delta = C_R^{-1} (q^2 + \bar{U}^2)^{-1}$ so that %$ C_R (q^2 + \bar{U}^2) \times \mathbb{P}(\widetilde{\mathrm{Regret}}(H) > C_R (q^2 + \bar{U}^2))$ is a constant.
\begin{align}
& \mathrm{Regret}(H) = \E\left[\widetilde{\mathrm{Regret}}(H)\right] + O(T_w) \\
& \leq O\left(\sqrt{\dim_E\left(\gF, 1/H\right) \beta H}\right) + 4 C_R \delta (q^2 + \bar{U}^2) + \int_{C_R(q^2 + \bar{U}^2)}^{+\infty} \mathbb{P}\left(\widetilde{\mathrm{Regret}}(H) > x\right) \mathrm{d}x \\
& \leq O\left(\sqrt{\dim_E\left(\gF, 1/H\right) \beta H}\right).
\end{align}
The integral above is a constant because $\mathbb{P}(\widetilde{\mathrm{Regret}}(H) > x)$ is a exponentially small term. The theorem is finally proved since $\beta = \tilde{O}(D^2 \log \gN(\gF, 1/H))$ and $\|\gF\|_\infty \leq D = \tilde{O}(n + p)$.
%\begin{align}
%& \sum_{k=1}^{\mathcal{K}} \sum_{t = S(k)}^{T(k)} \mathbb{E}_{w_t, z_{t+1}}\left[\hat{x}_{t+1 \mid t+1, \tilde{\Theta}_k}^{u_t \top}\left(\tilde{P}-\tilde{C}^{\top} Q \tilde{C}\right) \hat{x}_{t+1 \mid t+1, \tilde{\Theta}_k}^{u_t}+y_{t+1, \tilde{\Theta}_k}^{u_t \top} Q y_{t+1, \tilde{\Theta}_k}^{u_t}\right] \\
%& - \mathbb{E}_{w_t, z_{t+1}}\left[\hat{x}_{t+1 \mid t+1, \tilde{\Theta}_k,\star}^{u_t \top}\left(\tilde{P}-\tilde{C}^{\top} Q \tilde{C}\right) \hat{x}_{t+1 \mid t+1, \tilde{\Theta}_k,\star}^{u_t}+y_{t+1, \Theta^\star}^{u_t \top} Q y_{t+1, \Theta^\star}^{u_t}\right] \\
%& \leq O\left(\sqrt{\text{dim}_E\left(\gF, 1/H\right) \beta H} + D_h \sqrt{H}\right)
%\end{align}
\end{proof}

\section{Proof of Technical Lemmas in Appendix \ref{appendix: proofreg}}
\label{appendix: technical_lemmas}

\subsection{Proof of Lemma \ref{lem: definition_D}}

\begin{proof}[Proof of Lemma \ref{lem: definition_D}]

The first part of the lemma is implied by the definition of $\gV$.

By definition of $\gX$, we know that for any $(\tau^l, x, u, \tilde{\Theta}) \in \gX$ and $\Theta \in \set$, 
\begin{align}
& f_{\Theta}(\tau^l, x, u, \tilde{\Theta}) = \E_{e_t}\left[x^\top (\tilde{P} - \tilde{C}^\top Q\tilde{C}) x + y_{t+1, \Theta}^{c \top} Q y_{t+1, \Theta}^c \right] \quad \text{(by Eqn. (\ref{equ: definition_f})}\\
& = \E_{e_t} \left[\left(\left(I - \tilde{L} \tilde{C}\right)\left(\tilde{A} x + \tilde{B} u\right) + \tilde{L} \left(CA\hat{x}_{t|t,\Theta}^c + CBu + e_t\right)\right)^\top (\tilde{P} - \tilde{C}^\top Q\tilde{C}) \times \right. \\
& \qquad \left. \left(\left(I - \tilde{L} \tilde{C}\right)\left(\tilde{A} x + \tilde{B} u\right) + \tilde{L} \left(CA\hat{x}_{t|t,\Theta}^c + CBu + e_t\right)\right)\right] \\
& + \E_{e_t} \left[\left(CA\hat{x}_{t|t,\Theta}^c + CBu + e_t\right)^\top Q \left(CA\hat{x}_{t|t,\Theta}^c + CBu + e_t\right)\right] \\
\label{equ: expansion_f_start}
& = \left(\tilde{L}CA\hat{x}_{t|t,\Theta}^c + \tilde{L}CBu + \left(I - \tilde{L}\tilde{C}\right) \left(\tilde{A}x + \tilde{B}u\right)\right)^\top (\tilde{P} - \tilde{C}^\top Q \tilde{C}) \times \\
& \qquad \left(\tilde{L}CA\hat{x}_{t|t,\Theta}^c + \tilde{L}CBu + \left(I - \tilde{L}\tilde{C}\right) \left(\tilde{A}x + \tilde{B}u\right)\right) \\
& + \left(CA\hat{x}_{t|t,\Theta}^c + CBu\right)^\top Q \left(CA\hat{x}_{t|t,\Theta}^c + CBu\right) \\
\label{equ: expansion_f_end}
& + \tr{\tilde{L}^\top (\tilde{P} - \tilde{C}^\top Q \tilde{C}) \tilde{L} \left(C\Sigma C^\top + I\right)} + \tr{Q \left(C\Sigma C^\top + I\right)},
\end{align}
where $\tilde{P} = P(\tilde{\Theta}), \tilde{L} = L(\tilde{\Theta})$, and $e_t = Cw_t + CA(x_t - \hat{x}_{t|t, \Theta}) + z_{t+1}$ is the innovation noise such that $e_t \sim \gN(0, C \Sigma(\Theta) C^\top + I)$ \citep{zheng2021sample,lale2021adaptive}.
.

Note that $\hat{x}_{t|t,\Theta}^c = \hat{x}_{t|t, \Theta} - \left(A - LCA\right)^l \hat{x}_{t - l|t - l, \Theta}$, so $\|\hat{x}_{t|t,\Theta}^c\|_2 \leq (1 + \kappa_2 (1 - \gamma_2)^l) \bar{X}_2$. Therefore, each term in Eqn. (\ref{equ: expansion_f_start}) - (\ref{equ: expansion_f_end}) is bounded in at most $O((\sqrt{n}+\sqrt{p})\log H)$ besides $\|\hat{x}_{t|t,\Theta}^c\|_2 = O((\sqrt{n}+\sqrt{p})\log H)$. We can finally find a constant $D_1$ such that $\|f_{\Theta}\|_\infty \leq D_1 = O((n+p)\log^2 H)$ for any $\Theta \in \set$.

By the definition of optimal bias function in  \eqref{eq:def:bias}, we have
\begin{align}
\left|h^\star_{\Theta}\left(\hat{x}_{t|t, \Theta}, y_t\right)\right| & = \left|\hat{x}_{t|t, \Theta}^\top (P(\Theta) - C^\top Q C) \hat{x}_{t|t, \Theta} + y_{t}^\top Q y_{t}\right| \\
& \leq D_2 \defeq (N_P + N_S^2 N_U) \bar{X}_2^2 + N_U  \bar{Y}^2 .
\end{align}

It suffices to choose $D \defeq \max(D_1, D_2) = O((n+p)\log^2 H)$.
\end{proof}

\subsection{Proof of Lemma \ref{lem: optimism}}

\begin{proof}[Proof of Lemma \ref{lem: optimism}]

As a starting point, $\Theta^\star \in \gU_1$ holds with probability $1 - \delta / 4$ due to the warm up procedure (see Appendix \ref{sec: appendix_model_selection}). We assume event $\gV$ happens for now, and discuss the failure of $\gV$ at the end of the proof.

For episode $1 \leq k \leq \mathcal{K}$ and step $t$ in the episode, define $X_t \defeq (\tau^t, \hat{x}_{t|t, \tilde{\Theta}_k}, u_t, \tilde{\Theta}_k), d_t \defeq f'_{\Theta^\star}(X_t) - f_{\Theta^\star}(X_t), Y_t \defeq \hat{x}_{t+1|t+1, \tilde{\Theta}_k}^\top (\tilde{P} - \tilde{C}^\top Q \tilde{C}) \hat{x}_{t+1|t+1, \tilde{\Theta}_k} + y_{t+1}^\top Q y_{t+1}$ for $\tilde{P} = P(\tilde{\Theta}_k), \tilde{C} = C(\tilde{\Theta}_k)$ (note that $f_{\Theta^\star}$ only depends on the $l$-step nearest history $\tau^l$ instead of the full history $\tau^t$).

Let $\gF_{t-1}$ be the filtration generated by $(X_0, Y_0, X_1, Y_1, ..., X_{t-1}, Y_{t - 1}, X_t)$, then we know that $f'_{\Theta^\star}(X_t) = \E[Y_t \mid \gF_{t-1}]$ by definition. Define $Z_t \defeq Y_t - f'_{\Theta^\star}(X_t)$, then $Z_t$ is a $D/2$-subGaussian random variable conditioned on $\gF_{t-1}$ by Lemma \ref{lem: definition_D}, and it is $\gF_t$ measurable (hence $\gF$-adapted). It holds that for the dataset $\gZ$ at the end of episode $k$
\begin{align*}
\|Y-f\|_{\gZ}^2-\left\|Y-f_*\right\|_{\gZ}^2 & = \left\|f_*-f\right\|_{\gZ}^2+2\left\langle Z + d, f_*-f\right\rangle_{\gZ},
\end{align*}
where $\langle x, y \rangle_{\gZ} \defeq \sum_{e \in \gZ} x(e) y(e), \|x - y\|_{\gZ}^2 \defeq \langle x - y, x - y \rangle_{\gZ}, f_{*} \defeq f_{\Theta^\star} $.

Rearranging the terms gives 
$$
\frac{1}{2}\left\|f_*-f\right\|_{\gZ}^2=\|Y-f\|_{\gZ}^2-\left\|Y-f_*\right\|_{\gZ}^2+E(f),
$$
for 
$$
E(f)\defeq-\frac{1}{2}\left\|f_*-f\right\|_{\gZ}^2+2\left\langle Z+d, f-f_*\right\rangle_{\gZ}.
$$
Recall that $\hat{f} \defeq f_{\hat{\Theta}_{k+1}} = \argmin_{\Theta \in \gU_1} \|f_\Theta - Y\|_{\gZ}^2 $ and $f_* \in \gU_1$, we have $ \|\hat{f} - Y\|_{\gZ}^2 \leq \|f_* - Y\|_{\gZ}^2 $. Thus
$$ \frac{1}{2}\left\|f_*-\hat{f}\right\|_{\gZ}^2 \leq E(\hat{f}). $$
In order to show that $\hat{f}$ is close to $f_*$, it suffices to bound $E(\hat{f})$. For some fixed $\alpha > 0$, let $G(\alpha)$ be an $\alpha$-cover of $\gF$ in terms of $\|\cdot\|_\infty$. Let $\bar{f} \defeq \min_{f \in \gF} \|\hat{f} - f\|_{\gZ}$, then
$$ E\left(\hat{f}\right)=E\left(\hat{f}\right)-E\left(\bar{f}\right)+E\left(\bar{f}\right) \leq E\left(\hat{f}\right) - E\left(\bar{f}\right) + \max_{f \in \mathcal{G}(\alpha)} E(f). $$
We now start to bound these three terms. For any fixed $f \in \gF$, we know that $2\left\langle Z, f-f_*\right\rangle_{\gZ}$ is $D\|f - f_*\|_{\gZ}$-subGaussian. Then it holds with probability $1 - 3\delta/4$ for any episode $k$ and $\lambda > 0$ that 
$$E(f) \leq-\frac{1}{2}\left\|f_*-f\right\|_{\gZ}^2+\frac{1}{\lambda} \log \left(\frac{4}{3\delta}\right)+\lambda \frac{ D^2\left\|f-f_*\right\|_{\gZ}^2}{2} + 2\left\langle d, f-f_*\right\rangle_{\gZ}. $$
Choosing $\lambda = 1 / D^2$, we get 
\begin{align}
\label{equ: optimism_union_bound}
E(f) \leq D^2\log\left(\frac{4}{3\delta}\right) + 2\left\langle d, f-f_*\right\rangle_{\gZ} \leq D^2\log\left(\frac{4}{3\delta}\right) + 2 \Delta D H,
\end{align}
since $ \langle d, f-f_* \rangle_{\gZ} \leq \|d\|_{\gZ} \|f - f_*\|_{\gZ} \leq \Delta \sqrt{H} \cdot D \sqrt{H} = \Delta D H$.

By a union bound argument we know with probability $1 - 3\delta/4$, the term $ \max_{f \in \mathcal{G}(\alpha)} E(f) $ is bounded by $D^2 \log (4|G(\alpha)| / 3\delta) + 2\Delta D H$.

For $E(\hat{f}) - E(\bar{f})$, we have
\begin{align*}
E\left(\hat{f}\right)-E\left(\bar{f}\right) &=\frac{1}{2}\left\|\bar{f}-f_*\right\|_{\gZ}^2-\frac{1}{2}\left\|\hat{f}-f_*\right\|_{\gZ}^2+2\left\langle Z + d, \hat{f}_t-\bar{f}\right\rangle_{\gZ} \\
& \leq \frac{1}{2}\left(\left\langle \bar{f}-\hat{f}, \bar{f}+\hat{f}+2 f_*\right\rangle_{\gZ}\right)+2\left(\|Z\|_{\gZ} + \|d\|_{\gZ}\right)\left\|\hat{f}-\bar{f}\right\|_{\gZ}. \\
\end{align*}
Note that by definition of $\bar{f}$ it holds that 
$$ \left\|\bar{f} - \hat{f}\right\|_{\gZ} \leq \alpha \sqrt{H}. $$
Together with $\|\bar{f}\|_{\gZ}, \|\hat{f}\|_{\gZ}, \|f_*\|_{\gZ} \leq D\sqrt{H}$, we bound $E(\hat{f}) - E(\bar{f})$ as 
\begin{align*}
E\left(\hat{f}\right)-E\left(\bar{f}\right) & \leq 2D\alpha H + \left(2 \Delta \sqrt{H} + D \sqrt{2H \log \left(3H(H+1) / \delta\right)}\right) \cdot \alpha \sqrt{H} \\
& \leq 2\left(D + \Delta\right) \alpha H + \alpha H \cdot D \sqrt{2\log \left(3H(H+1) / \delta\right)}.
\end{align*}
Here we take advantage of the $D/2$-subGaussian property of $z_t$ in that with probability $1 - 3\delta/4$ for any episode $k$, 
\begin{align}
\label{equ: optimism_cover}
\left\|Z\right\|_{\gZ} \leq \frac{D}{2} \sqrt{2|\gZ| \log \left(2|\gZ| (|\gZ| + 1) / \delta\right)} \leq \frac{D}{2} \sqrt{2H \log \left(3H (H + 1) / \delta\right)}.
\end{align}
Merging Eqn. (\ref{equ: optimism_union_bound}) and (\ref{equ: optimism_cover}) with another union bound, we get that with probability $1 - \delta$ for any episode $k$, 
$$ \left\|f_*-\hat{f}\right\|_{\gZ}^2 \leq 2D^2 \log \left(2N_{\alpha} / \delta\right) + 4\Delta D H + 2 \alpha H \left(2(D + \Delta) + D \sqrt{2\log \left(6H(H+1) / \delta\right)}\right), $$
where $N_{\alpha}$ is the $(\alpha, \|\cdot\|_{\infty})-$ covering number of $\gF$.

Finally, we set $\alpha = 1/H$ and 
\begin{align}
\label{equ: definition_beta}
\beta = 2D^2 \log \left(2\gN(\gF, 1/H)\right) + 4\Delta DH + 4(D + \Delta) + 4D \sqrt{2\log (4H(H+1) / \delta)}.
\end{align}
It holds that $\Theta^\star \in \gU_k$ for any $1 \leq k \leq \gK$ with probability $1 - \delta / 4 - 3 \delta / 4 = 1 - \delta$ as long as $\gV$ happens. Since $\gV$ happens with probability $1 - \delta$, we know the optimistic property holds with probability $1 - 2 \delta$.
\end{proof}

\subsection{Proof of Lemma \ref{lem: clip_error}}

\begin{proof}[Proof of Lemma \ref{lem: clip_error}]
Define $e^c \defeq \left(A - LCA\right)^l \hat{x}_{t - l|t - l, \Theta}$, then $\hat{x}_{t|t,\Theta} = \hat{x}_{t|t,\Theta}^c + e^c$. Further, we know $\|e^c\|_2 \leq \kappa_2 (1 - \gamma_2)^l \bar{X}_2$ under event $\gV$ and Assumption \ref{assump: stablility_kalman}. By the decomposition rule for $f$ and $f^\prime$ (Eqn. (\ref{equ: expansion_f_start}) - (\ref{equ: expansion_f_end})), we know
\begin{align}
& f^{\prime}_{\Theta}(\tau^t, x, u, \tilde{\Theta}) - f_{\Theta}(\tau^l, x, u, \tilde{\Theta}) \\
& = 2 e^{c \top} A^\top C^\top \tilde{L}^\top (\tilde{P} - \tilde{C}^\top Q \tilde{C}) \left(\tilde{L}CA \hat{x}_{t|t,\Theta}^c + \tilde{L}CBu + \left(I - \tilde{L}\tilde{C}\right) \left(\tilde{A}x + \tilde{B}u\right)\right) \\
& + e^{c \top} A^\top C^\top \tilde{L}^\top (\tilde{P} - \tilde{C}^\top Q \tilde{C}) \tilde{L} CA e^c \\
& + 2 e^{c \top} A^\top C^\top Q \left(CA\hat{x}_{t|t,\Theta}^c + CBu\right)+ e^{c \top} A^\top C^\top Q C A e^c.
\end{align}
Note that $\bar{X}_2 = O((\sqrt{n} + \sqrt{p})\log H)$, thereby there exists a constant $\Delta$ such that
\begin{align}
\left|f^{\prime}_{\Theta}(\tau^t, x, u, \tilde{\Theta}) - f_{\Theta}(\tau^l, x, u, \tilde{\Theta})\right| \leq \Delta = O\left(\kappa_2 (1 - \gamma_2)^l (n+p)\log^2 H\right).
\end{align}
\end{proof}

\section{The Intrinsic Model Complexity $\delta_{\set}$} 
\label{appendix:complexity}

In this section, we show how large the intrinsic model complexity $\delta_{\set}$ will be for different simulator classes $\set$. An important message is that $\delta_{\set}$ does NOT have any \textit{polynomial} dependency on $H$.

\subsection{General Simulator Class} \label{appendix:complexity:1}

Without further structures, we can show that $\delta_{\set}$ is bounded by
\begin{align}
\delta_{\set} = \tilde{O}\left(np^2(n+m+p)(n+p)^2(m+p)^2\right)    
\end{align}
due to Proposition \ref{prop: Eluder_dim}, \ref{prop: covering_number} and the fact that $\|\gF\|_{\infty} \leq D = \tilde{O}(n+p)$. 

\begin{proposition}
\label{prop: Eluder_dim}
Under Assumption \ref{assump: stability}, \ref{assump: stablility_control}, \ref{assump: stablility_kalman}, the $1/H$-Eluder dimension of $\gF$ is bounded by 
\begin{align}
\dim_E\left(\gF, 1/H\right) = \tilde{O}\left(p^4 + p^3m + p^2m^2\right).
\end{align}
\end{proposition}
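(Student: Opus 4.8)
\textbf{Proof proposal for Proposition \ref{prop: Eluder_dim}.}

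The plan is to bound the $1/H$-Eluder dimension of $\gF$ by exploiting the explicit polynomial structure of $f_\Theta$. Recall from the expansion in Eqn. (\ref{equ: expansion_f_start})--(\ref{equ: expansion_f_end}) that $f_\Theta(\tau^l, x, u, \tilde{\Theta})$ is, for a fixed $\tilde{\Theta}$, a fixed quadratic polynomial in the entries of the input $(\tau^l, x, u)$ whose coefficients are fixed algebraic functions of the matrices $A, B, C$ (and $L(\Theta), \Sigma(\Theta)$, which are themselves determined by $\Theta$). Crucially, $\tilde{L}CA\hat{x}^c_{t|t,\Theta}$ and $\tilde{L}CBu$ are \emph{linear} in the products of $\Theta$-matrices with the input, and the trace terms $\Tr(\tilde L^\top(\tilde P - \tilde C^\top Q\tilde C)\tilde L(C\Sigma C^\top+I))$ and $\Tr(Q(C\Sigma C^\top+I))$ depend only on $\Theta$. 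So $f_\Theta$ lies in a linear span of a \emph{bounded number} of fixed basis functions $\{\phi_j(\tau^l,x,u,\tilde\Theta)\}_{j=1}^{d}$, where $d$ counts: the distinct monomials arising from $\hat{x}^c_{t|t,\Theta}$ as a length-$l$ linear combination — but here one must be careful, because $\hat x^c_{t|t,\Theta} = \sum_{s=1}^{l}(A-LCA)^{s-1}((I-LC)Bu_{t-s}+Ly_{t-s+1})$ has coefficients that are degree-$O(l)$ polynomials in $\Theta$, so naively the feature count blows up with $l$.

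The fix — and the technical heart of the argument — is that we do \emph{not} expand $\hat x^c_{t|t,\Theta}$ in the input coordinates; instead we observe that $f_\Theta$ depends on the input only through the finitely many \emph{aggregate vectors} $v_1 := \hat x^c_{t|t,\Theta} \in \R^n$ (a $\Theta$-dependent nonlinear function of the clipped history), $v_2 := x \in \R^n$, and $u\in\R^m$, together with $\tilde\Theta$; and as a function of $(v_1, v_2, u, \tilde P, \tilde C, \tilde L)$ it is a fixed quadratic form with $\Theta$-independent structure except through the coefficient matrices $M_1(\Theta) := CA$, $M_2(\Theta) := CB$, and the scalars from the trace terms. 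Therefore $f_\Theta$ belongs to the span of a fixed feature map of dimension $d = O\big((np+pm+p^2)^2\big)$ — the entries of $(CA)$, $(CB)$, $C\Sigma C^\top$ paired quadratically against the input aggregates — and the standard fact (e.g. the argument underlying the Eluder dimension of generalized linear / bounded-rank function classes, as in Russo--Van Roy or Osband--Van Roy) gives $\dim_E(\gF,1/H) = O(d\log(1/(1/H))) = \tilde O(d)$. Counting: the bilinear pairing between $CA$ (size $p\times n$, but paired against $v_1,v_2\in\R^n$ so contributing the $p$ output coordinates times $p$ again after the $Q$ or $(\tilde P - \tilde C^\top Q\tilde C)$ sandwich) and likewise $CB$ (paired against $u\in\R^m$), and the cross terms, yields the stated $\tilde O(p^4 + p^3 m + p^2 m^2)$.

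First I would write $f_\Theta = \langle w(\Theta),\ \psi(\tau^l,x,u,\tilde\Theta)\rangle$ explicitly by re-reading the expansion (\ref{equ: expansion_f_start})--(\ref{equ: expansion_f_end}) and grouping terms: the $\hat x^c_{t|t,\Theta}$ quantity is absorbed as \emph{part of the input representation}, so the only $\Theta$-dependence of the coefficient vector $w(\Theta)$ is through $\{CA, CB, C\Sigma(\Theta)C^\top\}$ and scalars built from these; $\tilde\Theta$-dependence enters the feature $\psi$. Then I would invoke the Eluder-dimension bound for function classes of the form $\{z\mapsto \langle w, \psi(z)\rangle : w\in W\}$ with $\|\psi\|,\|w\|$ bounded (Lemma on Eluder dimension of linear classes; the bounds $\|\gF\|_\infty\le D$ and the norm bounds from Assumptions \ref{assump: stablility_control}, \ref{assump: stablility_kalman} supply the boundedness), giving $\dim_E(\gF,1/H)\le O(d_{\mathrm{eff}}\log(H D))$, and finally do the monomial count to arrive at $d_{\mathrm{eff}} = \tilde O(p^4+p^3m+p^2m^2)$. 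The main obstacle I anticipate is making rigorous that the length-$l$ clipped filter does \emph{not} contribute to the effective dimension — i.e., justifying cleanly that $\hat x^c_{t|t,\Theta}$ should be treated as a (bounded) input coordinate rather than expanded, which requires noting that for the Eluder dimension we only need $f_\Theta$ to be a bounded-dimensional linear functional of \emph{some} feature map of the input, and the map $(\tau^l,\tilde\Theta)\mapsto(\hat x^c_{t|t,\Theta}, \dots)$ — though it depends on $\Theta$ through the Kalman gain of $\tilde\Theta$, not $\Theta$ — sorry, it depends on $\Theta$; so one must instead partition the $\Theta$-dependence carefully, possibly bounding $\dim_E$ of the class $\{(\hat x^c_{t|t,\Theta}) : \Theta\in\set\}$-indexed features by a separate covering/Eluder argument and composing. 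Getting this composition tight (rather than multiplying dimensions) is the delicate part.
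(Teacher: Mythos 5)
There is a genuine gap, and you half-identified it yourself in the last sentence. Your plan hinges on writing $f_\Theta=\langle w(\Theta),\psi(\cdot)\rangle$ with the $\Theta$-dependence confined to $w(\Theta)$ (through $CA$, $CB$, $C\Sigma C^\top$) while absorbing $\hat x^c_{t|t,\Theta}$ into the input feature $\psi$. But $\hat x^c_{t|t,\Theta}$ is not a function of the input $(\tau^l,x,u,\tilde\Theta)$ alone: by Eqn.~(\ref{equ: decomposition_x_c}) it is built from $L(\Theta)$ and powers of $A-L(\Theta)CA$, i.e.\ it varies with the function index $\Theta$. So the class is not linear in any common feature map of the input under your grouping, the linear-class Eluder bound does not apply, and the fallback you sketch (a separate covering/Eluder argument for the $\Theta$-indexed features, then ``composing'') is exactly the unproved step; naive composition would multiply complexities rather than give the stated $\tilde O(p^4+p^3m+p^2m^2)$.

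The resolution in the paper is the opposite of your choice: \emph{do} expand $CA\hat x^c_{t|t,\Theta}$ over the clipped history, $CA\hat x^c_{t|t,\Theta}=\sum_{s=1}^{l}\varphi_u^s(\Theta)u_{t-s}+\varphi_y^s(\Theta)y_{t-s+1}$ with $\varphi_u^s(\Theta)=CA(A-LCA)^{s-1}(I-LC)B\in\R^{p\times m}$ and $\varphi_y^s(\Theta)=CA(A-LCA)^{s-1}L\in\R^{p\times p}$ as in Eqn.~(\ref{equ: definition_feature_theta}). This isolates all $\Theta$-dependence into these matrices (plus $C\Sigma C^\top$), while $u_{t-s}$, $y_{t-s+1}$, $\tilde L$, $\zeta(E)=\tilde P-\tilde C^\top Q\tilde C$ are legitimate functions of the input $E$. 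Each quadratic term in Eqn.~(\ref{equ: expansion_f_start})--(\ref{equ: expansion_f_end}) then becomes an exact inner product of Kronecker-product features, e.g.\ Eqn.~(\ref{equ: Eluder_decomposition_example}), of dimension $p^2m^2$, $p^3m$, or $p^4$ per pair $(s_1,s_2)$, and Proposition~2 of \citet{osband2014model} gives the Eluder bound for this genuinely linear representation. The blow-up with $l$ that you feared (and that motivated your grouping) does not occur: the polynomial degree in the entries of $\Theta$ is irrelevant because each whole matrix product is a single feature, and the multiplicity is only $l^2=O(\log^2(Hn+Hp))$ pairs, which is absorbed into $\tilde O(\cdot)$ --- this is precisely the payoff of the history-clipping choice $l=O(\log(Hn+Hp))$. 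Note also that your intermediate count $d=O((np+pm+p^2)^2)$ carries a spurious $n$; in the correct decomposition the hidden-state dimension never enters because $\hat x^c$ only ever appears premultiplied by $CA$.
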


\begin{proof}
The bound on Eluder dimension mainly comes from the fact that $f_{\Theta}(\tau^l, x, u, \tilde{\Theta})$ can be regarded a \textit{linear} function between features of $\Theta$ and features of $\tilde{\Theta}$. We use the "feature" of a system $\Theta$ to represent a quantity that \textit{only depends on} $\Theta$ (independent from $\tilde{\Theta}$) and vice versa. It is well-known that the linear function class has bounded Eluder dimension. We formalize this idea below.

In the proof of Lemma \ref{lem: clip_error}, we know that for any $(\tau^l, x, u, \tilde{\Theta}) \in \gX$ and $\Theta \in \set$, 
\begin{align}
& f_{\Theta}(\tau^l, x, u, \tilde{\Theta}) = \left(\tilde{L}CA\hat{x}_{t|t,\Theta}^c + \tilde{L}CBu + \left(I - \tilde{L}\tilde{C}\right) \left(\tilde{A}x + \tilde{B}u\right)\right)^\top (\tilde{P} - \tilde{C}^\top Q \tilde{C}) \times \\
& \left(\tilde{L}CA\hat{x}_{t|t,\Theta}^c + \tilde{L}CBu + \left(I - \tilde{L}\tilde{C}\right) \left(\tilde{A}x + \tilde{B}u\right)\right) \\
& + \left(CA\hat{x}_{t|t,\Theta}^c + CBu\right)^\top Q \left(CA\hat{x}_{t|t,\Theta}^c + CBu\right) \\
& + \tr{\tilde{L}^\top (\tilde{P} - \tilde{C}^\top Q \tilde{C}) \tilde{L} \left(C\Sigma C^\top + I\right)} + \tr{Q \left(C\Sigma C^\top + I\right)},
\end{align}
Denote the input as $E \defeq (\tau^l, x, u, \tilde{\Theta}) \in \gX$, and define three feature mappings $\zeta: \gX \to \R^{n \times n}$, $\phi: \gX \to \R^n$ and $\Phi: \gX \to \R^{p \times p}$ such that 
\begin{align}
\zeta(E) & = \tilde{P} - \tilde{C}^\top Q \tilde{C}, \\
\phi(E) & = \left(I - \tilde{L} \tilde{C}\right) \left(\tilde{A}x + \tilde{B}u\right), \\
\Phi(E) & = \tilde{L}^\top \zeta(E) \tilde{L} + Q.
\end{align}
Then we can move on to further write $f$ as 
\begin{align}
\label{equ: Eluder_decomposition_f1}
& f_{\Theta}(E) = \left(\tilde{L}CA\hat{x}_{t|t,\Theta}^c + \tilde{L}CBu + \phi(E)\right)^\top \zeta(E) \left(\tilde{L}CA\hat{x}_{t|t,\Theta}^c + \tilde{L}CBu + \phi(E)\right) \\
\label{equ: Eluder_decomposition_f2}
& + \left(CA\hat{x}_{t|t,\Theta}^c + CBu\right)^\top Q \left(CA\hat{x}_{t|t,\Theta}^c + CBu\right) + \tr{\Phi(E) \left(C\Sigma C^\top + I\right)}.
\end{align}
By definition, 
\begin{align}
\label{equ: decomposition_x_c}
\hat{x}_{t|t,\Theta}^c = \sum_{s = 1}^l \left(A - LCA\right)^{s - 1} \left((I - LC)B u_{t - s} + L y_{t - s + 1}\right).
\end{align}
Define two series of feature mappings $\varphi_u^s,\varphi_y^s$ for $1 \leq s \leq l$ on $\set$ to be $\varphi_u^s: \set \to \R^{p \times m}$ and $\varphi_y^s: \set \to \R^{p \times p}$ such that
\begin{equation}
\label{equ: definition_feature_theta}
\begin{aligned}
\varphi_u^s(\Theta) & = CA \left(A - LCA\right)^{s - 1} (I - LC)B \\
\varphi_y^s(\Theta) & = CA \left(A - LCA\right)^{s - 1} L
\end{aligned}
\end{equation}
Consider the terms in Eqn. (\ref{equ: Eluder_decomposition_f1}) and (\ref{equ: Eluder_decomposition_f2}, each term can be written as the \textit{inner product} of a feature of $\Theta$ and a feature of $E$. For example, $\tr{\Phi(E) (C\Sigma C^\top + I)}$ is the inner product of $\Phi(E)$ (feature of $E$) and $C\Sigma C^\top + I$ (feature of $\Theta$, since it only depends on $\Theta$). As a more complicated example, we check $(\tilde{L}CA\hat{x}_{t|t,\Theta}^c)^\top \zeta(E) (\tilde{L}CA\hat{x}_{t|t,\Theta}^c)$.
\begin{align}
& \left(\tilde{L}CA\hat{x}_{t|t,\Theta}^c\right)^\top \zeta(E) \tilde{L}CA\hat{x}_{t|t,\Theta}^c \\
& = \left(\sum_{s=1}^l \varphi_u^s(\Theta) u_{t-s} + \varphi_y^s(\Theta) y_{t-s+1}\right)^\top \tilde{L}^\top \zeta(E) \tilde{L} \left(\sum_{s=1}^l \varphi_u^s(\Theta) u_{t-s} + \varphi_y^s(\Theta) y_{t-s+1}\right).
\end{align}
Note that $\varphi_u^s(\Theta)$ and $\varphi_y^s(\Theta)$ only depends on $\Theta$ (so they can be regarded as a feature of $\Theta$), where $u_{t-s}, y_{t-s+1}, \tilde{L}^\top \zeta(E)\tilde{L}$ only depends on $E$. Consider any $1 \leq s_1, s_2 \leq l$ in the summation, we pick the term below as an example: 
\begin{align}
\label{equ: Eluder_decomposition_example}
& \left(\varphi_u^{s_1}(\Theta) u_{t-s_1}\right)^\top \tilde{L}^\top \zeta(E)\tilde{L} \varphi_u^{s_2}(\Theta) u_{t-s_2} \\
& = \left\langle \varphi_u^{s_2}(\Theta) \otimes \varphi_u^{s_1}(\Theta), \mathrm{vec}(\tilde{L}^\top \zeta(E)\tilde{L}) \times \mathrm{vec}(u_{t-s_1} u_{t-s_2}^\top)^\top \right\rangle.
\end{align}
Here $\otimes$ denotes the Kronecker product between two matrices, $\mathrm{vec}(X)$ for $X \in \R^{a \times b}$ denotes the vectorization of matrix $X$ (i.e., $\mathrm{vec}(X) = \begin{bmatrix} X_{11} & X_{21} & ... & X_{a1} & X_{12} & ... & X_{ab} \end{bmatrix}^\top \in \R^{ab}$), and $\langle \cdot, \cdot \rangle$ is the inner product. In this way, we decompose the term (\ref{equ: Eluder_decomposition_example}) as the inner product of a $p^2m^2$-dimensional feature of $\Theta$ (i.e., $\varphi_u^{s_2}(\Theta) \otimes \varphi_u^{s_1}(\Theta)$) and a $p^2m^2$-dimensional feature of $E$ (i.e., $\mathrm{vec}(\tilde{L}^\top \zeta(E)\tilde{L}) \times \mathrm{vec}(u_{t-s_1} u_{t-s_2}^\top)^\top$).

We then observe that such decomposition is valid for any $1 \leq s_1, s_2 \leq l$. Therefore, we can decompose each term in Eqn. (\ref{equ: Eluder_decomposition_f1}) and (\ref{equ: Eluder_decomposition_f2}) as the inner product of features of $\Theta$ and $E$. Gathering all these features as the aggregated feature mapping $\mathrm{fea}_1$ for $\Theta$ and $\mathrm{fea}_2$ for $E$, we have 
\begin{align}
f_{\Theta}(E) = \left\langle \mathrm{fea}_1(\Theta), \mathrm{fea}_2(E) \right\rangle.
\end{align}
It is not hard to observe that the dimension of aggregated feature mappings is bounded by
\begin{align}
\dim(\mathrm{fea}_1(\Theta)) = \dim(\mathrm{fea}_2(E)) = \tilde{O}\left(p^4 + p^3m + p^2m^2\right)
\end{align}
since $l = O(\log (Hn+Hp))$.

Note the $\|\mathrm{fea}_1(\Theta)\|_2, \|\mathrm{fea}_2(E)\|_2$ are both upper bounded by the domain of $E \in \gX$ and $\Theta \in \set$, then Proposition 2 of \citet{osband2014model} shows that 
\begin{align}
\dim_E\left(\gF, 1/H\right) = \tilde{O}\left(p^4 + p^3m + p^2m^2\right).
\end{align}
\end{proof}

\begin{proposition}
\label{prop: covering_number}
Under Assumptions \ref{assump: stability}, \ref{assump: stablility_control}, and \ref{assump: stablility_kalman}, the logarithmic $1/H$-covering number of $\gF$ is bounded by
\begin{align}
\log \gN\left(\gF, 1/H\right) = \tilde{O}\left(n^2 + nm + np\right).
\end{align}
\end{proposition}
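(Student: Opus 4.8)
The plan is to show that $\gF$ can be realized as a bounded-norm \emph{linear} function class over a finite-dimensional feature space whose ambient dimension is $\tilde{O}(n^2 + nm + np)$, and then invoke the standard fact that the $\epsilon$-covering number of a bounded subset of such a space has logarithmic covering number of order (ambient dimension)$\times \log(1/\epsilon)$. The key observation, which we already used in Proposition~\ref{prop: Eluder_dim}, is that $f_\Theta$ depends on $\Theta = (A,B,C)$ only through a fixed finite collection of derived matrices --- essentially the Kalman-filter quantities $L(\Theta)$, the Riccati solution $P(\Theta)$, $\Sigma(\Theta)$, and the products $CA(A-LCA)^{s-1}(I-LC)B$, $CA(A-LCA)^{s-1}L$ for $1 \le s \le l$ appearing in the clipped belief state (cf.\ \eqref{equ: decomposition_x_c}), together with $\zeta(E), \phi(E), \Phi(E)$ which depend only on the evaluation point $E \in \gX$, not on $\Theta$. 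So once one fixes $E$, the map $\Theta \mapsto f_\Theta(E)$ is a fixed polynomial in the entries of $(A,B,C)$ and the entries of those derived matrices.

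First I would make the reduction to covering the \emph{parameter set} $\set$ rather than $\gF$ directly. Concretely, I would argue that $f_\Theta$ is Lipschitz in $\Theta$ with respect to $\|\cdot\|_2$: under Assumptions~\ref{assump: stability}, \ref{assump: stablility_control}, and \ref{assump: stablility_kalman}, all of $P(\Theta), \Sigma(\Theta), L(\Theta), K(\Theta)$ are bounded (by $N_P, N_\Sigma, N_L, N_K$, cf.\ Appendix~\ref{sec: append_notation}) and depend smoothly on $(A,B,C)$ --- the DARE solutions are analytic functions of the coefficients on the stable region, and the relevant matrix products $(A-LCA)^{s-1}$ for $s \le l$ are polynomials of bounded degree $l = O(\log(Hn+Hp))$ with uniformly bounded entries. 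Combining these with the explicit quadratic formula \eqref{equ: expansion_f_start}--\eqref{equ: expansion_f_end} for $f_\Theta(E)$ and the boundedness of the domain $\gX$ (Lemma~\ref{lem: definition_D} and the bounds $\bar{X}_1, \bar{X}_2, \bar{Y}, \bar{U} = O((\sqrt n + \sqrt p)\log H)$), one gets $\|f_{\Theta_1} - f_{\Theta_2}\|_\infty \le L_0 \cdot \|\Theta_1 - \Theta_2\|_2$ for an instance-dependent constant $L_0$ that is at most $\mathrm{poly}(H, n, m, p)$ (indeed $\tilde{O}(\mathrm{poly})$ since $l$ is logarithmic). Therefore an $\epsilon/L_0$-cover of $\set$ in $\|\cdot\|_2$ induces an $\epsilon$-cover of $\gF$ in $\|\cdot\|_\infty$, so $\log \gN(\gF, 1/H) \le \log \gN(\set, 1/(L_0 H))$.

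Next I would bound $\log \gN(\set, \epsilon')$ by a volumetric argument: $\set$ consists of triples $(A,B,C)$ with $\|A\|_2, \|B\|_2, \|C\|_2 \le N_S$, so $\set$ embeds into a ball of radius $N_S\sqrt{\text{stuff}}$ in $\R^{n^2 + nm + pn}$, whose $\epsilon'$-covering number is at most $(3N_S/\epsilon')^{n^2+nm+np}$ by the standard volume bound. Taking logarithms and substituting $\epsilon' = 1/(L_0 H)$ gives $\log \gN(\set, \epsilon') \le (n^2 + nm + np)\log(3 N_S L_0 H)$, which is $\tilde{O}(n^2 + nm + np)$ once we absorb $\log(L_0 H N_S) = \tilde{O}(1)$ into the $\tilde{O}$ notation (recall $\tilde{O}$ hides polylog factors in $H$ and instance-dependent constants). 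That gives the claimed $\log\gN(\gF, 1/H) = \tilde{O}(n^2 + nm + np)$.

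The main obstacle is establishing the Lipschitz bound $\|f_{\Theta_1} - f_{\Theta_2}\|_\infty \le L_0 \|\Theta_1 - \Theta_2\|_2$ with $L_0$ polynomial (not exponential) in the dimensions and only polylogarithmic in $H$. The delicate points are (i) the perturbation analysis of the DARE solutions $P(\Theta)$ and $\Sigma(\Theta)$ --- one needs a quantitative statement that on the stable region carved out by Assumptions~\ref{assump: stablility_control} and \ref{assump: stablility_kalman}, these are Lipschitz with constant controlled by $\gamma_1, \gamma_2, \kappa_2, N_S$ (this is standard, e.g.\ via the fixed-point / implicit-function characterization of the DARE and the contraction bounds $\|(A-BK)^k\|_2 \le (1-\gamma_1)^k$, $\|(A-FC)^k\|_2 \le \kappa_2(1-\gamma_2)^k$ already invoked in Appendix~\ref{sec: append_notation}), and (ii) ensuring the factor $l = O(\log(Hn+Hp))$ entering the degree of the polynomial in \eqref{equ: definition_feature_theta} contributes only polylogarithmically, which holds because $\|(A-LCA)^{s-1}\|_2 \le \kappa_2(1-\gamma_2)^{s-1}$ is uniformly bounded so the perturbation of the $s$-th term is $O(s \cdot \kappa_2^2 (1-\gamma_2)^{s-2})$ and the sum over $s \le l$ telescopes to an $H$-independent constant. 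Everything else is bookkeeping of the quadratic expansion \eqref{equ: expansion_f_start}--\eqref{equ: expansion_f_end} against the bounded domain $\gX$.
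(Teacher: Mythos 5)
Your proposal matches the paper's proof in essence: both cover the parameter set $\set$ (dimension $n^2+nm+np$) by a volumetric $\epsilon_0$-net and transfer it to $\gF$ via a Lipschitz bound $\|f_{\Theta}-f_{\bar\Theta}\|_\infty \le C_f\epsilon_0$ obtained from perturbation bounds on the Kalman quantities ($\Sigma(\Theta), L(\Theta)$, via Lemma 3.1 of Lale et al.\ in the paper) together with the telescoping bound on $(A-LCA)^{s-1}$ for $s\le l$, then set $\epsilon_0 \propto 1/(C_f H)$. One cosmetic remark: $P(\Theta)$ never enters $f_\Theta$ (only $\tilde P = P(\tilde\Theta)$, which is part of the input $E$), so the DARE perturbation for $P$ you flag as delicate is actually unnecessary, but this does not affect correctness.
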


\begin{proof}%[Proof of Lemma \ref{lem: clip_error}]
Under Assumption \ref{assump: stablility_control}, the spectral norm of the system dynamics of $\Theta = (A, B, C) \in \set$ is bounded by $\|A\|_2, \|B\|_2, \|C\|_2 \leq N_S$. Therefore, we can construct an $\epsilon_0$-net $\gG_{\epsilon_0}(\set)$ such that for any $\Theta = (A, B, C)$, there exists $\bar{\Theta} = (\bar{A}, \bar{B}, \bar{C}) \in \set$ satisfying $\max(\|A - \bar{A}\|_2, \|B - \bar{B}\|_2, \|C - \bar{C}\|_2) \leq \epsilon_0$. By classic theory, we know $|\gG_{\epsilon_0}(\set)| \leq O((1 + 2\sqrt{n}N_S / \epsilon_0)^{n^2 + nm + np})$. To see this, observe that $\|\cdot\|_2 \leq \|\cdot\|_F \leq \sqrt{\min(n,m)} \|\cdot\|_2$ for any $n \times m$ matrix, so we can reduce the $\epsilon$-cover with respect to the $l_2$-norm to the $\epsilon$-cover with respect to the Frobenius norm. To show the covering number of $\gF$, we check the gap $\|f_{\Theta} - f_{\bar{\Theta}}\|_{\infty}$.

By Lemma 3.1 of \citet{lale2020regret}, we know that for small enough $\epsilon_0$ there exists instance dependent constants $C_{\Sigma}$ and $C_L$ such that 
\begin{align}
\left\|\bar{\Sigma} - \Sigma\right\|_2  \leq C_{\Sigma} \epsilon_0, \quad 
\left\|\bar{L} - L\right\|_2  \leq C_{L} \epsilon_0,
\end{align}
where $\bar{\Sigma} = \Sigma(\bar{\Theta}), \bar{L} = L(\bar{\Theta})$. The constant $C_{\Sigma}$ is slightly different from Lemma 3.1 of their paper, because we do not assume $M \defeq A - ALC$ is contractible (i.e., $\|M\|_2 < 1$). Rather we know $M$ is $(\kappa_2, \gamma_2)$-strongly stable. Therefore, the linear mapping $\gT \defeq X \to X - MXM^\top $ is invertible with $\|\gT^{-1}\|_2 \leq \kappa_2 / (2\gamma_2 - \gamma_2^2)$ according to Lemma B.4 of \citet{simchowitz2020naive}. As a result, we can set $C_{\Sigma}$ by replacing the $1 / (1 - \nu^2)$ term in the original constant of \citet[][Lemma 3.1]{lale2020regret} by $\kappa_2 / (2\gamma_2 - \gamma_2^2)$.

For any $E = (\tau^l, x, u, \tilde{\Theta}) \in \gX$, it is desired to compute the difference $|f_{\Theta}(E) - f_{\bar{\Theta}}(E)|$. As a first step, we show the difference between $\hat{x}_{t|t,\Theta}^c$ and $\hat{x}_{t|t,\bar{\Theta}}^c$. Note that for any $1 \leq s \leq l$, 
\begin{align}
\left\|(A - LCA)^{s-1} - (\bar{A} - \bar{L} \bar{C} \bar{A})^{s-2}\right\|_2 \leq (s - 1) \kappa_2^2 (1 - \gamma_2)^{s-1} (1 + N_S^2 + 2N_LN_S) \epsilon_0.
\end{align}
This is because
\begin{align}
\label{equ: decomposition_a^s_b^s}
\gA^{s-1} - \bar{\gA}^{s-1} = \sum_{o = 0}^{s-2} \gA^{s - 1 - o} \bar{\gA}^o - \gA^{s - 2 - o} \bar{\gA}^{o + 1} = \sum_{o = 0}^{s-2} \gA^{s - 2 - o} \left(\gA - \bar{\gA}\right) \bar{\gA}^o
\end{align}
for $\gA \defeq A - LCA, \bar{\gA} \defeq \bar{A} - \bar{L}\bar{C}\bar{A}$, where 
$$ \left\|\gA - \bar{\gA}\right\|_2 \leq  (1 + N_S^2 + 2N_LN_S) \epsilon_0, \quad \|\gA\|_2, \|\bar{\gA}\|_2 \leq \kappa_2 (1 - \gamma_2). $$

Similarly we can bound $\|(B - LCB) - (\bar{B} - \bar{L} \bar{C} \bar{B})\|_2$. Follow the decomposition rule in Eqn. (\ref{equ: decomposition_x_c}) and observe that $\|u_{t-s}\|_2 \leq \bar{U}, \|y_{t-s+1}\|_2 \leq \bar{Y}$ for $1 \leq s \leq l$, we have
\begin{align}
\left\|\hat{x}_{t|t,\Theta}^c - \hat{x}_{t|t,\bar{\Theta}}^c\right\|_2 \leq C_x \epsilon_0,
\end{align}
where $C_x$ is a instance dependent constant (also depends on $C_{\Sigma}, C_L$).

By Eqn. (\ref{equ: Eluder_decomposition_f1}) and (\ref{equ: Eluder_decomposition_f2}), it holds that
\begin{align}
f_{\Theta}(E) - f_{\bar{\Theta}}(E) = \tr{\Phi(E) \left(C\Sigma C^\top - \bar{C} \bar{\Sigma} \bar{C}^\top\right)} + \mathrm{Diff}_1 + \mathrm{Diff}_2, 
\end{align}
where 
\begin{align}
\mathrm{Diff}_1 &  = \left(CA\hat{x}_{t|t,\Theta}^c + CBu\right)^\top Q \left(CA\hat{x}_{t|t,\Theta}^c + CBu\right) \\ 
& \qquad - \left(\bar{C}\bar{A}\hat{x}_{t|t,\bar{\Theta}}^c + \bar{C}\bar{B}u\right)^\top Q \left(\bar{C}\bar{A}\hat{x}_{t|t,\bar{\Theta}}^c + \bar{C}\bar{B}u\right) \\
\mathrm{Diff}_2 & = \left(\tilde{L}CA\hat{x}_{t|t,\Theta}^c + \tilde{L}CBu\right)^\top \zeta(E) \left(\tilde{L}CA\hat{x}_{t|t,\Theta}^c + \tilde{L}CBu\right) \\
& \qquad - \left(\tilde{L}\bar{C}\bar{C}\hat{x}_{t|t,\bar{\Theta}}^c + \tilde{L}\bar{C}\bar{B}u\right)^\top \zeta(E) \left(\tilde{L}\bar{C}\bar{C}\hat{x}_{t|t,\bar{\Theta}}^c + \tilde{L}\bar{C}\bar{B}u\right).
\end{align}
Checking each term in $\mathrm{Diff}_1$ and $\mathrm{Diff}_2$, we conclude that there exists instance dependent constant $C_{d1}, C_{d2}$ such that 
$$ \left|\mathrm{Diff}_1\right| \leq C_{d1} \epsilon_0, \quad \left|\mathrm{Diff}_2\right| \leq C_{d2} \epsilon_0. $$
Note that $\|C\Sigma C^\top - \bar{C} \bar{\Sigma} \bar{C}^\top\|_2 \leq (2N_S N_\Sigma + N_S^2) \epsilon_0$ and $\|\Phi(E)\|_2 \leq N_L^2(N_P + N_S^2 N_U) + N_U$, we have
\begin{align}
\left|f_{\Theta}(E) - f_{\bar{\Theta}}(E)\right| = \left|\tr{\Phi(E) \left(C\Sigma C^\top - \bar{C} \bar{\Sigma} \bar{C}^\top\right)} + \mathrm{Diff}_1 + \mathrm{Diff}_2\right| \leq C_f \epsilon_0,
\end{align}
where
$$ C_f \defeq p\|\Phi(E)\|_2 (2N_S N_\Sigma + N_S^2) \epsilon_0 +  C_{d1} + C_{d2}. $$
This means
$$ \left\|f_{\Theta} - f_{\bar{\Theta}}\right\|_{\infty} \leq C_f \epsilon_0. $$
Therefore, the subset $\{f_{\bar{\Theta}} \mid \bar{\Theta} \in \gG_{\epsilon_0}(\set)\}$ forms a $C_f \epsilon_0$-covering set of $\gF$. Finally, if suffices to set $\epsilon_0 = C_f^{-1} H^{-1}$ to induce a $1/H$-covering set of $\gF$:
\begin{align}
\gN\left(\gF, 1/H\right) \leq \left|\gG_{C_f^{-1}H^{-1}}(\set)\right| = O\left(\left(1 + 2\sqrt{n}N_SC_fH\right)^{n^2 + nm + np}\right),
\end{align}
which finishes the proof.
\end{proof}

\subsection{Low-Rank Simulator Class} \label{appendix:complexity:2}

In many sim-to-real tasks, there are only a few control parameters that affect the dynamics in the simulator class $\set$ (see e.g., Table 1 of \citet{openai2018learning}). The number of control parameters is often a constant that is independent of the dimension of the task. This means our simulator class $\set$ is a \textit{low-rank} class in these scenarios:
$$ \set = \left\{\Theta(t_1, t_2, ..., t_k) \mid (t_1, t_2, ..., t_k) \in T\right\}, $$
where $t_1, t_2, ..., t_k$ represent $k$ control parameters, $T$ is the domain of control parameters. The dynamics of simulator $\Theta$ \textit{only} depends on these control parameters $t_1, t_2, ..., t_k$. 

It is a common situation that $\Theta(t_1, t_2, ..., t_k)$ is a continuous function of $(t_1, ..., t_k)$. For example, it is a continuous function when the control parameters are physical parameters like friction coefficients, damping coefficients. A simple example is when $\Theta$ is a linear combination of $k$ base simulators:
\begin{align}
\label{equ: low_rank_simulator_class}
\Theta = (A, B, C) = \sum_{i=1}^k t_i \Theta_i,
\end{align}
where $\Theta_i = (A_i, B_i, C_i)$ is a fixed base simulator. This can be achieved if we approximate the effect of control parameters with some linear mappings on the states $x_t$. We can set different values of control parameters $t_1, ..., t_k$ to generate new simulators.

In such a continuous low rank class, the log covering number $\gN(\gF, 1/H)$ will reduce to only $\tilde{O}(k)$ as long as $\Theta$ is continuous with respect to the control parameters. It is straightforward to see this by checking the proof of Proposition \ref{prop: covering_number}. Unfortunately, it is unclear whether the Eluder dimension of such a continuous low-rank class will be smaller due to the existence of the Kalman gain matrix $L$. Overall, the intrinsic model complexity $\delta_{\set}$ will decrease from $\tilde{O}(np^2(n+m+p)(n+p)^2(m+p)^2)$ to at most 
$$\tilde{O}(kp^2(m+p)^2(n+p)^2)$$
for a small constant $k$.

\subsection{Low-Rank Simulator Class with Real Data} \label{appendix:complexity:3}

Many works studying sim-to-real transfer also fine-tune the simulated policy with a few real data on the real-world model \citep{tobin2017domain, rusu2017sim}. Moreover, the observations are usually the perturbed input states with observation noises introduced by vision inputs (e.g., from the cameras) \citep{tobin2017domain, peng2018sim, openai2018learning}. This means the observation $y_t$ equals $x_t + z_t$ for random noise $z_t$, without any linear transformation $C$. With these real data and noisy observations of the state, we can estimate the steady-state covariance matrix $\Sigma(\Theta^\star)$ with random actions since it is independent of the policy! The intrinsic complexity of the simulator set will be further reduced for this low-rank simulator set (e.g., the simulator set defined by Eqn. (\ref{equ: low_rank_simulator_class})) combined with real data.

In such low rank simulator classes, we know that $\Sigma(\Theta^\star)$ is fixed as an estimator $\hat{\Sigma}$ computed with noisy observations $y_t = x_t + z_t$. By definition we know $L(\Theta^\star)$ is also fixed as $\hat{L} = \hat{\Sigma}(\hat{\Sigma} + I)^{-1}$. 
Therefore, the Kalman dynamics matrix $A - LCA = A - \hat{L}A$ belongs to a $k$-dimensional space as $A$ lies in a $k$ dimensional space. As a result, the dimension of feature mappings $\varphi_u^s(t_1, t_2, ..., t_k), \varphi_y^s(t_1, t_2, ..., t_k)$ (Eqn. (\ref{equ: definition_feature_theta})) reduces to $O(k^2 l^k)$. The Eluder dimension of this low rank simulator class reduces to $\tilde{O}(k^4 l^{2k}) = \tilde{O}(k^4 \log^{2k}(H))$ for a small constant $k$.

Therefore, the intrinsic model complexity $\delta_{\set}$ for a low-rank simulator class fine-tuned with real data is 
$$\tilde{O}((n+p)^2 k^5 \log^{2k}(H)),$$
which implies that the robust adversarial training algorithms are very powerful with a small sim-to-real gap in such simulator classes.

\section{Proof of Theorem \ref{thm:lb}} \label{appendix:pf:lb}

\begin{proof}[Proof of Theorem \ref{thm:lb}]
    \iffalse
    By the definition of $\pi_{\mathrm{RT}}$ in Definition \ref{oracle:rt}, we assume that
    \# \label{eq:221}
    (\pi_{\mathrm{RT}}, \Theta_{\mathrm{RT}}) \leftarrow \argmin_{\pi}\argmax_{\Theta \in \set} [V^{\pi}(\Theta) - V^*(\Theta)].
    \#
    Moreover, by the reduction lemma or the property of the hard instance, we have 
    \# \label{eq:222}
    \min_{\pi} \max_{\Theta \in \set} [V^{\pi}(\Theta) - V^*(\Theta)] \ge \min_{\pi} \max_{\Theta \in \set} \mathrm{Regret}(H, \Theta) - D.
    \#
    \han{maybe we only we need the lower bound for finite horizon LQG, $\Omega(\sqrt{H})$ lower bound seems trivial since we can treat LQR as $H$ step bandits?}
    Since we can treat a history-dependent policy as an algorithm, the lower bound in \citet{ziemann2022regret} implies that
    \# \label{eq:223}
    \min_{\pi}\max_{\Theta \in \set} \mathrm{Regret}(H, \Theta)  \ge \Omega(\sqrt{H}).
    \#
    Combining \eqref{eq:221}, \eqref{eq:222}, and \eqref{eq:223} and choosing $\Theta^\star = \Theta_{\mathrm{RT}}$, we have 
    \$
    \mathrm{Gap}(\pi_{\mathrm{RT}}) \ge \Omega(\sqrt{H}),
    \$
    \fi
    Throughout this proof, we assume $H$ is sufficiently large and omit some (instance-dependent) constants because we focus on the dependency of $H$. Note that LQR is a special case of LQG in the sense that the learner can observe the hidden state in LQR, it suffices to consider the case where $\set$ is a set of LQRs. Since $C = I$ in LQRs, Assumption \ref{assump: stablility_kalman} holds naturally. Let $(A^\star, B^\star)$ be the parameters satisfying Assumptions \ref{assump: stability}, and \ref{assump: stablility_control}, and $\set = \{(A, B): \|A - A^\star\|_\infty \le \epsilon, \|B - B^\star\|_\infty \le \epsilon \}$. Choosing $\epsilon \propto H^{-1/2}$ where $H$ is sufficiently large, by the same perturbation analysis in \citet[][Appendix B]{simchowitz2020naive}, we have that all $\Theta \in \set$ satisfy Assumptions~\ref{assump: stability} and \ref{assump: stablility_control}. Fix policy $\hat{\pi}$, we have 
    \# \label{eq:223}
    \max_{\Theta \in \set} [V^{\hat{\pi}}(\Theta) - V^*(\Theta)] &\ge \min_{\pi}\max_{\Theta \in \set} [V^{\pi}(\Theta) - V^*(\Theta)] .
    \# 
    Since we can treat a history-dependent policy as an algorithm, the lower bound in \citet[][Corollary 1]{simchowitz2020naive} implies that
    \# \label{eq:224}
    \min_{\pi}\max_{\Theta \in \set} [V^{\pi}(\Theta) - V^*(\Theta)] \ge \Omega(\sqrt{H}).
    \#
    Notably, \citet{simchowitz2020naive} only imposes the strongly stable assumption, which is weaker than our assumption, but their proof (Lemma B.7 in \citet{simchowitz2020naive}) still holds under our assumptions.
    Combining \eqref{eq:223} and~\eqref{eq:224}, we conclude the proof of Theorem \ref{thm:lb}.
\end{proof}

\end{document}